\documentclass{article}

\PassOptionsToPackage{numbers,compress}{natbib}
\usepackage[preprint]{neurips_2026}

\usepackage{microtype}
\usepackage{graphicx}
\usepackage{subcaption}
\usepackage{booktabs} 

\usepackage{hyperref}
\usepackage{comment}
\usepackage{enumitem}

\usepackage[linesnumbered,ruled,vlined]{algorithm2e}

\usepackage{amsmath}
\usepackage{amssymb}
\usepackage{mathtools}
\usepackage{amsthm}

\usepackage{multirow}
\usepackage{float}

\usepackage[capitalize,noabbrev]{cleveref}
\usepackage{minitoc}
\doparttoc
\AtBeginDocument{%
  \setcounter{tocdepth}{2}%
  \setcounter{parttocdepth}{2}%
}
\newcommand{\appendixpart}{%
  \refstepcounter{part}%
  \adjustptc%
  \addcontentsline{toc}{part}{\protect\numberline{\thepart}Appendix}%
}

\theoremstyle{plain}
\newtheorem{theorem}{Theorem}[section]

\newtheorem{lemma}[theorem]{Lemma}

\theoremstyle{definition}

\theoremstyle{remark}

\usepackage[textsize=tiny]{todonotes}

\usepackage{xcolor}

\title{Federated Concept-Based Models: \\ Interpretable models with distributed supervision}

\author{%
  Dario Fenoglio\thanks{Equal contribution.} \\
  Universit\`a della Svizzera italiana\\
  Lugano, Switzerland\\
  \texttt{dario.fenoglio@usi.ch} \\
  \And
  Arianna Casanova\footnotemark[1] \\
  University of Liechtenstein\\
  Vaduz, Liechtenstein\\
  \texttt{arianna.casanova@uni.li} \\
  \And
  Francesco De Santis \\
  Politecnico di Torino\\
  Turin, Italy\\
  \texttt{francesco.desantis@polito.it} \\
  \And
  Gabriele Dominici \\
  Universit\`a della Svizzera italiana\\
  Lugano, Switzerland\\
  \texttt{gabriele.dominici@usi.ch} \\
  \And
  Johannes Schneider \\
  University of Liechtenstein\\
  Vaduz, Liechtenstein\\
  \texttt{johannes.schneider@uni.li} \\
  \And
  Pietro Barbiero \\
  IBM Research\\
  Zurich, Switzerland\\
  \texttt{pietro.barbiero@ibm.com} \\
  \And
  Giovanni De Felice \\
  Universit\`a della Svizzera italiana\\
  Lugano, Switzerland\\
  \texttt{giovanni.de.felice@usi.ch} \\
  \And
  Marc Langheinrich \\
  Universit\`a della Svizzera italiana\\
  Lugano, Switzerland\\
  \texttt{marc.langheinrich@usi.ch} \\
  \And
  Martin Gjoreski \\
  Universit\`a della Svizzera italiana\\
  Lugano, Switzerland\\
  \texttt{martin.gjoreski@usi.ch} \\
}

\begin{document}
\maketitle
\faketableofcontents






\begin{abstract}
Concept-based Models (CMs) enhance interpretability in deep learning by grounding predictions in human-understandable concepts.
However, concept annotations are costly and rarely available at scale within a single data source.
Federated Learning (FL) could alleviate this limitation by enabling cross-institutional training over concept annotations distributed across multiple data owners. Yet, FL lacks interpretable modeling paradigms. Integrating CMs with FL is non-trivial: although FL supports heterogeneous and non-stationary client participation, it typically assumes a fixed shared architecture, whereas CMs may require architectural adaptation as the available concept set evolves.
We propose \emph{Federated Concept-based Models} (F-CMs), a new methodology for deploying CMs in evolving FL settings. F-CMs aggregate concept-level information across institutions and efficiently adapt the model architecture to changes in concept supervision while preserving privacy. 
Empirically, F-CMs maintain accuracy and intervention effectiveness comparable to training settings with full concept supervision,
while outperforming on average non-adaptive federated baselines. Notably,
F-CMs enable interpretable inference on concepts unavailable to a given institution, a key novelty over existing approaches.
\end{abstract}

\section{Introduction}
In many real-world applications, the deployment of machine learning systems is subject to constraints beyond predictive accuracy: practitioners need models whose predictions are interpretable and auditable, providing tools for error diagnosis, fairness assessment, and compliance with legal standards
~\cite{kaminski2021, li2022, dhar2023}. Among existing approaches, \emph{Concept-based Models} (CMs) offer an established framework in which predictions are formulated in terms of high-level, human-interpretable variables, referred to as \emph{concepts}~\cite{ganter1999formal, koh2020, espinosa2022}. In practice, however, their applicability is constrained by the need for explicit concept supervision during training~\citep{debole2025if,poeta2023}. Concept annotations are often expensive to obtain, time-consuming, and rarely available at scale within a single data source. 

A natural way to alleviate this limitation is to pool concept knowledge across multiple data owners, such as hospitals, research centers, or devices \cite{litjens1399HE2018, kaissisSecureFL2020}. \emph{Federated Learning} (FL) provides a principled paradigm for collaborative training without sharing raw data, making it particularly appealing in privacy-sensitive settings where concept annotations themselves may encode sensitive information \cite{mcmahanFL2023a}. However, combining CMs with FL creates a fundamental mismatch: 
FL is designed for dynamic settings in which clients may join or leave and client data distributions may differ, while still relying on a shared model architecture with a fixed parameterization. CMs challenge this assumption, as their architecture is explicitly tied to the available concept set and, often, to the dependencies among concepts. As a result, changes that FL can typically absorb at the level of client participation or data distribution may require architectural adaptation in CMs. 
This issue becomes especially pronounced in realistic FL deployments, where several sources of variation may exist \cite{jothimurugesanFedDrift2023, jarczewskiLetsF2024, fenoglioFLUX2025, liFLProfile2026}.

To bridge this gap, we introduce \emph{Federated Concept-based Models (F-CMs)}, a new methodology for deploying concept-based architectures in realistic federated settings. 
Crucially, F-CMs handle \emph{statistical heterogeneity} and \emph{temporal non-stationarity}: as new clients join (potentially from new data distributions), they may introduce previously unseen concepts and novel dependencies between them. To accommodate this, F-CMs expand the shared concept space and, when available, update the concept dependency structure. Furthermore, the architecture is adapted by updating only the affected components, thereby avoiding full retraining while preserving previously learned knowledge. To the best of our knowledge, F-CMs are the first to address evolving concept spaces under temporal non-stationarity (dynamic participation and data drift) in FL by adapting the model structure, instead of assuming a fixed shared model. Our key contributions benefit both communities and are as follows: 
\begin{itemize}[noitemsep, topsep=1pt, parsep=1pt, partopsep=1pt, leftmargin=3em]
    \item We introduce \textbf{F-CMs}, a novel methodology for deploying \textbf{concept-based architectures in realistic federated settings}, addressing heterogeneous, partial concept supervision and temporal non-stationarity.
    \begin{itemize}
        \item For evolving FL settings, it provides a systematic way to incorporate architectural \textbf{interpretability} while allowing the shared model structure to adapt during training, moving beyond the fixed-architecture assumption of standard FL.
        \item For interpretability, it \textbf{reduces the cost} for concept annotations at a single institution by enabling privacy-preserving knowledge sharing.
    \end{itemize}
    \item We instantiate F-CMs with four CM architectures and validate them on synthetic benchmarks and two real-world medical imaging datasets. Results show \textbf{comparable predictive and intervention accuracy} w.r.t. full concept supervision, enable \textbf{interpretable inference over concepts missing at individual clients}, and \textbf{reduce retraining cost} under federation growth.
\end{itemize}
With this work, we hope to contribute toward the broader goal of scaling interpretable models while remaining grounded in real-world constraints and deployment scenarios.

\section{Preliminaries}
\subsection{Concept-based Models (CMs)}\label{sec:CMs_background}
CMs predict a task through human-interpretable concepts. In what follows, we consider two main families of CMs, which we refer to as \emph{bipartite CMs} and \emph{graph-based CMs}.

\textbf{Bipartite CMs}, exemplified by ~\citep{koh2020, espinosa2022}, map an input 
$x \in \mathcal{X}$ to a latent representation $z = g(x) \in \mathcal{Z}$, where $z$ is a learned, generally non-interpretable representation of the input. This representation is used to predict a set of $m$ concepts as $\hat{c}_j = h_j(z)$, where each $\hat{c}_j$ may be scalar- or vector-valued. The full concept vector is then given by $\hat{\textbf{c}} = (\hat{c}_1, \dots, \hat{c}_m) \in \mathcal{C}$.
The task prediction is computed from the concepts and, possibly, the latent input representation $\hat{y} = f(\hat{\textbf{c}}, z) \in \mathcal{Y}$. This model induces a \emph{directed acyclic graph} (DAG) $\mathcal{G} = (\mathcal{V}, \mathcal{E})$~\cite{pearl1995causal}  over interpretable variables $\mathcal{V} = \{C_1, \dots, C_m, Y\}$, where edges $\mathcal{E}$ are directed only from concepts to the task, i.e., $C_j \to Y$ for all $j$.

\textbf{Graph-based CMs}, exemplified by~\citep{dominici2024,de2025},  instead allow for dependencies among concepts. In this case, concept and task follow a more general DAG $\mathcal{G}$, where each concept may depend on a set of parent concepts. Concretely, concept and task predictions can be written as
$\hat{c}_j = h_j(\hat{\textbf{c}}_{\mathrm{PA}(C_j)}, z)$ and $\hat{y} = f(\hat{\textbf{c}}_{\mathrm{PA}(Y)}, z)$,
where $\mathrm{PA}$ denotes the parent set induced by $\mathcal{G}$. 
Bipartite CMs can be recovered as a special case of graph-based CMs with only concept-to-task edges. 

\subsection{Federated Learning (FL)} \label{sec:fl_background}
FL trains a shared model across multiple data owners (\emph{clients}) without sharing raw data~\cite{mcmahanFL2023a}. Let $m_{\mathbf{w}}\!:\!\mathcal{X}\to\mathcal{Y}$ denote the shared model with parameters $\mathbf{w}$. The client population may vary over time, denoted by $\mathcal{K}(t)$; at round $t$, the server selects a subset $\mathcal{K}_t\!\subseteq\! \mathcal{K}(t)$ for participation~\cite{jarczewskiLetsF2024, deolivDynamicPart2025}. Each client $k\!\in\!\mathcal{K}(t)$ holds a private dataset $\mathcal{D}^{(k)}\!=\!\{(x_i^{(k)},y_i^{(k)})\}_{i=1}^{n^{(k)}}$. In standard FL, the goal is to minimize a weighted global objective $F(\mathbf{w})\!=\!\sum_{k=1}^K p^{(k)}F^{(k)}(\mathbf{w})$, where $F^{(k)}(\mathbf{w})\!=\!\mathbb{E}_{(x,y)\sim\mathcal{D}^{(k)}}[\ell(m_{\mathbf w}(x),y)]$ is the local objective of client $k$ and $p^{(k)}$ typically weighs clients proportionally to their data size. At round $t$, the server broadcasts the current weights $\mathbf{w}_t$ to clients in $\mathcal{K}_t$, each selected client performs local optimization on $\mathcal{D}^{(k)}$, and the server aggregates the returned updates to obtain $\mathbf{w}_{t+1}^{(k)}$, typically via \emph{FedAvg} \cite{mcmahanFL2023a}, i.e., by weighted averaging proportional to local dataset sizes.

\textbf{Statistical heterogeneity and temporal non-stationarity.}
Practical FL deployments are rarely stationary \cite{xiangNonStationary2024}. First, FL is typically \emph{statistically heterogeneous}: different clients observe different data distributions, i.e., $\exists\,k\neq k' \text{ s.t. } \mathbb{P}^{(k)}(X,Y)\neq \mathbb{P}^{(k')}(X,Y)$, so their local objectives $F^{(k)}$ differ \cite{fenoglioFLUX2025, kairouzAdvancesOpen2021}. Second, FL can be \emph{temporally non-stationary}, meaning that the data and/or the set of participating clients changes over rounds. This commonly arises through (i) \emph{dynamic client participation}, where the available client pool $\mathcal{K}(t)$ and sampled participants $\mathcal{K}_t$ vary over time as clients join or leave \cite{deolivDynamicPart2025}; and (ii) \emph{data drift}, where even for a fixed client $k$, its distribution changes over time, i.e., $\mathbb{P}^{(k)}_t(X,Y)\neq \mathbb{P}^{(k)}_{t'}(X,Y)$ for $t\neq t'$ \cite{liFLProfile2026, panchalFlash2023}. Both effects induce a time-varying optimization landscape and can destabilize global updates.

These conditions violate key assumptions of standard CMs. Usually, standard CMs assume a concept set that determines the architecture. In FL settings, however, concept annotations would be distributed across clients and may evolve with clients or data, so concepts and their dependencies can change over time. Consequently, a federated CM must (i) reconcile heterogeneous, partial concept supervision across clients, and (ii) dynamically update concepts, their dependencies and corresponding architecture during training, rather than relying on a fixed design upfront.

\subsection{Problem Setting}
\label{sec:problem_formulation} 
We study FL of CMs under statistical heterogeneity and temporal non-stationarity. 
Training proceeds over rounds $t\!=\!1,\dots,T$, where a subset of clients $\mathcal{K}_t$ participates. Client $k$ holds a local dataset
$\mathcal D^{(k)}\!=\!
\{
(x_i^{(k)}, \mathbf{v}_i^{(k)})
\}_{i=1}^{n^{(k)}}$,
where $x_i^{(k)}\!\in\!\mathcal X$ is the input  and $\mathbf{v}_i^{(k)}$ contains the available supervision for sample $i$, i.e.,  concept annotations and/or task label. 
We denote by $\mathcal V^{(k)} \neq \emptyset$ the whole set of interpretable variables (concepts and task) supervised by client $k$. Each client \(k\) additionally provides a DAG $\mathcal{G}^{(k)}$, represented by a \emph{weighted adjacency matrix} $A^{(k)}\!\in\![0,1]^{|\mathcal{V}^{(k)}|\times|\mathcal{V}^{(k)}|}$\cite{barabasi2013}. 
Notably, for bipartite CMs, the structure is fixed a priori by allowing only concepts-to-task edges. 

Clients may dynamically join or leave the federation over time; when a client leaves, its raw data is discarded, while its contribution to the learned model is retained. Consequently, the global set of supervised variables evolves as
\(
\mathcal{V}_t \!= \!\bigcup_{\tau=1}^{t} \bigcup_{k \in \mathcal{K}_\tau} \mathcal{V}^{(k)}.
\)


The problem is to learn, at each round $t$, a shared CM $F_{\mathbf{w}_t}$ that minimizes the federated loss induced by the supervision available at each client, while adapting architecture as the global variable set $\mathcal{V}_t$ and its dependencies evolve. Each client contributes only losses for the variables it supervises, so the server must reconcile partial objectives without accessing raw data.

\section{Federated Concept-based Models}\label{sec:method}
We now formalize \textbf{Federated Concept-based Models (F-CMs)} as a federated training framework for bipartite and graph-based CMs under partial, heterogeneous concept supervision and temporal non-stationarity; Algorithm~\ref{alg:fcms} in App.~\ref{app:algo} summarizes the full procedure. As in standard FL, F-CMs maintain a single shared CM optimized through repeated rounds of client-side training and server-side aggregation. Beyond standard FL, F-CMs support an evolving concept space via three key extensions:
\begin{enumerate}[topsep=1pt,itemsep=1pt,parsep=0pt,partopsep=0pt,leftmargin=1.9em]
    \item \textbf{Graph aggregation}: 
    at each round t, the server aggregates client-provided structural information ($A^{(k)}$) into a shared DAG $\mathcal{G}_t$ encoding dependencies and defining the model architecture;
    \item \textbf{Dynamic architecture adaptation}: the shared CM is decomposed into \emph{concept-specific} and \emph{task-specific} modules, where each module denotes the component used to predict a given concept or the task label. This enables the interpretable architecture to expand and rewire modularly as new concepts or dependencies emerge. For instance, when a client introduces a new concept or a new concept dependency, F-CMs add or modify only the modules associated with the involved concepts, while leaving the remaining components unchanged.
    \item \textbf{Module-specific optimization}: each client updates only the modules associated with the interpretable variables it supervises, while modules for unobserved concepts or task labels remain frozen. Server-side aggregation is then performed separately for each module, using only the updates from clients that provided supervision for the corresponding variable.

\end{enumerate}
\begin{figure*}[t]
   \centering \includegraphics[width=\linewidth]{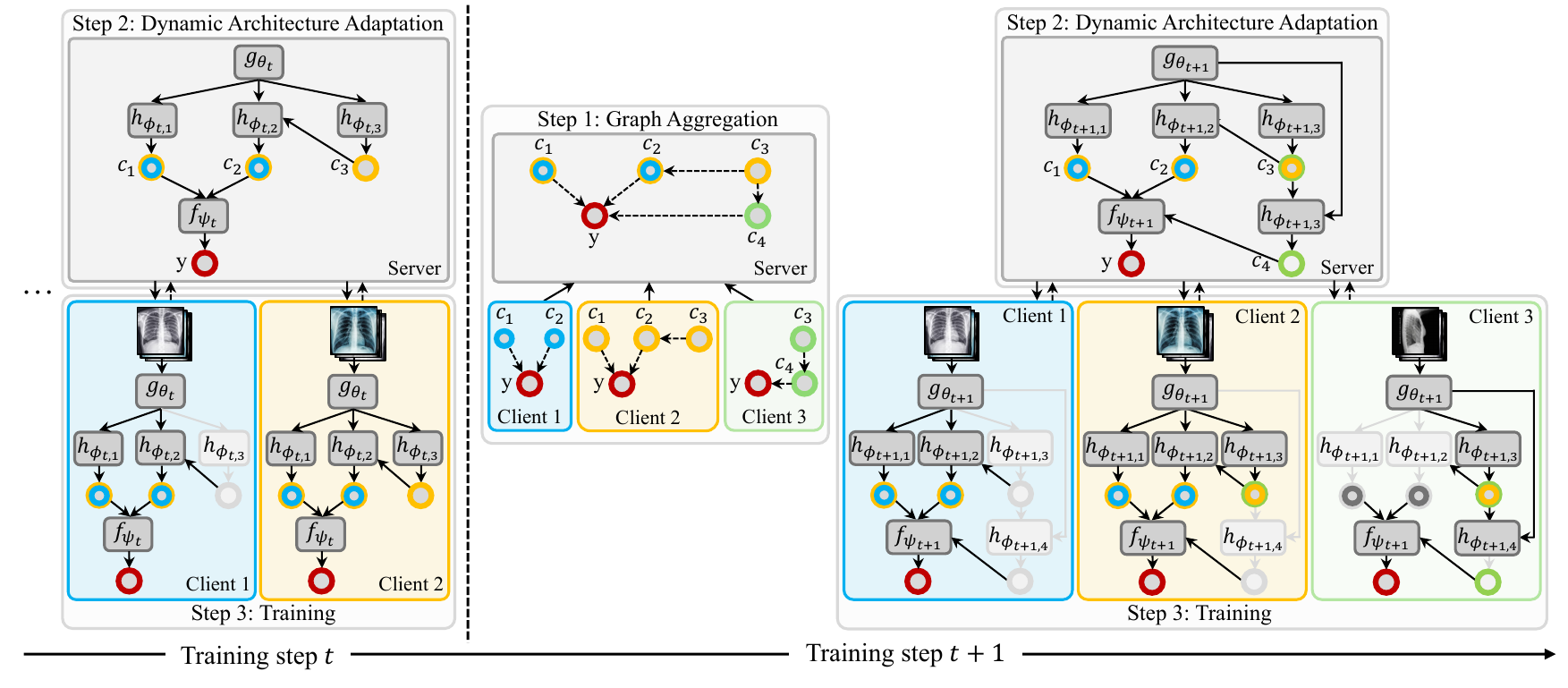}
   \captionsetup{skip=2pt} 
   \caption{\textbf{F-CMs overview.} At each round $t$, F-CMs performs three steps: (1) \emph{Graph aggregation:} the server combines client graphs into a shared graph $\mathcal{G}_t$; (2) \emph{Architecture adaptation:} the shared CM adapts to $\mathcal{G}_t$ and/or newly observed concepts; (3) \emph{Training:} clients update only supervised concept/task modules (dark grey), leaving others frozen (light grey), and the server aggregates the corresponding updates. The figure shows two rounds, where new clients introduce new concepts.}
   \vspace{-10pt}
   \label{fig:overview}
\end{figure*}

\vspace{-2pt}
\subsection{F-CM Shared Model}
\label{sec:architecture} \vspace{-2pt}
At round $t$, the server maintains a shared CM (bipartite or graph-based), with parameters 
$\mathbf{w}_t=(\theta_t,\phi_t,\psi_t)$, which maps inputs $x$ to predictions over variables $\mathcal{V}_t$ through the following modules:
\begin{itemize}[topsep=1pt,itemsep=1pt,parsep=0pt,partopsep=0pt,leftmargin=1.9em]
    \item a \emph{latent encoder} $g_{\theta_t}:\mathcal{X}\to\mathbb{R}^d$ producing $z=g_{\theta_t}(x)$;

    \item a collection of \emph{concept encoders} $\{h_{\phi_t,j}\}_j$
    predicting the value of concepts \(C_j \in \mathcal V_t \setminus \{Y\}\) as
    $\hat c_j = h_{\phi_t,j}\big(\hat{ \textbf{c}}_{\mathrm{PA}_t(C_j)}, z\big)$; 

    \item a \emph{task decoder} $f_{\psi_t}$ producing $\hat y = f_{\psi_t}\big(\hat {\textbf{c}}_{\mathrm{PA}_t(Y)}, z\big)$;
\end{itemize}
Here, $\mathrm{PA}_t$ denotes the parent set in a shared DAG 
$\mathcal{G}_t$ constructed over $\mathcal{V}_t$ by 
aggregating client-specific DAGs $\mathcal{G}^{(k)}$, as described in 
Section~\ref{sec:training}. 
In bipartite instantiations, $\mathcal{G}_t$ allows only concept-to-task edges, hence 
$\mathrm{PA}_t(C_j)=\emptyset$ for every concept $C_j$, and 
$\mathrm{PA}_t(Y)=\mathcal{V}_t \setminus \{Y\}$.  

\textbf{Instantiations. }F-CMs are agnostic to the specific CM architecture used as the shared model. An instantiation specifies the parameterization of modules $\{h_{\phi_t,j}\}_j$ and $f_{\psi_t}$.

\subsection{F-CM Pipeline}
\label{sec:training}
We next detail how F-CMs aggregate structure, adapt the shared architecture, and perform module-specific optimization under partial supervision. Figure~\ref{fig:overview} provides an overview.

\subsubsection{Graph Aggregation}
\label{sec:graph_agg}
A single client’s structural knowledge can be unreliable: local graphs may be noisy due to limited, non-IID data and may also be manipulated by compromised clients (i.e., poisoning)~\cite{sunDataPoisoning2022}. 
We therefore aggregate client-provided structures into a shared DAG $\mathcal{G}_t$ over the current global variable set $\mathcal{V}_t$, by favoring edges and directions \emph{with the strongest support across observed clients}.

Specifically, each client $k$ sends a representation of its local DAG $\mathcal{G}^{(k)}$ to the server only upon joining or when its local structure changes, encoded as a weighted adjacency matrix $A^{(k)}$ over its supervised variables $\mathcal{V}^{(k)}$. 
We interpret $A^{(k)}_{ij}$ as the client’s confidence in the directed edge $V_i\!\to\!V_j$. 
For each unordered pair $\{V_i,V_j\}\subseteq\mathcal{V}_t$ with $i\neq j$, we define a categorical distribution over the three possible outcomes
\(\{V_i \to V_j,\; V_j \to V_i,\; \varnothing\}\), with probabilities
$A^{(k)}_{ij}, A^{(k)}_{ji}, A^{(k)}_{\varnothing,ij}$, respectively. Here, $A^{(k)}_{\varnothing,ij}
\coloneqq
1 - A^{(k)}_{ij} - A^{(k)}_{ji}$
denotes the confidence assigned by client $k$ to the absence of an edge between $V_i$ and $V_j$. The server aggregates client confidences into a \emph{strength} matrix by summing contributions from all clients observed up to round $t$, denoted by $\mathcal{K}_{\le t}$, using the latest graph received from each client:
\[
\setlength{\abovedisplayskip}{3pt} 
\setlength{\belowdisplayskip}{3pt}
 [\bar{A}_t]_{ij} := \sum_{k\in\mathcal{K}_{\le t}:\, V_i,V_j\in \mathcal{V}^{(k)}} \alpha_k\, A^{(k)}_{ij},
\]
where $\alpha_k$ are aggregation weights (e.g., proportional to client dataset sizes, or uniform/reliability-aware), and $ [\bar{A}_t]_{ij}\!=\!0$ if the sum is empty. The same rule is applied to $[\bar {A}_t]_{\varnothing,ij}$. The shared graph $\mathcal{G}_t$ is then constructed by applying a maximum-consensus rule to each pair
\setlength{\abovedisplayskip}{5pt} 
\setlength{\belowdisplayskip}{5pt}
\[
(V_i\!\to\! V_j)\in \mathcal{G}_t
\;\Longleftrightarrow\;
 [\bar{A}_t]_{ij}
=\max\!\big\{ [\bar{A}_t]_{ij},[\bar{A}_t]_{ji},[\bar {A}_t]_{\varnothing,ij}\big\}.
\]
Ties are broken at random. 
Cycles are resolved iteratively (Sec.~\ref{parag:DAG_construction_cycle_res}).
\subsubsection{Dynamic Architecture Adaptation}
\label{sec:arch_update}
The shared DAG \(\mathcal G_t\) from the previous step defines the CM connectivity by
specifying predicted variables and dependencies. Under non-stationarity, these may change as new clients introduce unseen concepts or revised dependencies. 
Accordingly, F-CMs adapt the shared CM only when module interfaces change, while preserving all unaffected ones. At round $t$, this amounts to updating the concept encoders $\{h_{\phi_t,j}\}_{j}$ and $f_{\psi_t}$. 
The latent encoder $g_{\theta_t}$ remains unchanged. 

\textbf{(E1) Adding a new concept.}
For any newly observed concept $C_j$, the server instantiates a new module $h_{\phi_t,j\!}$. The task decoder is updated 
if $C_j\!\in\! \mathrm{PA}_t(Y)$, while other modules remain unchanged.

\textbf{(E2) Updating an edge (add/remove/re-orient).}
In graph-based F-CMs, changing the relation between two variables modifies parent sets and thus module inputs: adding $C_i\!\to\!C_j$ appends $\hat c_i$ to the inputs of $h_{\phi,j}$; removing $C_i\!\to\!C_j$ removes it; re-orienting $C_i\!\to\!C_j$ into $C_j\!\to\!C_i$ is treated as remove+add, so $h_{\phi,j}$ loses $\hat c_i$ while $h_{\phi,i}$ gains $\hat c_j$. The task decoder is updated analogously. In bipartite F-CMs, updates are constrained, disallowing concept-to-concept edges and re-orientations.

\textbf{Warm-start initialization.}
When the module input expands, only the newly introduced parameters are initialized, while existing parameters are preserved. For example, weights connected to new inputs may be zero-initialized to preserve pre-update behavior, or initialized randomly. Modules whose interfaces are unchanged retain their parameters exactly, avoiding full retraining.

\subsubsection{Module-Specific Training and Module-Wise Aggregation}
\label{sec:fed_training}
Because concept and task supervision are unevenly distributed across clients, vanilla FedAvg \cite{mcmahanFL2023a} would mix updates for modules that some clients cannot supervise. F-CMs therefore perform module-specific training and module-wise aggregation, restricting contributions to supervised components. This induces a masked federated optimization problem, where each client optimizes a client-specific partial objective over a subset of shared parameters. 

\textbf{Module-specific local optimization.}
At round \(t\), each client \(k \in \mathcal K_t\) receives the broadcasted shared CM
\(( \mathbf{w}_t,\mathcal G_t)\), where \(\mathbf{w}_t\) denotes the pre-training parameters obtained after
architecture adaptation, and performs $E$ local optimization steps using only its available supervision. The local objective is
\begin{equation}
\setlength{\abovedisplayskip}{3pt} 
\setlength{\belowdisplayskip}{3pt}
F^{(k)}(\mathbf{w}_t;\mathcal{G}_t)
=
\mathbb{E}_{(x,\mathbf{v})\sim\mathcal{D}^{(k)}}
\bigg[
\gamma
\sum_{C_j \in \mathcal{V}^{(k)} \setminus \{Y\}}
\ell_c\big(\hat{c}_j, c_j\big)
+
(1-\gamma)\,\mathbf{1}_{\{Y \in \mathcal{V}^{(k)}\}}\,
\ell_t\big(\hat{y}, y\big)
\bigg],
\label{eq:loss}
\end{equation}

where $\ell_c$ and $\ell_t$ are standard concept and task losses (e.g., cross-entropy), and $\gamma\in[0,1]$. Only supervised 
modules contribute gradients; unsupervised ones are frozen and 
need not be transmitted. 

\textbf{Module-wise aggregation.}
The server aggregates each module using only the clients that produced updates for that module. We use a FedAvg-like rule, 
although F-CMs can in principle support other aggregation rules. Let $\mathcal{K}^j_t\subseteq\mathcal{K}_t$ denote the clients updating concept module $h_{\phi_t,j}$, yielding local parameters $\phi^{(k)}_{{t+1},j}$. The aggregated parameters are
\[
\setlength{\abovedisplayskip}{3pt} 
\setlength{\belowdisplayskip}{3pt}
\phi_{t+1,j} = \sum_{k \in \mathcal{K}^j_t} \beta^{(k)}_j \, \phi^{(k)}_{t+1,j},
\qquad
\beta^{(k)}_j = \frac{n^{(k)}}{\sum_{r\in \mathcal{K}^j_t} n^{(r)}}.
\]
The task module is aggregated analogously over clients that updated it, while modules not updated by any client remain unchanged. Since all participating clients update the latent encoder, its parameters are aggregated with the same FedAvg-like rule over $\mathcal{K}_t$.
\paragraph{Theoretical analysis.}
F-CM training is non-standard: clients may have partial supervision, aggregation is module-wise, and the architecture may change. Appendix~\ref{app:convergence} provides a theoretical analysis for a static version of F-CM with fixed clients and concept supervision, allowing us to study, in isolation, the effects of partial supervision and module-wise aggregation on training. Specifically, we show that if a global objective exists under standard assumptions,
and the aggregated module-wise updates are sufficiently aligned with its gradient, then the
usual nonconvex stationarity behavior is recovered up to a bounded mismatch. Exact alignment
gives the standard \(O(T^{-1/2})\) rate. For F-CMs, this intuition applies piecewise over intervals with fixed clients and supervision.

\section{Experimental Evaluation} \label{sec:exp_evaluation}
\begin{table*}[t]
\caption{\textbf{Task accuracy} (\%) and \textbf{concept coverage} (\%) with respect to the ground-truth concepts causally relevant for the task, i.e., concepts with a directed path to the task in the ground-truth graph or the available proxy. Methods that improve the average performance are shown in bold. Results are reported as mean $\pm$ standard error over five runs.
}
\centering
\scriptsize
\setlength{\tabcolsep}{1.3pt}
\resizebox{\textwidth}{!}{%
\begin{tabular}{ll|cc|cc|cc|cc|cc|cc|cc}
\toprule
\textbf{Setting} & \textbf{Method} & 
\multicolumn{2}{c|}{\textbf{Asia}} & 
\multicolumn{2}{c|}{\textbf{Sachs}} & 
\multicolumn{2}{c|}{\textbf{Alarm}} & 
\multicolumn{2}{c|}
{\textbf{Insurance}} & 
\multicolumn{2}{c|}{\textbf{Hailfinder}} & 
\multicolumn{2}{c}{\textbf{SIIM-Pn.}} & 
\multicolumn{2}{c}{\textbf{CheXpert}} \\
\cmidrule(lr){3-4} \cmidrule(lr){5-6} \cmidrule(lr){7-8} \cmidrule(lr){9-10} \cmidrule(lr){11-12}
\cmidrule(lr){13-14} \cmidrule(lr){15-16}
& & T & C. Cov. & T & C. Cov. & T & C. Cov. & T & C. Cov. & T & C. Cov. & T & C. Cov. & T & C. Cov. \\
\midrule
\multirow{6}{*}{\shortstack{Cent.\\(Upper\\Bound)}}
 & OpaqNN    
 & 80.7{\tiny$\pm$1.0} & 100
 & 77.7{\tiny$\pm$1.0} & 100
 & 74.0{\tiny$\pm$0.3} & 100
 & 76.3{\tiny$\pm$1.1} & 100
 & 73.0{\tiny$\pm$0.8} & 100
 & 74.1{\tiny$\pm$0.6} & 100
 & 72.0{\tiny$\pm$0.3} & 100\\
 & CBM 
 & 80.0{\tiny$\pm$1.0} & 100
 & 77.3{\tiny$\pm$0.9} & 100
 & 73.9{\tiny$\pm$0.5} & 100
 & 76.3{\tiny$\pm$1.4} & 100
 & 74.2{\tiny$\pm$0.7} & 100
 & 73.9{\tiny$\pm$0.3} & 100
 & 70.7{\tiny$\pm$0.5} & 100\\
 & CEM       
 & 80.4{\tiny$\pm$1.1} & 100
 & 77.3{\tiny$\pm$0.8} & 100
 & 71.6{\tiny$\pm$0.4} & 100
 & 76.0{\tiny$\pm$1.2} & 100
 & 73.3{\tiny$\pm$0.6} & 100
 & 73.1{\tiny$\pm$0.7} & 100
 & 66.6{\tiny$\pm$0.7} & 100\\
 & CGM       
 & 79.8{\tiny$\pm$1.2} & 100
 & 76.2{\tiny$\pm$1.2} & 100
 & 71.5{\tiny$\pm$0.9} & 100
 & 73.4{\tiny$\pm$1.1} & 100
 & 72.2{\tiny$\pm$0.8} & 100
 & 68.8{\tiny$\pm$0.8} & 100
 & 71.6{\tiny$\pm$0.3} & 100\\
 & C$^2$BM   
 & 80.3{\tiny$\pm$1.0} & 100
 & 77.9{\tiny$\pm$1.0} & 100
 & 72.8{\tiny$\pm$0.5} & 100
 & 74.8{\tiny$\pm$0.9} & 100
 & 73.5{\tiny$\pm$0.6} & 100
 & 72.8{\tiny$\pm$0.1} & 100
 & 69.4{\tiny$\pm$0.7} & 100\\
\midrule
\multirow{6}{*}{Loc.}
 & OpaqNN      
 & 56.5{\tiny$\pm$6.7} & 46.7{\tiny$\pm$3.3}
 & 50.2{\tiny$\pm$9.4} & 52.0{\tiny$\pm$4.9}
 & 45.0{\tiny$\pm$11.1} & 67.0{\tiny$\pm$5.1}
 & 19.4{\tiny$\pm$6.2} & 72.6{\tiny$\pm$6.1}
 & 48.1{\tiny$\pm$7.9} & 63.7{\tiny$\pm$3.5}
 & 50.9{\tiny$\pm$8.6} & 59.4{\tiny$\pm$3.1}
 & 57.7{\tiny$\pm$4.7} & 31.4{\tiny$\pm$5.4} \\
 & CBM   
 & 47.7{\tiny$\pm$3.8} & 46.7{\tiny$\pm$3.3}
 & 50.1{\tiny$\pm$7.0} & 52.0{\tiny$\pm$4.9}
 & 51.7{\tiny$\pm$7.0} & 67.0{\tiny$\pm$5.1}
 & 30.9{\tiny$\pm$7.6} & 72.6{\tiny$\pm$6.1}
 & 46.6{\tiny$\pm$7.6} & 63.7{\tiny$\pm$3.5}
 & 44.2{\tiny$\pm$6.2} & 59.4{\tiny$\pm$3.1}
 & 57.7{\tiny$\pm$4.2} & 31.4{\tiny$\pm$5.4} \\
 & CEM         
 & 67.1{\tiny$\pm$7.4} & 46.7{\tiny$\pm$3.3}
 & 52.2{\tiny$\pm$8.8} & 52.0{\tiny$\pm$4.9}
 & 42.1{\tiny$\pm$11.1} & 67.0{\tiny$\pm$5.1}
 & 40.8{\tiny$\pm$5.3} & 72.6{\tiny$\pm$6.1}
 & 44.1{\tiny$\pm$7.1} & 63.7{\tiny$\pm$3.5}
 & 62.4{\tiny$\pm$1.4} & 59.4{\tiny$\pm$3.1}
 & 57.3{\tiny$\pm$4.7} & 31.4{\tiny$\pm$5.4} \\
 & CGM         
 & 64.7{\tiny$\pm$5.1} & 46.7{\tiny$\pm$3.3}
 & 37.0{\tiny$\pm$9.4} & 52.0{\tiny$\pm$4.9}
 & 46.1{\tiny$\pm$8.9} & 50.4{\tiny$\pm$6.5}
 & 31.7{\tiny$\pm$7.2} & 70.5{\tiny$\pm$7.4}
 & 44.1{\tiny$\pm$7.3} & 63.7{\tiny$\pm$3.5}
 & 56.6{\tiny$\pm$6.0} & 59.4{\tiny$\pm$3.1}
 & 60.9{\tiny$\pm$4.9} & 31.4{\tiny$\pm$5.4} \\
 & C$^2$BM        
 & 66.7{\tiny$\pm$7.1} & 46.7{\tiny$\pm$3.3}
 & 37.3{\tiny$\pm$13.8} & 52.0{\tiny$\pm$4.9}
 & 45.0{\tiny$\pm$8.9} & 50.4{\tiny$\pm$6.5}
 & 22.1{\tiny$\pm$12.1} & 70.5{\tiny$\pm$7.4}
 & 43.0{\tiny$\pm$5.6} & 63.7{\tiny$\pm$3.5}
 & 53.8{\tiny$\pm$6.2} & 59.4{\tiny$\pm$3.1}
 & 58.6{\tiny$\pm$4.2} & 31.4{\tiny$\pm$5.4} \\
 \midrule
\multirow{2}{*}{\shortstack{External\\Static FL}}
 & FCL 
 & 63.9{\tiny$\pm$6.4} & N/A
 & 59.5{\tiny$\pm$1.3} & N/A
 & 50.1{\tiny$\pm$5.9} & N/A
 & 62.3{\tiny$\pm$1.8} & N/A
 & 31.7{\tiny$\pm$2.6} & N/A
 & 56.5{\tiny$\pm$5.9} & N/A
 & 50.9{\tiny$\pm$0.9} & N/A \\ 
 & FedCBM
 & 68.0{\tiny$\pm$4.8} & 48.6{\tiny$\pm$5.7}
 & 48.7{\tiny$\pm$9.4} & 54.0{\tiny$\pm$5.1}
 & 54.8{\tiny$\pm$8.8} & 70.0{\tiny$\pm$4.1}
 & 59.7{\tiny$\pm$8.7} & 76.9{\tiny$\pm$2.4}
 & 51.1{\tiny$\pm$7.4} & 84.4{\tiny$\pm$2.9}
 & 54.6{\tiny$\pm$4.6} & 63.6{\tiny$\pm$3.7}
 & 66.1{\tiny$\pm$4.0} & 34.3{\tiny$\pm$3.5}\\ 
 \midrule
\multirow{6}{*}{S-F-CMs}
 & OpaqNN      
 & 74.0{\tiny$\pm$4.9} & 46.7{\tiny$\pm$3.3}
 & 55.2{\tiny$\pm$9.1} & 56.0{\tiny$\pm$4.0}
 & 61.7{\tiny$\pm$7.4} & 71.3{\tiny$\pm$5.4}
 & 64.3{\tiny$\pm$8.5} & 83.2{\tiny$\pm$2.6}
 & 54.0{\tiny$\pm$7.6} & 84.7{\tiny$\pm$2.8}
 & 63.1{\tiny$\pm$1.1} & 68.8{\tiny$\pm$3.6}
 & 56.1{\tiny$\pm$9.0} & 34.3{\tiny$\pm$3.5} \\
 & CBM   
 & 63.5{\tiny$\pm$3.6} & 46.7{\tiny$\pm$3.3}
 & 42.2{\tiny$\pm$12.1} & 56.0{\tiny$\pm$4.0}
 & 57.0{\tiny$\pm$8.0} & 71.3{\tiny$\pm$5.4}
 & 58.4{\tiny$\pm$11.8} & 83.2{\tiny$\pm$2.6}
 & 51.8{\tiny$\pm$10.8} & 84.7{\tiny$\pm$2.8}
 & 55.8{\tiny$\pm$6.2} & 68.8{\tiny$\pm$3.6}
 & 67.0{\tiny$\pm$4.2} & 34.3{\tiny$\pm$3.5} \\
 & CEM     
 & 72.8{\tiny$\pm$8.0} & 46.7{\tiny$\pm$3.3}
 & 39.9{\tiny$\pm$15.2} & 56.0{\tiny$\pm$4.0}
 & 56.2{\tiny$\pm$10.9} & 71.3{\tiny$\pm$5.4}
 & 67.0{\tiny$\pm$3.4} & 83.2{\tiny$\pm$2.6}
 & 55.1{\tiny$\pm$9.7} & 84.7{\tiny$\pm$2.8}
 & 56.8{\tiny$\pm$6.5} & 68.8{\tiny$\pm$3.6}
 & 54.9{\tiny$\pm$9.2} & 34.3{\tiny$\pm$3.5} \\
 & CGM     
 & 70.8{\tiny$\pm$8.3} & 46.7{\tiny$\pm$3.3}
 & 39.3{\tiny$\pm$14.9} & 56.0{\tiny$\pm$4.0}
 & 54.6{\tiny$\pm$10.5} & 51.3{\tiny$\pm$6.7}
 & 58.2{\tiny$\pm$11.8} & 76.8{\tiny$\pm$5.9}
 & 50.6{\tiny$\pm$7.7} & 83.3{\tiny$\pm$2.4}
 & 61.8{\tiny$\pm$0.3} & 68.8{\tiny$\pm$3.6}
 & 65.7{\tiny$\pm$3.8}& 34.3{\tiny$\pm$3.5} \\
 & C$^2$BM 
 & 70.4{\tiny$\pm$7.7} & 46.7{\tiny$\pm$3.3}
 & 46.5{\tiny$\pm$15.5} & 56.0{\tiny$\pm$4.0}
 & 57.1{\tiny$\pm$10.4} & 51.3{\tiny$\pm$6.7}
 & 60.1{\tiny$\pm$13.9} & 78.9{\tiny$\pm$7.1}
 & 59.1{\tiny$\pm$8.2} & 83.3{\tiny$\pm$2.4}
 & 50.0{\tiny$\pm$8.0} & 68.8{\tiny$\pm$3.6}
 & 61.6{\tiny$\pm$4.9} & 34.3{\tiny$\pm$3.5} \\
\midrule
\multirow{6}{*}{\textbf{F-CMs}}
 & OpaqNN      
 & \textbf{80.5}{\tiny$\pm$\textbf{1.1}} & \textbf{100}
 & \textbf{75.2}{\tiny$\pm$\textbf{1.0}} & \textbf{100}
 & \textbf{72.7}{\tiny$\pm$\textbf{0.7}} & \textbf{100}
 & \textbf{71.3}{\tiny$\pm$\textbf{0.6}} & \textbf{100}
 & \textbf{68.0}{\tiny$\pm$\textbf{1.4}} & \textbf{100}
 & \textbf{64.8}{\tiny$\pm$\textbf{1.0}} & \textbf{100}
 & \textbf{70.0} {\tiny$\pm$\textbf{0.2}} & \textbf{100} \\
 & CBM     
 & \textbf{80.1}{\tiny$\pm$\textbf{0.9}} & \textbf{100}
 & \textbf{75.2}{\tiny$\pm$\textbf{0.9}} & \textbf{100}
 & \textbf{71.0}{\tiny$\pm$\textbf{1.9}} & \textbf{100}
 & \textbf{69.5}{\tiny$\pm$\textbf{0.6}} & \textbf{100}
 & \textbf{70.6}{\tiny$\pm$\textbf{1.5}} & \textbf{100}
 & \textbf{67.6}{\tiny$\pm$\textbf{1.1}} & \textbf{100}
 & \textbf{69.2} {\tiny$\pm$\textbf{0.7}}  & \textbf{100} \\
 & CEM     
 & \textbf{80.7}{\tiny$\pm$\textbf{1.0}} & \textbf{100}
 & \textbf{75.7}{\tiny$\pm$\textbf{1.0}} & \textbf{100}
 & \textbf{72.8}{\tiny$\pm$\textbf{0.7}} & \textbf{100}
 & \textbf{71.0}{\tiny$\pm$\textbf{0.5}} & \textbf{100}
 & \textbf{71.6}{\tiny$\pm$\textbf{1.0}} & \textbf{100}
 & \textbf{68.0}{\tiny$\pm$\textbf{0.1}} & \textbf{100}
 & \textbf{67.0} {\tiny$\pm$\textbf{0.1}} & \textbf{100} \\
 & CGM     
 & \textbf{77.8}{\tiny$\pm$\textbf{1.3}} & \textbf{100}
 & \textbf{71.7}{\tiny$\pm$\textbf{1.8}} & \textbf{100}
 & \textbf{68.5}{\tiny$\pm$\textbf{3.3}} & \textbf{100}
 & \textbf{70.0}{\tiny$\pm$\textbf{1.1}} & \textbf{100}
 & \textbf{63.8}{\tiny$\pm$\textbf{3.4}} & \textbf{100}
 & \textbf{63.6}{\tiny$\pm$\textbf{0.2}} & \textbf{100}
 & \textbf{67.5} {\tiny$\pm$\textbf{0.3}} & \textbf{100} \\
 & C$^2$BM 
 & \textbf{80.4}{\tiny$\pm$\textbf{1.0}} & \textbf{100}
 & \textbf{76.2}{\tiny$\pm$\textbf{1.0}} & \textbf{100}
 & \textbf{73.3}{\tiny$\pm$\textbf{0.5}} & \textbf{100}
 & \textbf{74.2}{\tiny$\pm$\textbf{1.1}} & \textbf{100}
 & \textbf{72.0}{\tiny$\pm$\textbf{1.8}} & \textbf{100}
 & \textbf{65.6}{\tiny$\pm$\textbf{2.0}} & \textbf{100}
 & \textbf{66.1} {\tiny$\pm$\textbf{1.3}} & \textbf{100} \\
\bottomrule
\end{tabular}
}
\label{tab:task_accuracy}
\end{table*}
 In our experiments, we instantiate F-CMs with two bipartite CMs, CBM~\cite{koh2020} and CEM~\cite{espinosa2022}, and two graph-based CMs, CGM~\cite{dominici2024} and C$^2$BM~\cite{de2025}. CBM predicts scalar concepts, CEM predicts concept embeddings, while CGM and C$^2$BM condition concept predictions on parent concepts in a DAG (details in App.~\ref{app:baselines_details}). Specifically, C$^2$BM assumes a \emph{causal} DAG, where each directed edge 
represents a direct causal relation between variables. 
By contrast, CGMs can be instantiated with different types of graphs. 
To provide a unified treatment across graph-based models, we use causal DAGs 
also for CGMs. 
This is an implementation choice: F-CMs can also be used with other DAG 
structures.

We empirically evaluate F-CMs across interpretability and FL objectives. Specifically, we assess (i) task accuracy and concept coverage (Section~\ref{sec:taskconcept_acc}), (ii) 
responsiveness to ground-truth concept interventions (\ref{sec:intervention_acc}), and (iii) training efficiency under evolving federations with dynamic architecture updates (\ref{sec:exp_trends}). The appendix complements this evaluation with extended intervention and adaptation analyses across architectures and instantiations, and analyses of the extent and computational overhead of modular adaptation (\ref{app:concept_acc}--\ref{app:overhead}). We also report privacy--utility trade-offs under client-level differential privacy (\ref{app:diff_privacy}), sensitivity analyses with respect to the number of clients, selective parameter freezing, and the concept-loss weight (\ref{app:sensitivity}), robustness to inter-client semantic disagreement and to noisy client graphs (\ref{app:semantic_disagreement} and~\ref{app:robust_graph_perturb}), and an ablation of graph aggregation strategies under graph perturbations (\ref{app:graph_agg_ablation}).
Finally, we study a natural extension of F-CMs to multimodal settings, particularly relevant in real-world FL where clients access heterogeneous modalities while sharing common semantic targets; App.~\ref{app:multimodal} formalizes this extension and reports an  experiment on a realistic multimodal dataset.


\textbf{Baselines and training regimes.}
We compare F-CM instantiations to an opaque neural baseline (\textbf{OpaqNN}) under four training regimes:
(i) \textbf{Centralized (Cent.)}, pooling all client data and concept annotations, serving as an upper bound;
(ii) \textbf{F-CMs} (ours);
(iii) \textbf{S-F-CMs}, a static federated variant of F-CMs that uses the same optimization and structure aggregation, but fixes the concept space and architecture after the first-round client participation (no adaptation);
(iv) \textbf{Localized (Loc.)}, training independent client models without sharing.
We also include two state-of-the-art federated concept-learning baselines, FedCBM \cite{yangFedCBM2024} and FCL \cite{shenFCL2024} reported separately as \textbf{external static FL baselines} since they assume a static federated setting and rely on method-specific concept-learning pipelines.

\textbf{Federated protocol under temporal non-stationarity.}
Federated experiments are conducted under \emph{non-stationarity} induced by
dynamic client participation and evolving concept supervision. 
At each round $t$, the server samples 
$|\mathcal{K}_t|\!=\!10$ clients. 
The client pool $\mathcal{K}(t)$ evolves up to 20 clients during training: after a warm-up phase (at $t\!=\!10$ for \textsc{Asia}, \textsc{Sachs}, and \textsc{SIIM}, and at $t\!=\!20$ otherwise; App.~\ref{app:datapreparation}), new clients join 
with additional concept supervision and, for graph-based F-CMs, potentially new dependencies. Experiments up to $|\mathcal{K}_t|\!=\!100$ are reported in App. \ref{app:scaling_clients}. 


\textbf{Datasets and data preparation under heterogeneous partial concept supervision.}
We consider five Bayesian network benchmarks from \texttt{bnlearn}~\citep{scutari2010} (\textsc{Asia}\citep{asia}, \textsc{Sachs}~\citep{sachs2005causal}, \textsc{Insurance}~\citep{binder1997}, \textsc{Alarm}~\citep{alarm}, and \textsc{Hailfinder}~\citep{abramson1996}), each with a ground-truth DAG, and two real-world chest X-ray datasets: 
\textsc{SIIM-Pneumothorax}~\citep{you2023}, with concept annotations generated using a medical 
CLIP model following established practices~\citep{oikarinen2023,de2025}, and \textsc{CheXpert}~\citep{irvinCheXpert2019}, whose 
labels are extracted from radiology reports. For each dataset, one variable is treated as the task and the remaining as concepts.  To model heterogeneous 
settings, clients are assigned different data distributions and observe partial subsets of concepts.
In the centralized setting, graph-based CMs use the ground-truth DAG or a proxy, 
whereas in federated and localized ones clients access only their local subgraphs and concepts. 
To model imperfect knowledge (or data poisoning), client graphs are perturbed by randomly adding, removing, or reversing edges (see App.~\ref{app:datasets}, and \ref{app:datapreparation}).When computing accuracy, variables that a model cannot predict under a given training regime (e.g., localized) are assigned chance-level accuracy, defined as the expected accuracy of a uniform random predictor over the corresponding space.



\subsection{Task and Concept Coverage}\label{sec:taskconcept_acc}
We evaluate \emph{task accuracy} and \emph{concept coverage}, i.e., the fraction of task-relevant concepts predicted by the model. 
\emph{Concept accuracy} is reported in App.~\ref{app:concept_acc}.
\begin{figure*}
   \centering   \includegraphics[width=0.91\textwidth]{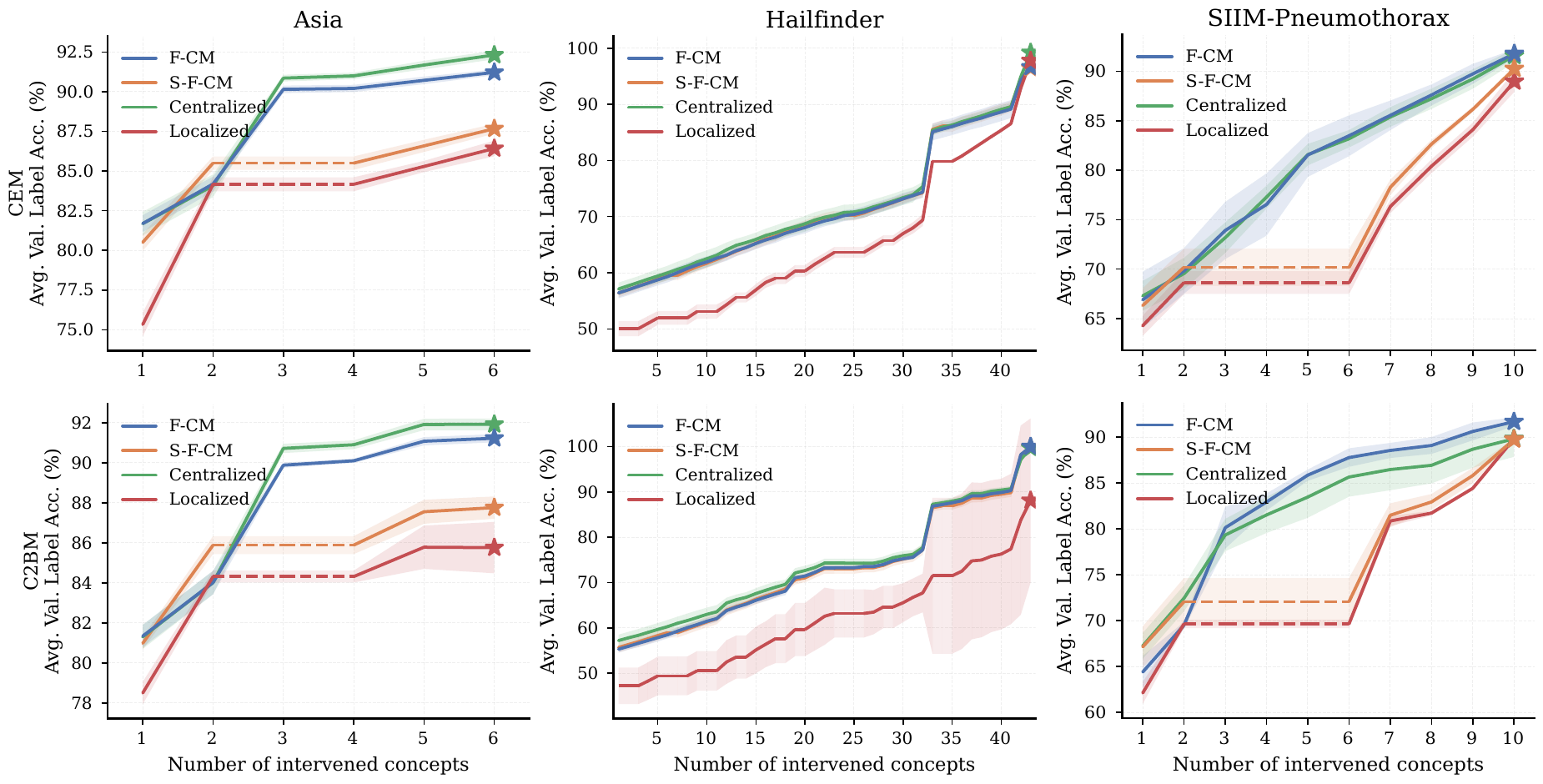}   
   \caption{\textbf{Label accuracy} (\%) on downstream variables (including the task) following interventions on concepts at deeper levels of the ground-truth graph hierarchy. Dashed lines denote interventions that are \textbf{not} possible for a given modality due to the absence of predictions for the 
   concept.}
  \label{fig:int}
\end{figure*}


\textbf{Federated aggregation is necessary under statistical heterogeneity.}
Comparing S-F-CMs to Localized isolates the effect of cross-client aggregation when the concept space is fixed. Across benchmarks, S-F-CMs improve task accuracy and coverage over localized training, showing that aggregating supervision and, for graph-based models, structure is essential to reconcile client heterogeneity. 

\textbf{Aggregation alone is insufficient under temporal non-stationarity; F-CMs close the gap.}
Under temporal non-stationarity, late-arriving clients introduce additional supervision and, for graph-based models, new dependencies, making a fixed architecture increasingly mismatched to the evolving concept space. This is reflected by the gap between S-F-CMs and both Centralized and F-CMs, and is consistent with the weaker task accuracy of the external static FL baselines FedCBM and FCL. By extending the global concept set and updating the dependency graph, F-CMs adapt the shared architecture and remain close to the centralized bound in task accuracy, while attaining near-$100\%$ coverage of task-relevant concepts. 

Overall, these results show that realistic federations require 
(i) graph aggregation to handle heterogeneity and (ii) architecture adaptation to sustain performances as the federation evolves.


\subsection{Concept-Level Interventions}
\label{sec:intervention_acc}
In this section, we evaluate responsiveness to \emph{ground-truth concept interventions} by replacing an increasing number of predicted concepts with their ground-truth values, mimicking human corrections at deployment. To ensure a consistent  policy, interventions are sampled within progressively deeper levels in the ground-truth DAG hierarchy. 
Fig.~\ref{fig:int} reports the average accuracy over the remaining predicted variables (concepts and task) after each intervention, which we refer to as \emph{label accuracy}. 
For clearer visualization we fix the client subgraphs across three seeds, isolating intervention effects from variability in the federated partition. Additional results are reported in App.~\ref{app:add_exps}.

\textbf{F-CMs enable interventions on locally unannotated concepts.}
Dashed segments mark interventions that are impossible because the corresponding concepts are not predicted by a given regime. This is the case for both Localized and S-F-CMs: localized models cannot predict concepts outside a client’s subgraph, while static federation cannot incorporate concepts introduced by late clients because it fixes the concept space early. By combining cross-client aggregation with dynamic expansion and rewiring, F-CMs predict the broadest set of concepts and keep benefiting from further interventions, yielding higher label accuracy.
The same trend holds for task accuracy alone (App.~\ref{app:int_task_acc}).\footnote{The Localized regime shows higher variance because, with limited concept supervision, the model may encode information about unobserved concepts into the predicted ones; interventions overwrite these predictions, leading to less stable gains.}

\textbf{F-CMs achieve interventional responsiveness comparable to centralized.}
Across datasets and architectures, F-CMs achieve accuracy comparable to centralized setting throughout the intervention trajectory, showing that F-CMs preserves gains from concept corrections as pooled training does.

\subsection{Efficient Adaptation in Non-Stationary Federations}
\label{sec:exp_trends}
\begin{figure*}[t]
   \centering   \includegraphics[width=0.92\linewidth]{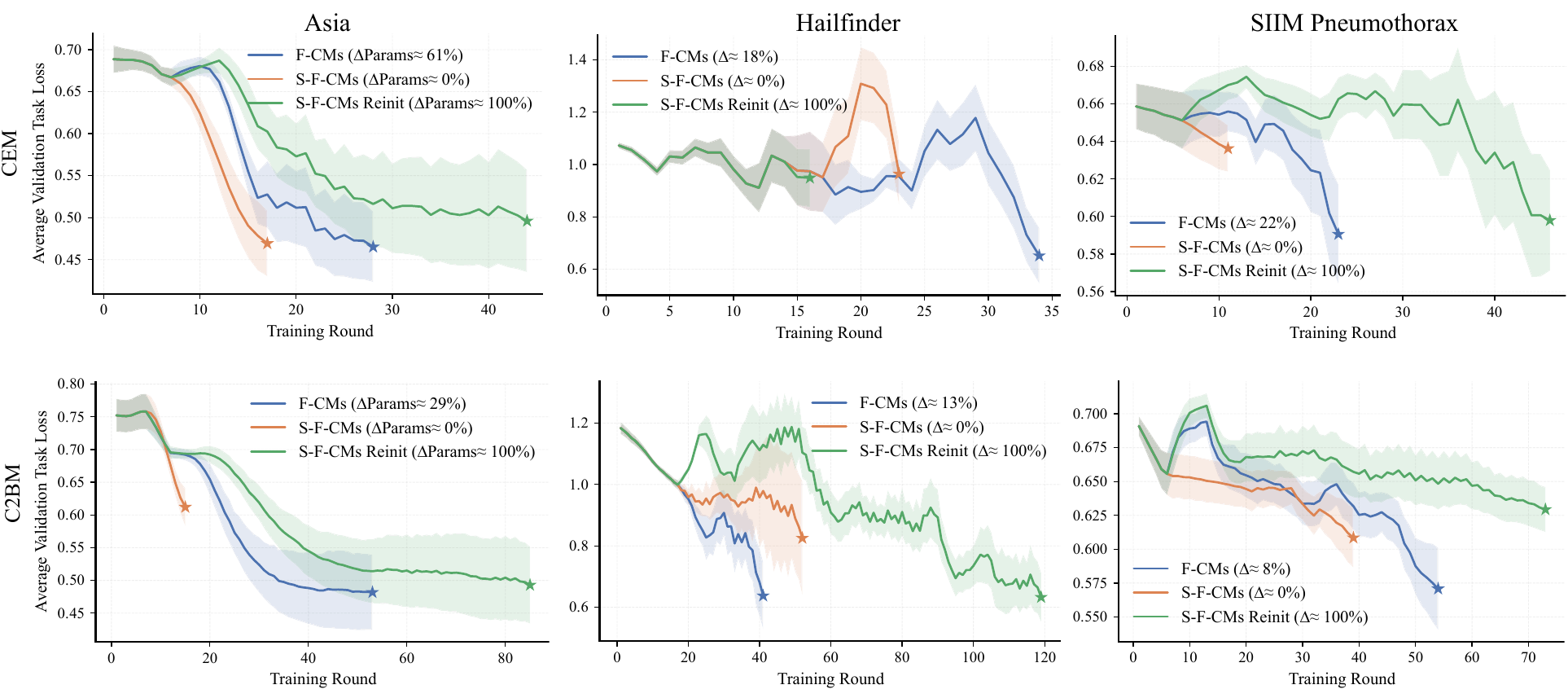}   \caption{\textbf{Convergence under temporal non-stationarity.} Average 
   task loss across rounds for C$^2$BM and CEM,
   where new clients introduce unseen concepts. F-CMs (ours) adapt the architecture 
   and converge faster and to lower loss than S-F-CMs (no adaptation) and S-F-CMs Reinit (
   retraining). Curves are truncated at the best loss-round. The legend reports the fraction of modified parameters.}\label{fig:convergence}
\end{figure*}
We evaluate how F-CMs behave in \emph{non-stationary} federations where new clients join or leave during training and introduce previously unseen concepts, requiring the shared model to expand. 
Fig.\ref{fig:convergence} reports the average validation task loss over communication rounds on the same datasets and instantiations of the previous experiment.
We compare: \textbf{F-CMs} (ours), \textbf{S-F-CMs} (without architecture adaptation), and \textbf{S-F-CMs Reinit} (re-training from scratch whenever new clients join).

\textbf{Across all datasets and both architectures, F-CMs converge faster and to lower task loss.} F-CMs outperform S-F-CMs and avoid the slow recovery of the reinitialization baseline, which erases knowledge accumulated from earlier clients.
This translates into greater \emph{adaptation efficiency} in evolving federations: F-CMs modify only the modules affected by newly introduced concepts or dependencies, yielding sparse parameter changes (e.g., $8\%$--$61\%$ across settings), compared to $0\%$ for S-F-CMs (which cannot incorporate new concepts) and $100\%$ for S-F-CMs Reinit.
Full results on parameter changes and additional plots for the remaining instantiations are reported in App.~\ref{app:params_changed} and~\ref{app:convergence_cbm_cgm}.
These results highlight that modular architecture adaptation yields a more convergence-efficient strategy for continual federation growth, preserving previously learned knowledge while integrating new supervision.




\section{Related works} 
We build on the literature on concept-based models (CMs), which leverage human-interpretable variables for prediction and interpretation. In our experiments, we consider representative architectures from this line of work, including CBM~\cite{koh2020}, CEM~\cite{espinosa2022}, CGM~\cite{dominici2024}, and C$^2$BM~\cite{de2025}. Several other CM variants have also been proposed, e.g.,~\cite{poeta2023,desantis2025,barbiero2023}. Our framework is not tied to the architectures evaluated here and could be extended to accommodate other CM variants.

A related line of work addresses the challenge of obtaining concept annotations. Specifically, several CM extensions address the challenge of obtaining concept annotations by relaxing the need for fully observed concept supervision, e.g., by handling missing or noisy labels and combining supervised with unsupervised concept representations
\cite{oikarinen2023, sawadCBMUnsup2022, penalozaConceptNoise2025, hu2025semi, liu2025hybrid}. However, they typically rely on domain-specific pre-trained models to infer concepts. In contrast, our approach leverages ground-truth concept annotations, even when distributed across multiple entities.


Interpretability in FL remains relatively underexplored. Recent works~\cite{yangFedCBM2024, shenFCL2024, zhangLRXFL2024} take initial steps toward combining concepts with FL, but rely on unsupervised concept extraction~\citep{kim2018interpretability, wang2023learning} rather than concept annotations. Consequently, they cannot fully exploit FL to aggregate existing concept-level supervision across clients. They also assume a \emph{static} federation, and therefore do not address \emph{temporal non-stationarity}, common in real-world FL, where new clients may introduce previously unseen concepts and dependencies, requiring the shared concept space 
and architecture 
to expand over time.

A related direction is \emph{federated causal discovery}, which studies how to infer causal relationships from decentralized data \cite{ng2022towards,huang2023, li24}. These methods typically assume a fixed, or pre-specified union, variable set across clients, which is restrictive in settings with evolving concept sets. Moreover, our goal is a model-agnostic graph aggregation mechanism rather than a method tied specifically to causal discovery. Beyond this, \emph{graph federated learning} \cite{zhang2021, he2021, liu2024,kim2025} focuses on training graph neural networks via federated optimization, rather than learning evolving interpretable concept-based architectures. 

\section{Conclusions} \label{sec:conclusion_limitation}
We introduced F-CMs, a novel methodology for training concept-based models in realistic federated settings under heterogeneous, partial concept supervision and temporal non-stationarity (dynamic participation and data drift). F-CMs aggregate concept supervision and, when available, concept dependencies across clients throughout training, while adapting the shared model modularly as the concept space evolves. Experiments show that F-CMs preserve task performance while improving concept coverage and intervention effectiveness over local and fixed federated architecture baselines. Importantly, F-CMs enable interpretable inference even for concepts that are locally unannotated.
 
\textbf{Limitations and future work.}
While F-CMs can reuse existing CM architectures, these models were not designed for evolving concept spaces. A promising direction is to develop federation-aware CMs with greater modularity, so that concept and dependency updates affect fewer parameters and enable more efficient adaptation under temporal non-stationarity. Another important direction is to strengthen the privacy analysis by systematically characterizing information leakage. Although bringing interpretable models into FL is important, it may also introduce new attack surfaces that remain largely unexplored, motivating defenses tailored to concept-based federated models.

\newpage
\bibliographystyle{unsrtnat} 
\bibliography{references} 

\newpage
\appendix
\onecolumn

\appendixpart
\parttoc

\newpage

\section{Dataset details}\label{app:datasets}

\subsection{Bayesian Network datasets (bnlearn)}
We use five benchmark \emph{Bayesian networks} from the \texttt{bnlearn} repository~\citep{scutari2010}. A Bayesian network (BN)~\citep{jensen2007} is a probabilistic graphical model specified by a DAG and a set of conditional probability distributions (CPDs). The DAG encodes the dependency structure among random variables, while each CPD specifies the distribution of a variable conditional on its parents in the graph. The selected BNs span diverse application domains and vary in graph size and structural complexity.
\begin{itemize}[topsep=1pt,itemsep=1pt,parsep=0pt,partopsep=0pt,leftmargin=1.8em]
    \item \textit{Asia:} A small network for lung disease diagnosis with variables representing patient symptoms and conditions.
    \item \textit{Sachs:} A widely used network capturing dependencies among protein and phospholipid expression levels in human cells.
    \item \textit{Alarm:} A medical network designed to provide an alarm message system for patient monitoring.
    \item \textit{Insurance:} A network for car insurance risk assessment with variables covering driver characteristics, vehicle properties, and accident-related factors.
    \item \textit{Hailfinder:} A weather forecasting network with variables for predicting severe weather conditions in northeastern Colorado, including atmospheric measurements and storm indicators.
\end{itemize}

\begin{table}[h]
\centering
\caption{Summary of bnlearn Bayesian networks used in experiments.}
\label{tab:dataset_summary}
\begin{tabular}{lccl}
\toprule
\textbf{Bayesian network} & \textbf{Nodes} & \textbf{Edges} &  \textbf{Domain} \\
\midrule
Asia & 8 & 8 & Medical diagnosis \\
Sachs & 11 & 17 & Protein signaling \\
Alarm & 37 & 46  & Medical monitoring \\
Insurance & 27 & 52  & Insurance risk \\
Hailfinder & 56 & 66 &  Weather prediction \\
\bottomrule
\end{tabular}
\end{table}

For each network, we generate a dataset with a network-specific number of samples, selected via preliminary experiments to ensure stable performance: 15{,}000 for Asia, 15{,}000 for Sachs, 10{,}000 for Alarm, 20{,}000 for Insurance, and 20{,}000 for Hailfinder.
Each dataset is split into training, validation, and test sets using a standard 70\%–10\%–20\% split. 
We interpret the networks causally and treat their DAGs as ground-truth causal graphs. Under this interpretation, each directed edge represents a direct causal relation from a parent variable to a child variable, and the DAG defines the reference causal structure used for evaluation. The nodes in each network are treated either as concepts $\{C_j\}_j$ or as the task variable $Y$. Thus, the generated datasets provide annotations for semantic variables, but no corresponding raw inputs $x_i$. To construct inputs suitable for concept-based models, concept values are flattened and passed through a simple autoencoder trained with an MSE loss, consisting of two encoder and two decoder layers. The latent dimensionality is scaled with graph size (Asia: 32, Sachs: 32, Insurance: 64, Alarm: 64, Hailfinder: 64). The resulting embeddings are further perturbed by mixing 50\% of the encoded signal with 50\% Gaussian noise, and the final inputs are standardized. These transformations yield non-trivial input representations, requiring models to recover the underlying concepts rather than trivially accessing them. Tasks chosen in the experiment are respectively: \emph{dysp} (Asia), \emph{Akt} (Sachs), \emph{PropCost}
(Insurance), \emph{BP} (Alarm), \emph{R5Fcst} (Hailfinder).

\subsection{SIIM-Pneumothorax}
The SIIM-Pneumothorax dataset is derived from the publicly available chest radiograph collection released by the National Institutes of Health (NIH). It consists of chest X-ray images annotated with binary labels indicating the presence or absence of Pneumothorax. In our experiments, we use the training annotations from the SIIM-ACR Pneumothorax Segmentation challenge on Kaggle and the corresponding training images from the SIIM PNG image dataset\footnote{\url{https://www.kaggle.com/competitions/siim-acr-pneumothorax-segmentation}; \url{https://www.kaggle.com/datasets/abhishek/siim-png-images}}. 
We then split the resulting dataset into training, validation, and test sets using standard $70\%$--$10\%$--$20\%$ partition.

As the dataset does not provide concept-level annotations, we generate concept labels automatically. Specifically, we use a pretrained CXR-CLIP model \cite{you2023cxr} following a procedure similar to \citet{oikarinen2023}: candidate concepts are generated once and filtered using the training data, then image-text cosine similarities are computed between each radiograph and the generated concepts to generate annotations. These similarity scores are further binarized independently for each concept using two-cluster k-means, to obtain final binary concept annotations used in our experiments.

As encoder, we use a ResNet-18 followed by a trainable MLP head that maps image representations to the latent space used by the concept and task modules.

\subsection{CheXpert}\label{sec:chexpert}
We use the CheXpert dataset of chest X-ray images annotated with expert-labeled radiological observations \cite{irvinCheXpert2019} hosted on Kaggle\footnote{\url{https://www.kaggle.com/datasets/ashery/chexpert}}. In our setup, the 13 observation labels are used as concept annotations, namely \emph{Enlarged Cardiomediastinum}, \emph{Cardiomegaly}, \emph{Lung Opacity}, \emph{Lung Lesion}, \emph{Edema}, \emph{Consolidation}, \emph{Pneumonia}, \emph{Atelectasis}, \emph{Pneumothorax}, \emph{Pleural Effusion}, \emph{Pleural Other}, \emph{Fracture}, and \emph{Support Devices}. The downstream binary task is derived from the \textit{No Finding} label, which we invert so that the target indicates the presence of any abnormal finding.

For preprocessing, 
we first merge the provided training and validation metadata files, remove duplicate patients by keeping only the last available study for each subject, and replace missing annotations with $0$ and uncertain concept annotations (encoded as $-1$) with $1$. We  further filter the dataset by acquisition view, keeping only frontal images. We then shuffle the resulting dataset and create custom training, validation, and test splits using a $70\%$--$10\%$--$20\%$ partition.
Following an image preprocessing similar to \citet{laguna2024}, each radiograph is center-cropped to a square region, resized to $224 \times 224$, histogram-equalized, and converted to grayscale for compatibility with the image encoder. During training, we additionally apply random affine transformations and random horizontal flipping; for validation and test images, only resizing and tensor conversion are used.

As encoder, we again use a ResNet-18, as in \citet{laguna2024}, followed by a trainable MLP head.

To obtain a dataset size comparable to the other benchmarks, we further subsample an equal number of examples per task class. Importantly, we preserve the federated heterogeneity by assigning different subsets of concept supervision to different clients, as in the other datasets.

\subsection{CheXpert Plus}
\label{sec:chexpert_plus}

We use CheXpert Plus, a multimodal extension of CheXpert that provides radiology reports paired with chest X-ray images~\cite{chexpertPlus}. For the images, we rely on the version of the CheXpert dataset hosted on Kaggle, which we pair with the additional reports from CheXpert Plus (see Section~\ref{sec:chexpert}). To increase the number of samples with both modalities, we do not filter images by view and we retain duplicate studies from the same patients.  However, we perform a patient-level split: all images and reports associated with a given patient are assigned to only one of the training, validation, or test sets. We then apply the same image preprocessing as in the original CheXpert setup with a usual split of $70\%$--$10\%$--$20\%$ for training, validation and test.

For the text modality, each radiology report is tokenized and the resulting sequence is used as input to the text encoder (see App.~\ref{app:multimodal}).

The resulting dataset preserves the same semantic prediction targets as CheXpert, with each instance associated with both an image and a text input.

As encoders, we use a ResNet-18 for images and the pre-trained CXR-BERT specialized model, a chest X-ray domain-specific language model~\cite{CXRBERT}, for text. The outputs of both encoders are then passed through trainable MLP heads.

\section{Baseline details}
\label{app:baselines_details}
All the baselines considered share the same input encoder. They differ only in the parameterization of the concept predictors and the task decoder, and in whether they use a bipartite graph (CBM/CEM) or a more general DAG $\mathcal{G}$ connecting concepts and task. 

\paragraph{Concept Bottleneck Models (CBMs) \cite{koh2020}.}
CBMs use a bipartite graph. Specifically,
each concept is predicted independently from an encoder representation $z$:
\[
\hat{c}_j = h_{\phi_j}(z), \text{ for every } j
\]
where $h_{\phi_j}$ is an MLP head outputting either logits/probabilities (categorical concepts) or real values (continuous concepts).
The task decoder operates only on the predicted concepts $\hat{\textbf{c}} = \{\hat{c}_1,...,\hat{c}_m\}$,
\[
\hat{y} = f_{\psi}(\hat{\textbf{c}}),
\]
implemented as an MLP over concatenation of all concept predictions (no direct dependence on $z$).

\paragraph{Concept Embedding Models (CEMs) \cite{espinosa2022}.}
CEMs share the same bipartite construction as CBMs, but represent each concept with an embedding rather than a scalar. For each concept $C_j$, the model first produces two concept-specific embeddings from the encoder representation $z$: 
\[
\hat{c}^{+}_j = \varphi^{+}_{\phi_j}(z)\in\mathbb{R}^r, 
\qquad
\hat{c}^{-}_j = \varphi^{-}_{\phi_j}(z)\in\mathbb{R}^r,
\]
where $\hat{c}^{+}_j$ encodes the \emph{active} state and $\hat{c}^{-}_j$ the \emph{inactive} state of concept $C_j$. Then, a scoring function maps their joint space to a concept activity probability,
\[
\hat{p}_j \;=\; s_\phi([\hat{c}^{+}_j,\hat{c}^{-}_j]^{T}) \;=\; \sigma\!\big(W_s[\hat{c}^{+}_j,\hat{c}^{-}_j]^{T} + b_s\big)\in[0,1],
\]
and the final concept embedding fed to the task is the corresponding mixture:
\[
\tilde{c}_j \;=\; \hat{p}_j\,\hat{c}^{+}_j \;+\; (1-\hat{p}_j)\,\hat{c}^{-}_j \;\in\mathbb{R}^r.
\]
The task decoder then operates on the concatenation of mixed concept embeddings $\hat{\mathbf{c}}$,
\[
\hat{y} \;=\; f_{\psi}(\hat{\textbf{c}}),
\]
implemented as an (interpretable) linear layer or a small MLP.

\paragraph{Concept Graph Models (CGM) \cite{dominici2024}.}
We implement CGMs using the aggregated DAG produced by our methodology. For each concept node $C_j$, we predict
\[
\hat{c}_j = h_{\phi,j}\!\big(\hat{\textbf{c}}_{\mathrm{PA}(C_j)}, z\big),
\]
where $h_{\phi,j}$ is an MLP that takes as input the concatenation of the embeddings of the parent concepts of $C_j$ and the encoder representation $z$, following a formulation similar to CEM. The task prediction is computed analogously,
\[
\hat{y} = f_{\psi}\!\big(\hat{\textbf{c}}_{\mathrm{PA}(Y)}, z\big).
\]
This factorization ensures that changes to upstream concepts propagate to downstream concepts and the task through the parent-conditioned modules.

\paragraph{Causally Reliable Concept Bottleneck Models (C$^2$BMs) \cite{de2025}.}
C$^2$BMs use also our aggregated DAG, but explicitly separate \emph{exogenous} information extracted from the input from the \emph{structural} mechanism mapping parent variables to each node.

Concretely, each node $V$  in the DAG is equipped with a CEM-style module that outputs an embedding $u_V$ representing the node-specific exogenous signal extracted from the input. This exogenous representation is used (a) to predict \emph{root} nodes directly via a lightweight output head, and (b) to \emph{parameterize} the parent-to-node mechanism for \emph{non-root} nodes through a hypernetwork. In particular, for non-root nodes we first generate node-specific coefficients from $u_V$,
\[
\theta_V = r_{\phi_V}(u_V),
\]
and then predict $V$ from its parents with a linear-in-parents map followed by the appropriate output nonlinearity:
\[
\hat{v}
= \sigma_V\!\Big(\langle \theta_V,\; \hat{\textbf{v}}_{\mathrm{PA}(V)} \rangle \Big)
\;\equiv\;
f_{\theta_V}\!\big(\hat{\textbf{v}}_{\mathrm{PA}(V)}\big),
\]
where $r_{\phi_V}$ is an MLP (hypernetwork) and $\sigma_V$ is chosen according to the variable type (e.g., sigmoid/softmax/identity). The task node $Y$ is treated identically as a node in the graph.

\paragraph{Federated Concept Learning (FCL) \cite{shenFCL2024}.}
FCL is a \emph{static} federated concept-learning method built on top of a Bottleneck Concept Learner (BotCL) \cite{wangBotCL2023}. On each client, BotCL extracts a set of unsupervised concept activations from the input and uses them for classification through a concept--class score matrix. In the federated setting, clients do not exchange concept predictors or structural relations; instead, each client uploads a matrix of \emph{concept--class co-occurrence scores}, encoding how strongly each learned concept contributes to each class. The server aggregates these scores across clients with a FedAvg-style procedure, after first trimming the highest and lowest values for each entry to mitigate the effect of malicious clients, and sends the aggregated score matrix back to the clients for the next round. Classification is then performed only from the learned concept activations and the aggregated concept--class scores. Since FCL assumes a single shared static concept space and communicates only concept--class scores, it is reported in our experiments as an external \emph{static FL} concept-learning baseline.

\paragraph{Federated Concept Bottleneck Models (FedCBM) \cite{yangFedCBM2024}.}
FedCBM is a \emph{static} federated concept-learning method based on a concept bottleneck architecture with \emph{pre-defined} human concepts. The method follows a two-stage pipeline. First, clients collaboratively train a shared \emph{concept bank} from local concept datasets, where each concept is represented by a \emph{Concept Activation Vector} (CAV), i.e., a linear hyperplane normal learned with a federated SVM-like optimization procedure. These shared CAVs define a common concept space across clients. Second, for a standard input $x$, each client extracts a latent representation $h(x)$ with a backbone encoder, projects it onto each CAV to obtain a vector of concept scores, and feeds this concept vector to a local linear classifier for prediction. Thus, FedCBM performs classification exclusively through a bottleneck of concept scores, while concepts are aligned across clients through the shared concept bank. Since FedCBM assumes a fixed shared concept vocabulary and does not model evolving concept dependencies or graph structure, we report it as an external \emph{static FL} concept-learning baseline.

\section{Experimental details}\label{app:exp_details}

\subsection{Details on data preparation}\label{app:datapreparation}
\paragraph{SIIM-Pneumothorax ground-truth graph.} 
The ground-truth graph among concepts and the task for SIIM-Pneumothorax is constructed following a procedure similar to the one provided in \citet{de2025}. First, the Greedy Equivalence Search (GES) algorithm~\cite{chickering2002}, a causal discovery method that infers causal structure from observational data, is applied to the concept and task annotations to produce an initial graph. This graph however may still contain unoriented edges. To resolve these ambiguities, we employ a large language model (LLM), specifically \texttt{GPT-4o}~\cite{hurst2024gpt}, enhanced with retrieval-augmented generation (RAG)~\cite{lewis2020}, which uses domain-specific medical literature to provide context. This process enriches the initial graph with background knowledge, yielding the final DAG.

\paragraph{Client local data.}
In our setting, each client $k$ holds a local dataset of inputs and semantic variable annotations,
\[
\mathcal{D}^{(k)} \coloneqq \{(x_i^{(k)}, \mathbf{v}_{i}^{(k)})\}_{i=1}^{n_k}.
\]
A client may also provide a local DAG $\mathcal{G}^{(k)}$ encoding relationships among its observed concepts and the task. 

To generate client data, we proceed as follows. Given a dataset $\mathcal{D}$ with full inputs, concepts, and task annotations (see App.~\ref{app:datasets}), we first construct a ground-truth causal DAG $\mathcal{G}$ over the available concepts and the task variable (see \emph{Proxy DAG construction and cycle resolution} in App.\ref{parag:DAG_construction_cycle_res}). 
Next, we generate $\mathcal{K}$ subgraphs $\{\mathcal{G}^{(k)}\}_{k=1}^{\mathcal{K}}$ of $\mathcal{G}$, where $\mathcal{K}$ denotes the total number of clients considered throughout training. Each $\mathcal{G}^{(k)}$ is defined as an induced subgraph over a subset of nodes, corresponding to a subset of concepts (and possibly the task variable). Each $\mathcal{G}^{(k)}$ is defined as an induced subgraph of $\mathcal{G}$ over a subset of nodes, corresponding to a subset of concepts (and possibly the task variable). 
Specifically, each subgraph is built as follows: starting from the task node $Y$ (a leaf in the graph $\mathcal{G}$), we repeatedly move \emph{upstream} by following incoming edges and collect multiple ancestor paths up to the roots; we then take the union of these paths (equivalently, the induced ancestor tree of $Y$). To increase heterogeneity across clients, we additionally sample a set of random concepts nodes that are not contained in the set of roots of $\mathcal{G}$ and, for each of them, collect one or more ancestor paths up to the roots; these paths are finally merged with the subgraph built from $Y$.
The task node $Y$ is then randomly excluded from a subset of clients to model missing task supervision. We ensure full concept coverage at $T$ for F-CMs, i.e., the union of the client subgraphs in the final federation considered in our experiments covers all concepts and the task variable of the original dataset.
To simulate imperfect client knowledge of concept relations, we also perturb each $\mathcal{G}^{(k)}$ by modifying a fraction $p$ of its edges (configuration parameter) for a percentage $r$ of clients. Specifically, for these clients, we randomly flip, remove or add edges to their DAGs while preserving acyclicity. The resulting aggregated $\mathcal{G}^{(k)}$ is the graph provided to the server.
Local datasets are then obtained by partitioning $\mathcal{D}$ into $\mathcal{K}$ disjoint subsets of approximately equal size. For client $k$, we retain only the supervision available in its subgraph, i.e., for the concepts and/or the task contained in the set of nodes of the subgraph. All other concept and task are treated as unobserved for that client.

\subsection{Training details}
\label{app:training_details}

\paragraph{Optimization and initialization.}
All models are trained using the Adam optimizer with LeakyReLU activations throughout. Experiments are repeated over different random seeds, affecting data partitioning, client assignment, and model initialization. Newly instantiated layers, including those added during architecture adaptation, use PyTorch's default random initialization, whose weights and biases are zero-centred in expectation and reproducible for each seed. The batch size is fixed to 512 across all datasets and learning regimes. All models use the loss defined in Eq.~\eqref{eq:loss}, with the concept--task trade-off parameter set to $\gamma=0.8$. Key hyperparameters—including learning rate, MLP hidden dimension, and dropout rate—are tuned via grid search on validation sets independently for each dataset and model. Complete hyperparameter configurations for all methods and training regimes are provided in the YAML configuration files released with the code. During training, we apply random concept interventions with probability $0.8$, following the intervention protocol of \citet{espinosa2022}, which encourages robustness to concept-level corrections.

\paragraph{Centralized and localized training.}
For centralized and localized baselines, models are trained for up to 200 epochs with early stopping based on the validation loss, using a patience of 10 epochs without improvement. To ensure a fair comparison with federated methods, we control for the total amount of training data. In the centralized setting, we train on the union of all data available across the federation over time, i.e., pooling the datasets of all clients that appear in $\mathcal{K}(t)$ throughout training. In the localized setting, each model is trained using data from a single client; the number of localized models equals the size of the largest client pool.

\paragraph{Federated training.}
Federated methods are trained for up to 200 communication rounds. At each round, clients perform 2 local epochs before sending updates to the server. Early stopping is applied based on the global validation loss, with a patience of 10 rounds without improvement. Unless otherwise specified (e.g., when varying the number of clients in Section~\ref{app:scaling_clients}), the number of participating clients per round is fixed to $|\mathcal{K}_t|=10$ across all experiments. All federated experiments are conducted under \emph{statistical heterogeneity} and \emph{temporal non-stationarity} (thus resulting in data drift). The client pool $\mathcal{K}(t)$ evolves over time, reaching a total of 20 clients. After an initial warm-up phase—at round $t=10$ for \textsc{Asia}, \textsc{Sachs}, and \textsc{SIIM}, and at $t=20$ for the remaining datasets—new clients join the federation. These late-arriving clients introduce additional concept supervision and, for graph-based instantiations, potentially novel dependency structures. Following client introduction, the client pool is balanced such that approximately 50\% of clients belong to the initial data distribution and 50\% to the new distribution. 

\paragraph{Parameter freezing under partial supervision.}
To enforce \emph{module-specific training} under partial concept/task supervision (Section~\ref{sec:fed_training}), each client updates only the modules it can directly supervise.
Concretely, concept (and task) labels may be missing at a client, which we represent via a missing-label mask (e.g., value $-1$). At the start of local training for client $k$, we identify the supervised variable set $\mathcal{V}^{(k)}$. We then \emph{freeze} all concept modules $h_{\phi,j}$ and task module $f_\psi$ of the variables not contained in $\mathcal{V}^{(k)}$.
This mechanism applies uniformly across instantiations (CBM/CEM/C$^2$BM/CGM): regardless of parameterization, any module without local supervision is held fixed during the client update, while only supervised modules (and the shared encoder) are optimized. As a result, local updates match Eq.~\eqref{eq:loss} by preventing gradients from modifying unsupervised components, and naturally supports module-wise aggregation since clients return updates only for the modules they trained.

\paragraph{Proxy DAG construction and cycle resolution.} \label{parag:DAG_construction_cycle_res}
When a ground-truth DAG is available, we use it directly as the reference global structure. Otherwise, following the general strategy of \citet{de2025}, we construct a \emph{proxy} causal graph from the available concepts and task variable by combining data-driven structure discovery with background knowledge. In particular, we first apply causal discovery to obtain an initial directed graph from the observational data, and then refine it with a retrieval-augmented LLM using domain knowledge sources to improve edge plausibility and orientation. This produces a single global proxy DAG, which is then used as the reference structure for centralized training and as the source from which client-specific partial views are derived. In the federated setting, each client is assumed to observe only an imperfect subgraph of this global structure. The server aggregates the client-provided edges into a single graph considering weights proportional to client dataset sizes and then applies a lightweight post-processing step to ensure acyclicity. Concretely, we run a standard depth-first search to detect back-edges and remove them; this procedure is repeated iteratively until no directed cycle remains. Since graph updates occur only when the shared concept set or structural proposals change, this cycle-resolution step is performed infrequently and adds negligible overhead.

\subsection{Licenses and hardware}
\label{app:code_licenses_hardware}
All experiments were implemented in Python using open-source libraries. The core implementation relies on PyTorch 2.12 \citep{Paszke2019PyTorchAI} (BSD-3-Clause) for model training, Flower 1.29 \citep{beutel2020flower} (Apache 2.0) for the federated learning pipeline, and PyTorch Lightning 2.6 (Apache 2.0) for training orchestration. We further used NumPy 2.4 (BSD-3-Clause), SciPy 1.16 (BSD-3-Clause), Pandas 2.3 \citep{pandas} (BSD-3-Clause), and scikit-learn 1.8 \citep{sklearn_api} (BSD-3-Clause) for data processing and evaluation, and Matplotlib 3.10 \citep{Hunter:2007} (Matplotlib license, BSD-compatible) for visualization. To support the concept and graph-based components of our methodology, we used bnlearn 0.12 (MIT), pgmpy 0.1 (MIT), and causal-learn 0.1.4 (MIT). For the medical imaging experiments, we additionally relied on TorchXRayVision 1.4 (Apache 2.0). The complete codebase and instructions for reproducing all experiments are available on GitHub\footnote{\href{https://github.com/francescoTheSantis/F-CMs}{https://github.com/francescoTheSantis/F-CMs}} under the MIT license.

Experiments were run on a server with four NVIDIA RTX A6000 GPUs (48 GB each), dual AMD EPYC 7513 32-core CPUs, and 512 GB RAM.


\section{Additional experiments}\label{app:add_exps}

\subsection{Concept cccuracy}\label{app:concept_acc}
In Table~\ref{tab:concept_accuracy}, we report concept accuracy as the average accuracy across all concepts (no downstream task), evaluated in the same experimental setting as Section~\ref{sec:taskconcept_acc}. However, since concept coverage has already been reported in Section~\ref{sec:taskconcept_acc}, we report here, for each benchmark, model, and training regime, the concept accuracy computed only over the concepts predicted in each configuration.

Across benchmarks, models, and training regimes, concept accuracy is generally comparable, with only slight differences observed in some localized and static baselines. This is expected, since for each configuration the metric is computed only over the concepts predicted by that configuration. As a result, differences in concept coverage and downstream task performance are not reflected in this measure.
\begin{table}[H]
\centering
\scriptsize
\caption{Concept Accuracy.}

\label{tab:concept_accuracy}
\resizebox{0.8\textwidth}{!}{
\begin{tabular}{llccccccc}
\toprule
\textbf{Setting} & \textbf{Model} & \textbf{Asia} & \textbf{Sachs} & \textbf{Alarm} & \textbf{Insurance} & \textbf{Hailfinder} & \textbf{SIIM Pn.} &
\textbf{CheXpert}\\
\midrule
\multirow{5}{*}{Cent. (Upper Bound)}
 & OpaqNN   & 91.6 $\pm$ 0.4 & 75.3 $\pm$0.5 & 91.3 $\pm$ 0.1 & 79.5 $\pm$ 0.2 & 61.4 $\pm$ 0.3 &74.1 $\pm$ 0.2 & 89.4$\pm$0.2\\
 & CBM    & 91.5 $\pm$ 0.4 & 74.8  $\pm$ 0.7 & 91.0  $\pm$ 0.1 & 79.7  $\pm$ 0.2 & 69.7 $\pm$ 0.1 & 74.1  $\pm$ 0.1 & 89.1$\pm$0.3\\
 & CEM    & 91.3 $\pm$ 0.4 & 73.9 $\pm$ 0.5  &89.9 $\pm$ 0.3 & 77.1 $\pm$ 0.3 & 66.3 $\pm$ 0.3 & 72.8 $\pm$ 0.1 &
 89.0$\pm$0.1\\
 & CGM    & 91.0 $\pm$ 0.5 & 72.3  $\pm$ 0.5  &89.5  $\pm$ 0.2  & 72.7  $\pm$ 0.3 & 63.8  $\pm$ 0.3 & 66.8  $\pm$ 0.8 &
 89.2$\pm$0.1\\
 & C$^2$BM   & 91.3  $\pm$ 0.4 & 74.4  $\pm$ 0.5 &  90.1 $\pm$ 0.4 & 77.5  $\pm$ 0.2 & 65.4  $\pm$ 0.6 & 72.8  $\pm$ 0.2  & 89.1$\pm$0.1\\
\midrule
\multirow{5}{*}{Loc.}
 & OpaqNN & 87.0 $\pm$ 3.4 & 67.9 $\pm$ 0.5 & 85.3 $\pm$ 1.3 & 73.3 $\pm$ 1.2 & 53.0 $\pm$ 1.8  &   67.2 $\pm$ 0.6 &
 87.5$\pm$1.4\\
 & CBM    & 86.8  $\pm$  3.8 &  68.7  $\pm$  0.6 & 87.3  $\pm$  0.9 & 74.4  $\pm$  1.1 & 60.3  $\pm$  1.0 &  66.1 $\pm$ 0.5  &
 88.5$\pm$1.4\\
 & CEM    & 84.5 $\pm$  4.8 & 64.4 $\pm$  1.6 & 85.0 $\pm$  1.3 & 73.6 $\pm$  1.0 & 52.3 $\pm$  3.7   &   67.1 $\pm$  1.4 &
 87.5$\pm$1.4\\
 & CGM    & 85.3  $\pm$ 4.4 & 62.0  $\pm$ 2.0 & 81.7  $\pm$ 1.6 & 64.0  $\pm$ 0.8 & 50.2  $\pm$ 1.7  &  61.7  $\pm$ 0.8  &
 88.6$\pm$1.3\\
 & C$^2$BM   & 86.5 $\pm$ 3.8  &  63.2 $\pm$ 1.7  &  81.9 $\pm$ 1.2  &   70.6 $\pm$ 0.9  &  48.7 $\pm$ 3.2    &    64.5 $\pm$ 1.7 &
 87.5$\pm$1.4\\
\midrule
\multirow{2}{*}{\shortstack{External\\Static FL}}
 & FCL 
& N/A
& N/A
& N/A
& N/A
 & N/A
& N/A
  & N/A \\ 
 & FedCBM  & 65.9  $\pm$ 4.2 &  63.0  $\pm$ 0.7 &  62.7  $\pm$ 0.2 & 66.9  $\pm$ 0.6 & 60.2  $\pm$ 0.2 &  63.6  $\pm$ 0.3 & 86.0$\pm$5.0 \\ 
 \midrule
\multirow{5}{*}{S-F-CMs}
 & OpaqNN & 89.5 $\pm$ 2.8 & 73.8 $\pm$ 0.7 &  89.9 $\pm$ 0.4 & 76.8 $\pm$ 0.9 &  56.8 $\pm$ 0.7   &   62.4 $\pm$ 0.6 &
 86.2$\pm$1.1\\
 & CBM    & 89.8 $\pm$ 2.8 & 74.5 $\pm$ 0.5 &  90.8 $\pm$ 0.3 & 78.3 $\pm$ 1.0 & 67.0 $\pm$ 0.9   &   64.9 $\pm$ 1.4 &
 86.5$\pm$1.1\\
 & CEM    & 89.2 $\pm$ 2.9 & 74.0 $\pm$ 0.4 & 89.6 $\pm$ 0.8 & 76.2 $\pm$ 1.1 & 60.7 $\pm$ 1.2   &  64.8 $\pm$ 0.9 &
 86.0$\pm$1.2\\
 & CGM    & 88.7 $\pm$ 2.9 & 72.3 $\pm$ 0.5 & 88.6 $\pm$ 1.4 & 71.3 $\pm$ 1.0 &  56.6 $\pm$ 0.6   & 62.4 $\pm$ 0.9 &
 86.8$\pm$1.5\\
 & C$^2$BM   & 89.5 $\pm$ 2.8  & 73.2 $\pm$ 0.8  & 88.3 $\pm$ 1.3  & 75.2 $\pm$ 1.0  & 60.0 $\pm$ 0.7  & 64.4 $\pm$ 1.0 &
 85.7$\pm$1.2\\
\midrule
 \multirow{5}{*}{F-CMs}
 & OpaqNN & 91.0 $\pm$ 0.5 & 72.8 $\pm$ 0.6 &  90.5 $\pm$ 0.2 & 77.0 $\pm$ 0.4 & 56.2 $\pm$ 0.3 &  64.8 $\pm$ 0.5 &
 89.1$\pm$0.1\\
 & CBM    & 91.2 $\pm$ 0.4 & 74.2 $\pm$0.6 & 90.9 $\pm$ 0.3 & 78.9 $\pm$ 0.5 & 66.3 $\pm$ 0.3 &  68.7 $\pm$ 0.2 &
 89.2$\pm$0.1\\
 & CEM    & 90.2 $\pm$ 1.0  &  72.9 $\pm$ 0.8  &  90.2 $\pm$ 0.3  &  77.2 $\pm$ 0.4  &  60.4 $\pm$ 0.3  &  66.4 $\pm$ 0.8 &
 89.1$\pm$0.1\\
 & CGM    & 88.7 $\pm$ 0.7 & 68.8 $\pm$ 0.5 & 88.9 $\pm$ 0.6 & 72.3 $\pm$ 0.7 & 55.5 $\pm$ 0.4  &   61.9 $\pm$ 1.2 &
 89.2$\pm$0.2\\
 & C$^2$BM   & 90.7 $\pm$ 0.5 &  71.3 $\pm$ 0.8 & 90.0 $\pm$ 0.2 & 75.8 $\pm$ 0.9 & 59.6 $\pm$ 0.5 & 66.3 $\pm$ 1.6 &
 89.1$\pm$0.1\\

\bottomrule
\end{tabular}
}
\end{table}

\subsection{Concept-level interventions - further experiments}
In this section, we complement the concept-level intervention analysis presented in Sec.~\ref{sec:intervention_acc}. Specifically, under the same experimental setting, we report: 
(i) the \emph{task accuracy} under interventions, instead of label accuracy, of the same models, training regimes, and benchmarks evaluated in Sec.~\ref{sec:intervention_acc} (App.~\ref{app:int_task_acc}); 
(ii) the \emph{task} and \emph{label accuracy} under interventions across models and benchmarks different from the ones evaluated in Sec.~\ref{sec:intervention_acc}  (App.~\ref{app:int_diff_models_and_b}); and 
(iii) a \emph{comparison}, in terms of task and label accuracy under interventions, of the different \emph{F-CM instantiations} (App.~\ref{app:int_diffinstant}).

\subsubsection{Task accuracy}\label{app:int_task_acc}
In Fig.~\ref{fig:task_acc}, we report the \emph{task accuracy} under interventions for the same models, benchmarks, and training regimes considered in Sec.~\ref{sec:intervention_acc}. The results are consistent with the findings discussed in that section for label accuracy.

\begin{figure}[H]
   \centering   \includegraphics[width=0.9\linewidth]{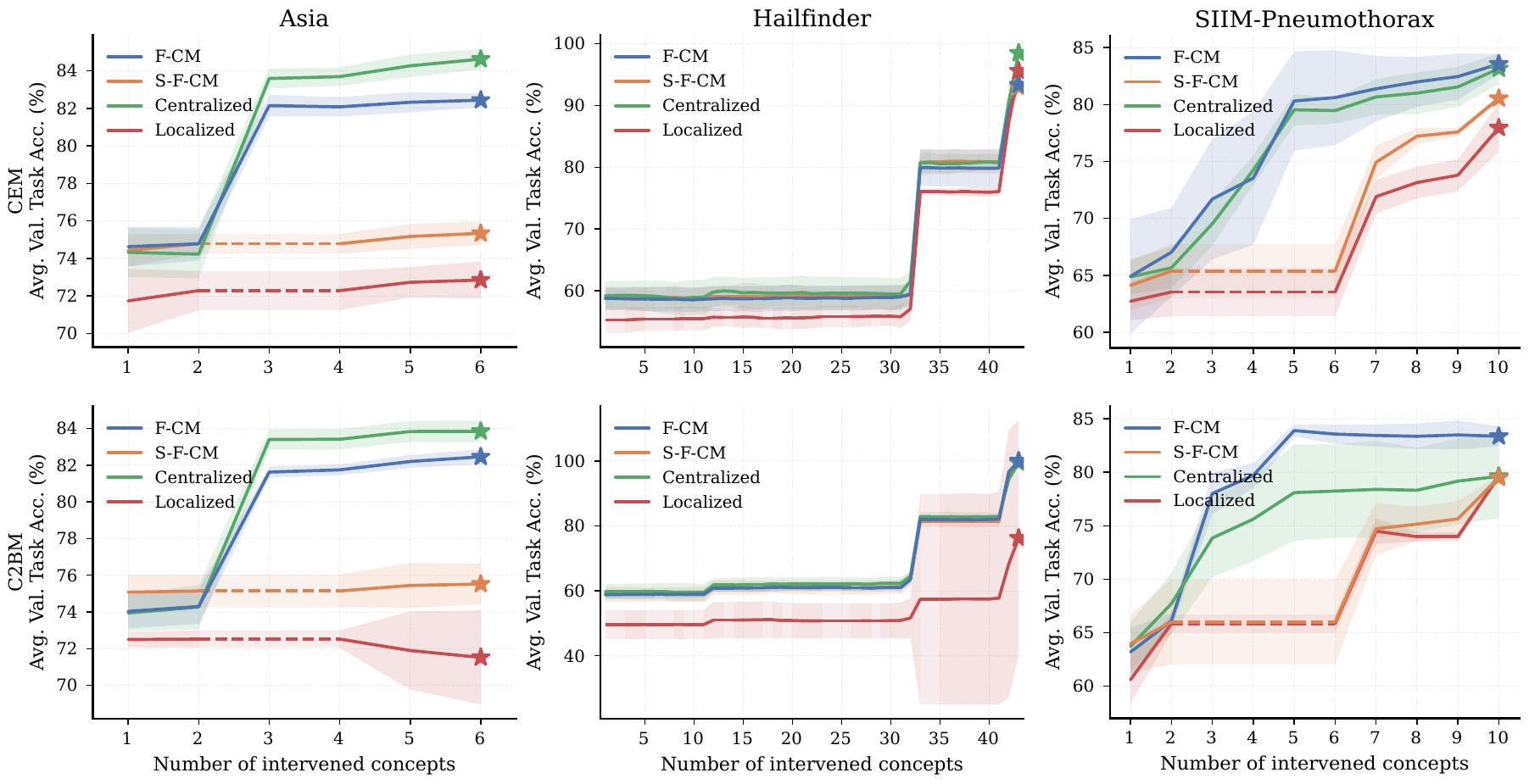}   \caption{ \textbf{Task accuracy} (\%) following interventions on concepts at increasing depths
of the ground-truth graph hierarchy. Dashed lines denote interventions that are not possible for a given training setting due to the absence of
predictions for the corresponding concept. 
}
\label{fig:task_acc}
\end{figure}

\subsubsection{Different Models and Benchmarks}\label{app:int_diff_models_and_b}
In Fig.~\ref{fig:labels_othermodels} and Fig.~\ref{fig:task_acc_othermodels}, we repeated the experiments of Sec.~\ref{sec:intervention_acc} and App.~\ref{app:int_task_acc} using different model architectures. Specifically, we report both label and task accuracy for the F-CM instantiations not previously shown (CBM and CGM), evaluated under the same experimental setup as in Sec. 5.2. The results are consistent with the findings discussed in that section.

For further reference, we reported the same results for the other real-world dataset considered, CheXpert, in Fig.~\ref{fig:labels_chexpert} and Fig.~\ref{fig:task_chexpert}. Again, results are consistent with the findings discussed in Sec.~\ref{sec:intervention_acc}.

\begin{figure}[H]
   \centering   \includegraphics[width=0.9\linewidth]{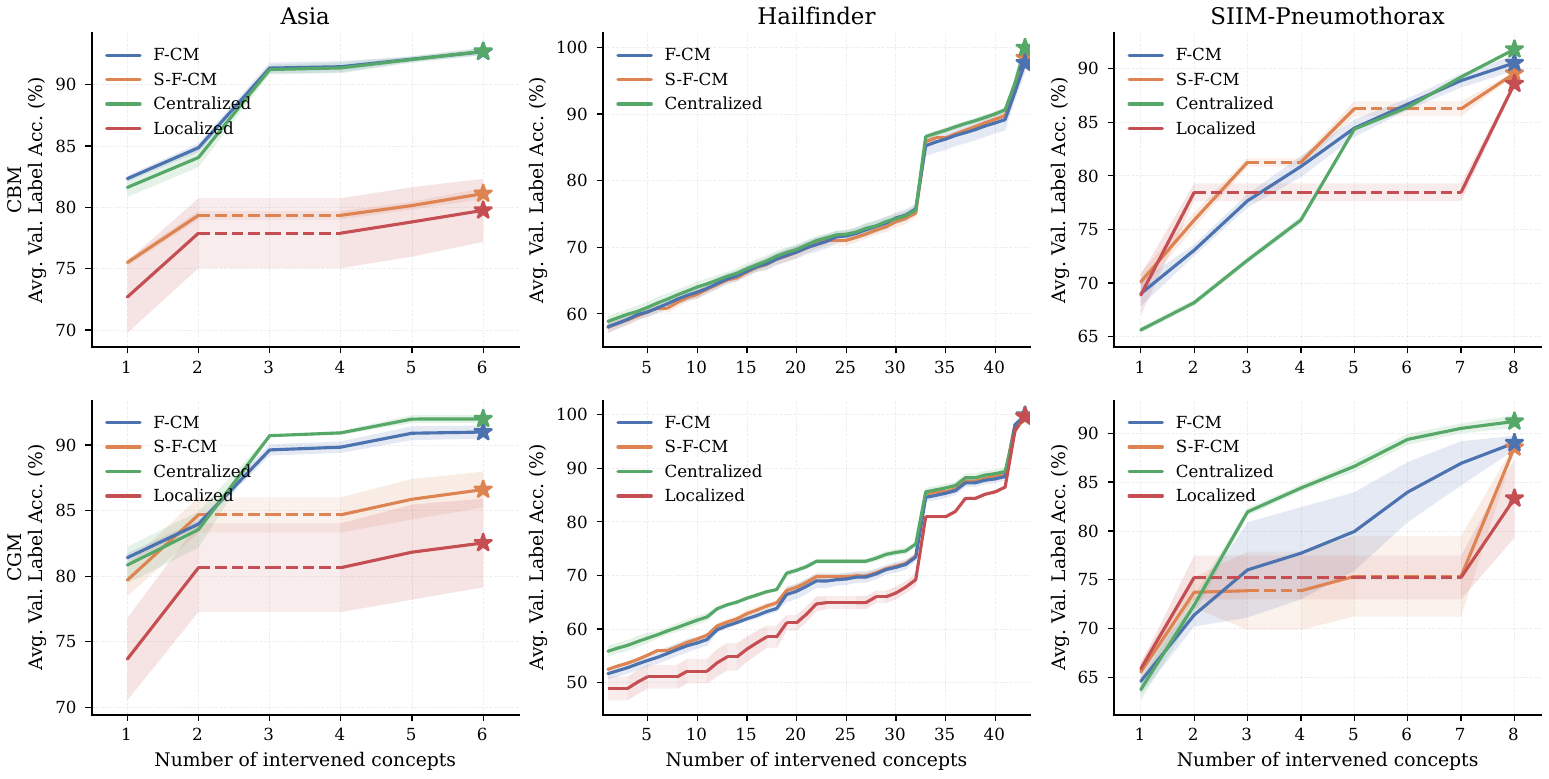}   \caption{\textbf{Label accuracy} (\%) on downstream variables (including the task) following interventions on concepts at increasing depths
of the ground-truth graph hierarchy. Dashed lines denote interventions that are not possible for a given training setting due to the absence of
predictions for the corresponding concept. 
}
\label{fig:labels_othermodels}
\end{figure}

\begin{figure}[H]
   \centering   \includegraphics[width=0.9\linewidth]{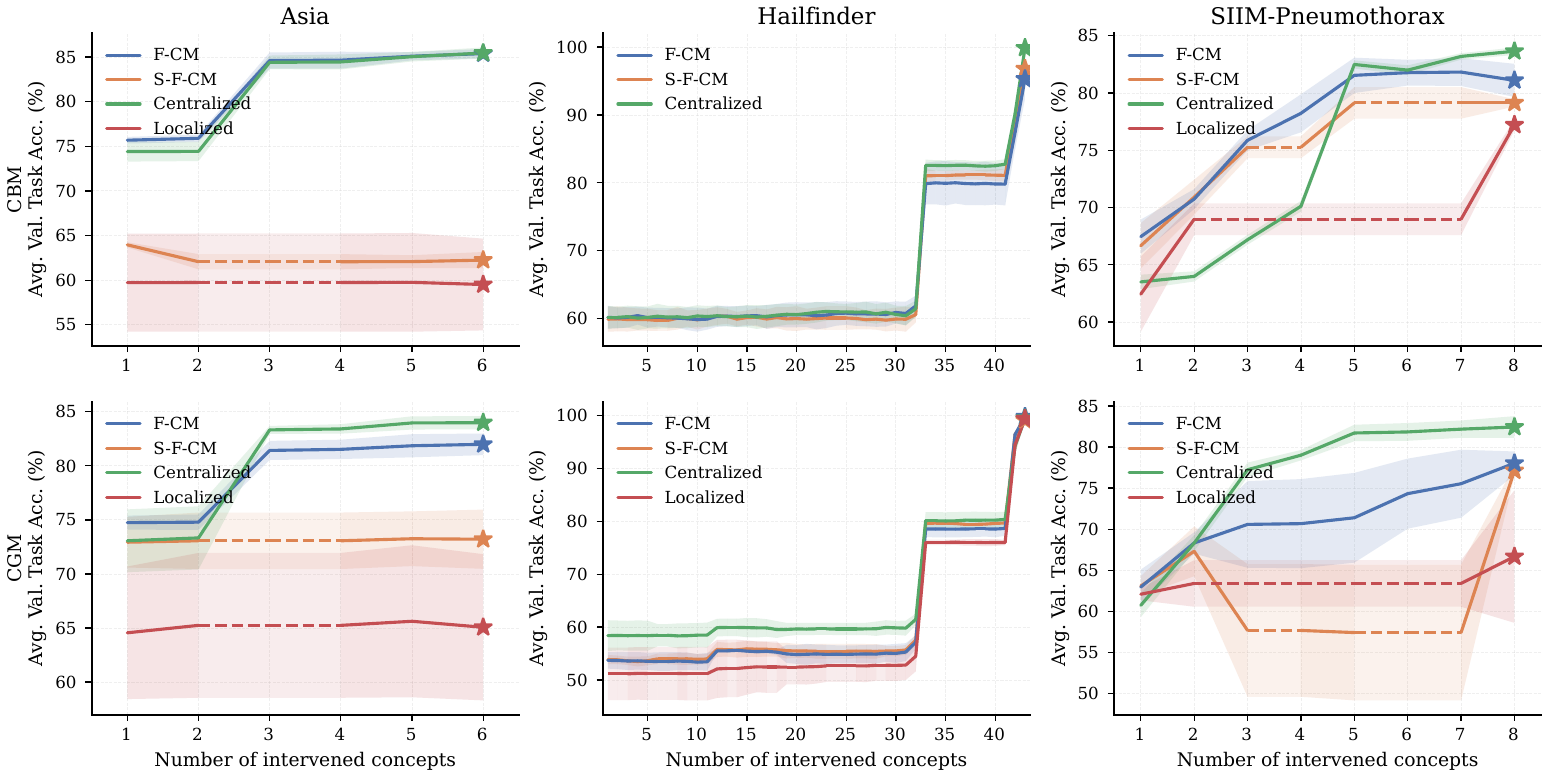}   \caption{\textbf{Task accuracy} (\%) following interventions on concepts at increasing depths of the ground-truth graph hierarchy. Dashed
lines denote interventions that are not possible for a given training setting due to the absence of predictions for the corresponding concept.
}
\label{fig:task_acc_othermodels}
\end{figure}

\begin{figure}
   \centering   \includegraphics[width=0.75\linewidth]{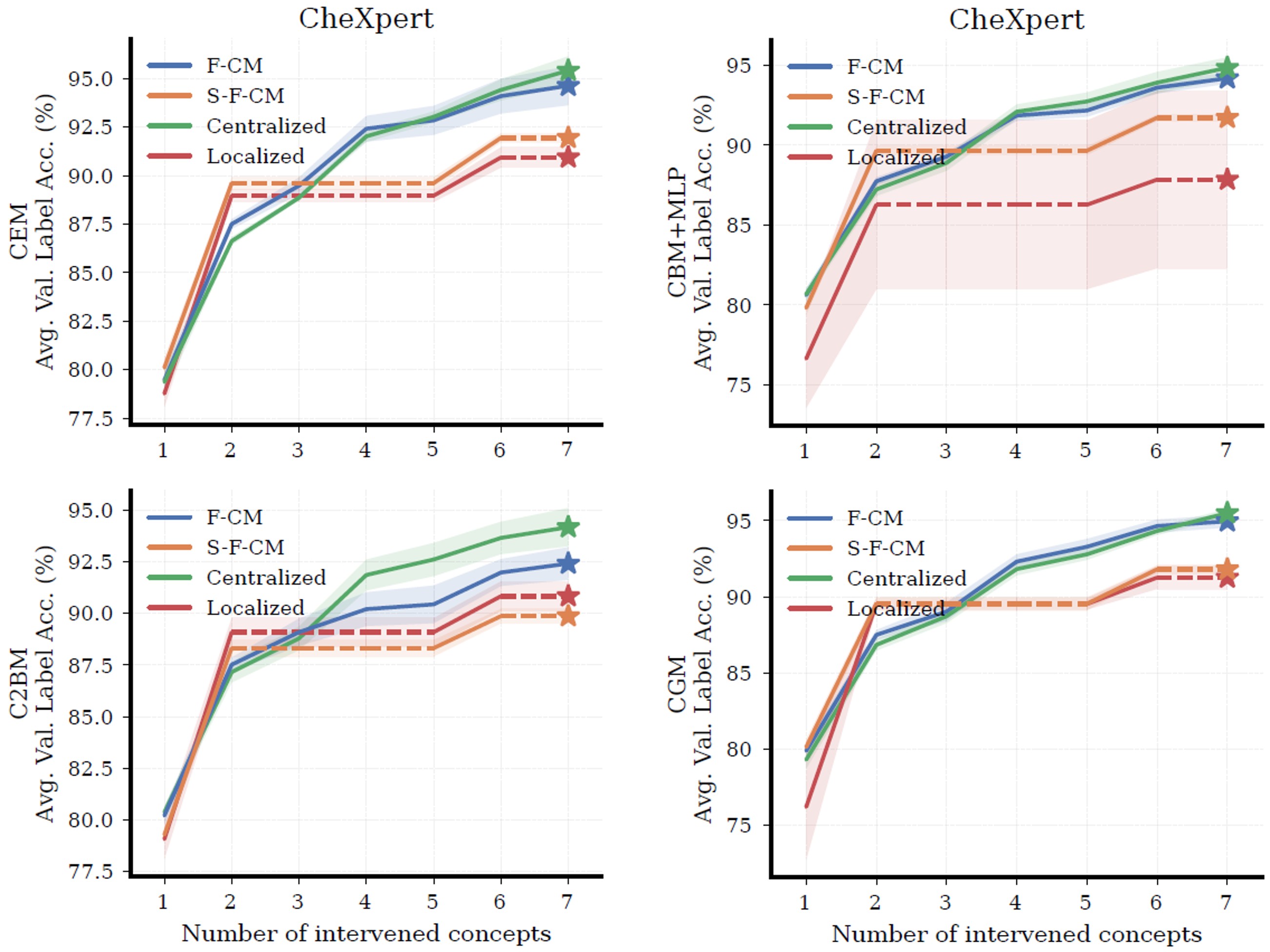}   \caption{\textbf{Label accuracy} (\%) on downstream variables (including the task) following interventions on concepts at increasing depths
of the ground-truth graph hierarchy. Dashed lines denote interventions that are not possible for a given training setting due to the absence of
predictions for the corresponding concept. 
}
\label{fig:labels_chexpert}
\end{figure}

\begin{figure}
   \centering   \includegraphics[width=0.75\linewidth]{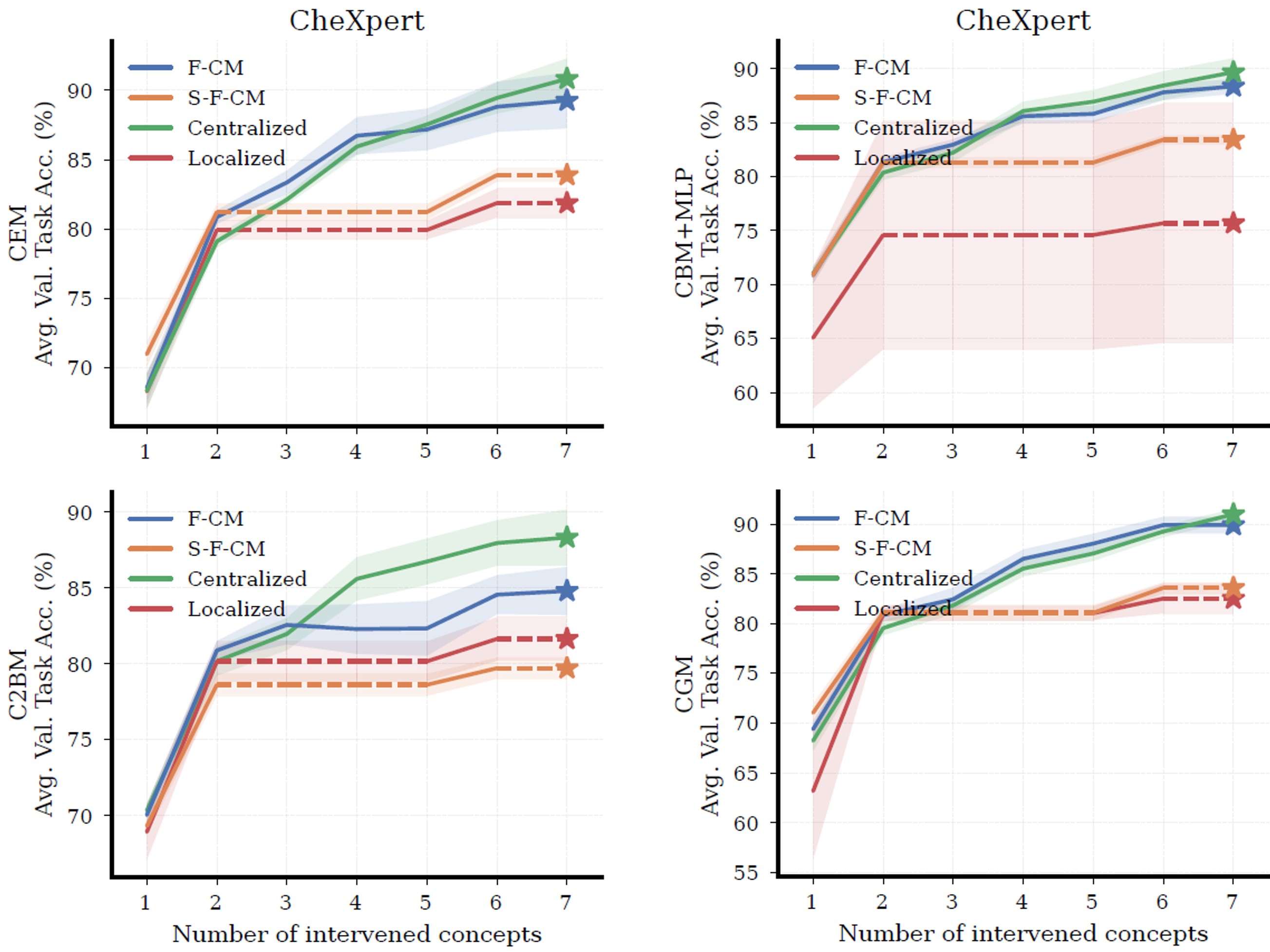}   \caption{\textbf{\textbf{Task accuracy}} (\%) following interventions on concepts at increasing depths of the ground-truth graph hierarchy. Dashed
lines denote interventions that are not possible for a given training setting due to the absence of predictions for the corresponding concept. 
}
\label{fig:task_chexpert}
\end{figure}

\subsubsection{Comparison of F-CMs Instantiations}\label{app:int_diffinstant}
We further compare how different CM instantiations benefit from the proposed federated protocol by evaluating performance under concept-level interventions.
Figures~\ref{fig:comparison_CMs_label}, ~\ref{fig:comparison_CMs_task} and~\ref{fig:comparison_chexpert} report the \emph{label accuracy} and \emph{task accuracy} obtained after intervening on concepts sampled at increasing depths of the ground-truth hierarchy (Sec.~\ref{sec:intervention_acc}). Overall, both metrics exhibit consistent trends: improvements in corrected concept predictions translate into accuracy gains, with larger gains typically obtained when intervening on upstream concepts.

On \textsc{Asia}, the bipartite CBM variant often shows the strongest response to interventions. We attribute this to the small scale and limited structural complexity of \textsc{Asia}: when the concept set is compact and most concepts have short causal paths to the task, explicitly modeling concept dependencies offers limited additional benefit, while the simpler bipartite parameterization can be easier to optimize and less sensitive to graph misspecification. On the larger and more structurally rich datasets (\textsc{Hailfinder}, \textsc{SIIM-Pneumothorax} and \textsc{CheXpert}), graph-based instantiations tend to yield stronger gains under interventions, followed closely by CEMs, which have higher representational flexibility than CBMs due to their ability to encode richer concept representations. 

\begin{figure}[H]
   \centering   \includegraphics[width=0.9\linewidth]{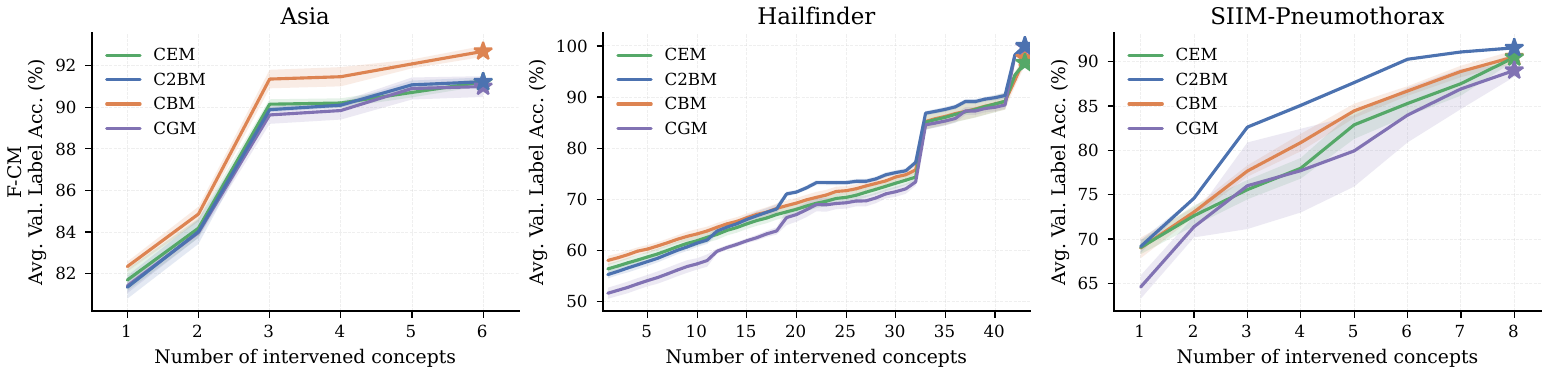}   \caption{\textbf{Label accuracy} (\%) following interventions on concepts at increasing depths of the ground-truth graph hierarchy across CMs instantiations.}
\label{fig:comparison_CMs_label}
\end{figure}

\begin{figure}[H]
   \centering   \includegraphics[width=0.9\linewidth]{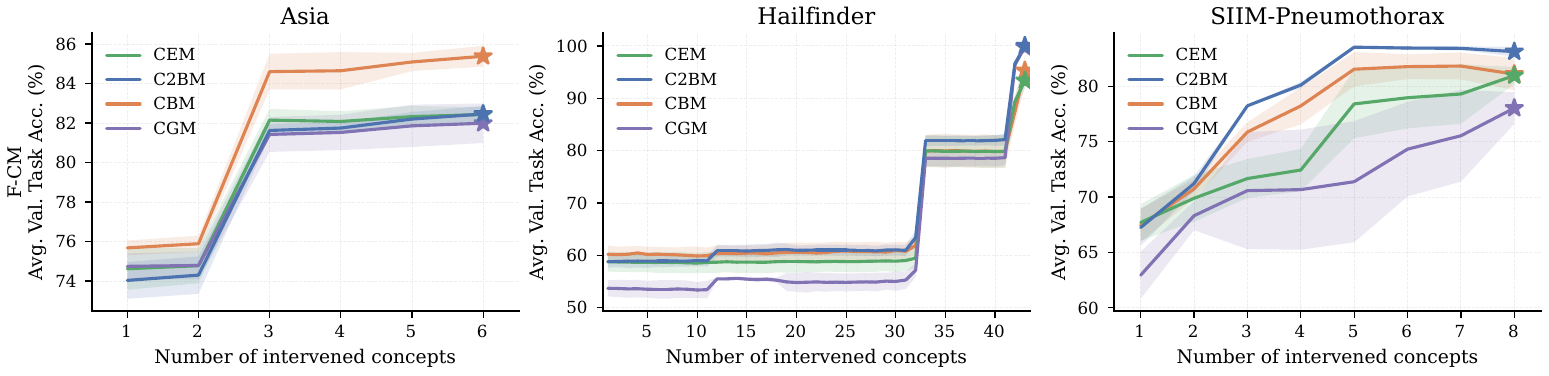}   \caption{\textbf{Task accuracy} (\%) following interventions on concepts at increasing depths of the ground-truth graph hierarchy across CMs instantiations.}
\label{fig:comparison_CMs_task}
\end{figure}

\begin{figure}[H]
   \centering   \includegraphics[width=0.7\linewidth]{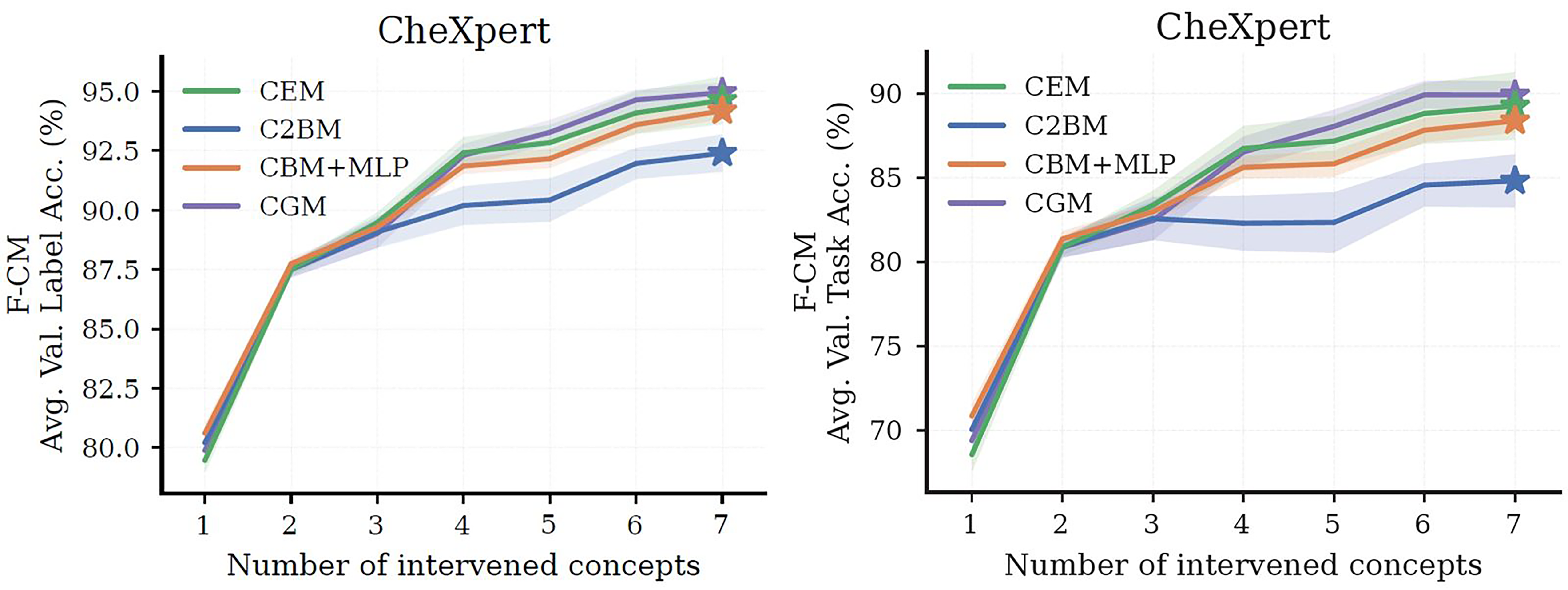}   \caption{\textbf{Label} and \textbf{task accuracy} (\%) on CheXpert following interventions on concepts at increasing depths of the ground-truth graph hierarchy across CMs instantiations.}
\label{fig:comparison_chexpert}
\end{figure}

\subsection{Additional model instances for efficient adaptation in non-stationary federations}
\label{app:convergence_cbm_cgm}
To complement the analysis in Sec.~\ref{sec:exp_trends}, we report the same non-stationary federation experiment for two additional model instantiations, \textsc{CBM} and \textsc{CGM}. As in the main paper, we compare \textbf{F-CMs} (ours), \textbf{S-F-CMs} (ours without architecture adaptation), and \textbf{S-F-CMs Reinit} (full retraining after federation growth), and measure the average validation task loss across communication rounds when newly joined clients introduce previously unseen concepts.

Fig.~\ref{fig:convergence_cbm_cgm} shows trends that are qualitatively consistent with those reported in the main paper for C$^2$BM and CEM. In both additional instantiations, \textbf{F-CMs again adapts more efficiently to federation growth}, reaching lower loss on average than the static baseline and avoiding the slow recovery induced by re-initialization from scratch. Overall, these additional results further support the conclusion that selectively expanding only the modules affected by newly introduced concepts or dependencies yields a substantially more efficient adaptation strategy than either freezing the architecture or fully re-training the model after each structural change.

\begin{figure}[H]
   \centering
   \includegraphics[width=0.85\linewidth]{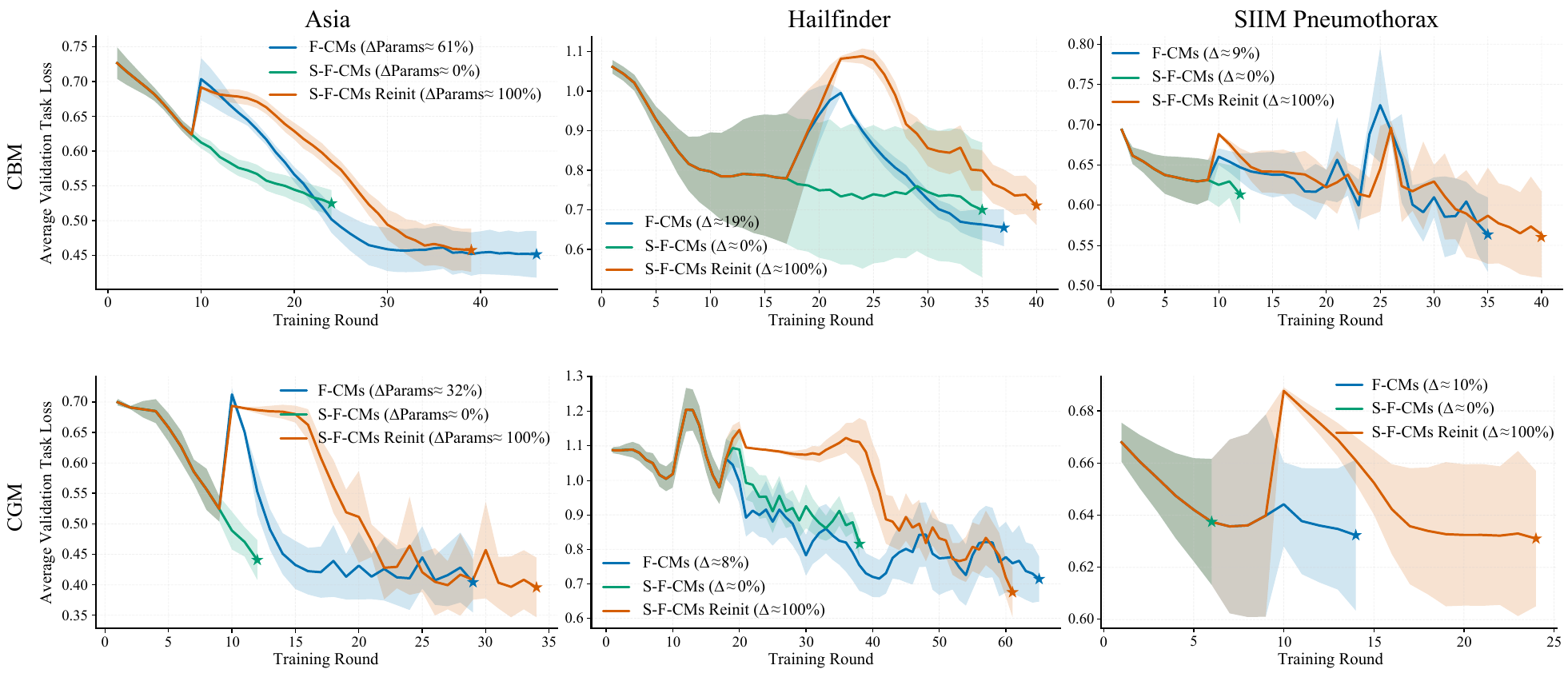}
   \caption{\textbf{Convergence under temporal non-stationarity for additional model instantiations.}
   Average validation task loss across rounds for \textsc{CBM} and \textsc{CGM}, where new clients introduce previously unseen concepts. F-CMs (ours) adapts the architecture modularly and converges faster and to lower loss than S-F-CMs (no adaptation) and S-F-CMs Reinit (full retraining). Curves are truncated at the best loss-round; the omitted tail corresponds to no-improvement rounds. The legend reports the fraction of modified parameters.}
   \label{fig:convergence_cbm_cgm}
\end{figure}

\subsection{Extent of architecture updates under temporal non-stationarity}
\label{app:params_changed}
To quantify the efficiency of \emph{dynamic architecture adaptation} under temporal non-stationarity, we measure the fraction of shared model parameters whose values change after new clients introduce previously unseen concepts (and, for graph-based instantiations, updated dependencies). Concretely, we report the percentage of parameters that differ between the shared model before the arrival of new clients and the final shared model at the end of training, averaged across seeds. We use this metric as a proxy for the \emph{scope of adaptation} and \emph{training compute-efficiency}: smaller values indicate that updates remain localized to a limited set of concept/task modules rather than inducing widespread changes.

Table~\ref{tab:params_changed} reports results across datasets and model instantiations. Overall, the fraction of changed parameters remains far below full retraining (i.e., $\ll 100\%$), supporting our main claim that F-CMs incorporate new supervision by updating only the affected modules while preserving the rest of the model. The magnitude of the update varies across architectures and datasets, reflecting differences in concept dimensionality and the degree to which newly introduced supervision interacts with existing modules (e.g., CEM typically changes a larger fraction than C$^2$BM). These results complement the convergence trends in Fig.~\ref{fig:convergence}: sparser updates are associated with faster recovery after client arrivals, avoiding the slow restart behaviour of retraining from scratch.



\begin{table}[H]
\centering
\scriptsize
\caption{\textbf{Fraction of parameters changed (\%)} due to \emph{temporal non-stationarity}.} 
\label{tab:params_changed}
\resizebox{0.85\textwidth}{!}{
\begin{tabular}{lcccccc}
\toprule
\scriptsize
\textbf{Model} & \textbf{Asia} & \textbf{Sachs} & \textbf{Alarm} & \textbf{Insurance} & \textbf{Hailfinder} & \textbf{SIIM Pn.} \\
\midrule
OpaqNN & 18.28$\pm$2.56 & 38.32$\pm$6.40 & 42.76$\pm$5.82 & 28.90$\pm$4.24 & 17.37$\pm$2.93 & 5.46$\pm$0.67 \\
CBM    & 62.55$\pm$12.93 & 70.00$\pm$15.48 & 48.44$\pm$7.88 & 26.90$\pm$3.58 & 19.24$\pm$4.21 & 9.27$\pm$1.26 \\
CEM    & 61.30$\pm$11.83 & 81.11$\pm$18.54 & 57.30$\pm$8.84 & 30.44$\pm$4.48 & 17.80$\pm$3.13 & 21.61$\pm$3.01 \\
CGM    & 32.19$\pm$3.17 & 43.09$\pm$7.05 & 45.87$\pm$10.04 & 19.68$\pm$3.39 & 8.37$\pm$1.45  & 10.29$\pm$1.33 \\
C$^2$BM & 28.56$\pm$4.54 & 44.82$\pm$8.49 & 31.06$\pm$4.62  & 19.24$\pm$2.78 & 12.73$\pm$2.05 & 8.49$\pm$1.13 \\
\bottomrule
\end{tabular}
}
\end{table}

\subsection{Computational overhead of graph aggregation and modular adaptation}
\label{app:overhead}

We analyze the additional overhead introduced by F-CMs relative to standard static FL. This overhead comes from two sources: \emph{(i)} server-side graph aggregation and cycle resolution, and \emph{(ii)} sparse architecture adaptation when newly arrived clients introduce unseen concepts or dependencies.

\paragraph{Complexity analysis.}
Consider a client $k$ supervising $m_k$ variables. If the client communicates a dense local adjacency/confidence matrix, the structural communication cost is $\mathcal{O}(m_k^2)$, and aggregating the client-provided structures at the server costs $\mathcal{O}\!\left(\sum_{k\in\mathcal{K}_t} m_k^2\right)$. Importantly, this cost is independent of the number of local training samples, local epochs, and SGD steps, and is therefore small relative to the dominant FL cost, namely repeated forward/backward passes over model parameters. For intuition, even with $m_k=50$, a dense graph contains only $2500$ entries; with float32 values this corresponds to roughly 10~KB of transmitted data, which is negligible compared to communicating model updates for networks with millions of parameters, typically requiring several MBs. Moreover, graph communication is \emph{event-triggered}: a client only needs to transmit its local graph when it first joins the federation, or when its graph changes. Hence, this cost is amortized over rounds, and in most rounds the communication pattern is unchanged from standard FL. After combining the client-provided graphs, the server applies the cycle-resolution step described in Appendix \ref{parag:DAG_construction_cycle_res}, which depends only on the number of currently known variables $m_t$ and remains a lightweight server-side operation.

Architecture adaptation is similarly sparse. When the shared structure changes, F-CMs keeps the encoder fixed and expands or rewires only the concept/task modules whose interfaces are affected, while warm-starting all unchanged parameters. Therefore, the adaptation cost scales as $\mathcal{O}(\Delta P_t)$, where $\Delta P_t$ is the number of modified parameters, rather than $\mathcal{O}(P_t)$ for full retraining. This is consistent with the results in Table~\ref{tab:params_changed}: in our experiments, the fraction $\Delta P_t/P_t$ remains well below full reinitialization, explaining the faster recovery observed in Sec.~5.3 and App.~F.3.

For ordinary FL communication, F-CMs already performs module-specific local training and module-wise aggregation, so only supervised modules produce updates. As a result, the uplink communication is proportional to the updated subset rather than the full model. The downlink can remain identical to standard FL when the whole model is broadcast. Overall, compared to static FL, F-CMs introduces only a modest and mostly event-triggered structural overhead while gaining the ability to absorb late-arriving concepts; compared to retraining from scratch, it is substantially cheaper because it avoids re-optimizing the full model.

\paragraph{Empirical validation.}
Table~\ref{tab:overhead} reports the average runtime of the main additional operations introduced by F-CMs. The results confirm that these costs are negligible in practice compared to the overall training time. In particular, graph aggregation requires only a few hundredths of a second when present, while post-drift architecture instantiation remains below one second across all reported settings. Parameter transfer from the pre-drift to the post-drift model is even smaller. By contrast, the total end-to-end training time is in the order of hundreds of seconds, showing that the computational overhead of structure handling and modular adaptation is minimal relative to the dominant cost of repeated local training and communication rounds.

\begin{table}[t]
\centering
\scriptsize
\setlength{\tabcolsep}{3pt}
\caption{\textbf{Average runtime (seconds) of the additional operations introduced by F-CMs}, reported as mean $\pm$ 95\% confidence interval over three runs. Here, $d\_inst.$ denotes the time spent instantiating the architecture after client drift, $d\_par.$ the time spent initializing the post-drift architecture with previously learned parameters, $agg.$ the time spent aggregating client graphs, $n\_rounds$ the number of training rounds, and $tot.$ the total training time. The results show that graph aggregation and modular adaptation contribute only a very small fraction of the total runtime.}
\begin{tabular}{llccccc}
\toprule
\textbf{Data} & \textbf{Model} & \textbf{$d\_inst.$} & \textbf{$d\_par.$} & \textbf{$agg.$} & \textbf{$n\_rounds$} & \textbf{$tot.$} \\
\midrule
\textsc{Asia} & F-CMs (CEM) 
& 0.227 $\pm$ 0.001 & 0.002 $\pm$ 0.000 & -- 
& 42.583 $\pm$ 2.256 & 229.726 $\pm$ 13.609 \\
\textsc{SIIM-Pn.} & F-CMs (CEM) 
& 0.691 $\pm$ 0.093 & 0.003 $\pm$ 0.000 & -- 
& 35.350 $\pm$ 5.188 & 506.230 $\pm$ 75.622 \\
\midrule
\textsc{Asia} & F-CMs (C$^2$BM) 
& 0.244 $\pm$ 0.001 & 0.003 $\pm$ 0.000 & 0.063 $\pm$ 0.051
& 61.500 $\pm$ 3.679 & 370.800 $\pm$ 24.924 \\
\textsc{SIIM-Pn.} & F-CMs (C$^2$BM) 
& 0.672 $\pm$ 0.102 & 0.004 $\pm$ 0.000 & 0.037 $\pm$ 0.003
& 56.550 $\pm$ 7.278 & 922.543 $\pm$ 120.707 \\
\bottomrule
\end{tabular}
\label{tab:overhead}
\end{table}

\subsection{Differential privacy}\label{app:diff_privacy}
To further ensure privacy in FL, we incorporate \emph{differential privacy}~\cite{dp} (DP) to limit the information that can be inferred from communicated client-side statistics. DP provides a formal guarantee that the inclusion or exclusion of any single data sample has a bounded impact on the released information, thereby protecting individual participants against inference attacks. Formally, a randomized mechanism $\mathcal{M} : \mathcal{D} \rightarrow
\mathcal{R}$ satisfies $(\varepsilon,\delta)$-differential privacy if, for any pair of neighboring datasets $D$ and $D'$ that differ in one sample and any measurable set $\mathcal{S} \subseteq \mathcal{R}$,
\begin{equation}
\Pr[\mathcal{M}(D) \in \mathcal{S}]
\;\le\;
e^{\varepsilon}\Pr[\mathcal{M}(D') \in \mathcal{S}] + \delta .
\end{equation}
Here, $\varepsilon$ controls the strength of the privacy guarantee, with smaller values corresponding to stronger privacy protection, while $\delta$ represents a negligible probability of privacy violation.

In practice, we enforce DP at the client update level using a standard DP-SGD pipeline implemented with Opacus~\cite{opacus2022}. At each local optimization step, client gradients are computed at the \emph{per-sample} level and then \emph{clipped} to a maximum $\ell_2$ norm $C$:
\begin{equation}
g_i \leftarrow g_i \cdot \min\Bigl(1,\frac{C}{\|g_i\|_2}\Bigr),
\qquad C=1.0.
\end{equation}
Clipping bounds the contribution of any single sample to the update, which effectively bounds the sensitivity of the released (aggregated) gradient signal. After clipping, Gaussian noise is added to the (mini-batch) aggregated gradient before taking the optimizer step. The noise scale is chosen by the privacy accountant to meet a target privacy budget $(\varepsilon,\delta)$ given the training configuration (sampling rate, number of steps/epochs, etc.). In our experiments we fix $\delta=10^{-5}$ and report results for $\varepsilon\in\{10,5,1\}$, where smaller $\varepsilon$ corresponds to stronger privacy (and typically lower utility).

\paragraph{Empirical Validation.}
Table \ref{tab:dp} reports task, and concept accuracy under DP for different privacy budgets on \textsc{Asia}. As expected, tightening privacy (decreasing $\varepsilon$) leads to a moderate degradation in performance. Notably, across most instantiations the drop from $\varepsilon=10$ to $\varepsilon=5$ is small, and even at $\varepsilon=1$ several models remain competitive (e.g., OpaqNN and CEM exhibit limited degradation), indicating that F-CM-style training remains effective under strong privacy constraints. Larger drops for some architectures (e.g., CBM and CGM at $\varepsilon=1$) are consistent with the stronger noise required at low $\varepsilon$ and the known sensitivity of some training dynamics to DP perturbations.

\begin{table}[htbp]
\centering
\caption{
\textbf{Impact of differential privacy on federated concept-based models under
different privacy budgets $\varepsilon$.}
We report task accuracy and concept accuracy (mean $\pm$ std.) for
representative model instantiations.
}
\scriptsize
\renewcommand{\arraystretch}{1.2}
\setlength{\tabcolsep}{4pt}
\begin{tabular}{lcccccc}
\toprule
\textbf{Privacy Budget} &
\multicolumn{2}{c}{\textbf{$\varepsilon=10$}} &
\multicolumn{2}{c}{\textbf{$\varepsilon=5$}} &
\multicolumn{2}{c}{\textbf{$\varepsilon=1$}} \\
\cmidrule(lr){2-3} \cmidrule(lr){4-5} \cmidrule(lr){6-7}
\textbf{} &
\textbf{Task Acc.} & \textbf{Concept Acc.} &
\textbf{Task Acc.} & \textbf{Concept Acc.} &
\textbf{Task Acc.} & \textbf{Concept Acc.} \\
\midrule
OpaqNN  &
80.03 $\pm$ 0.68 & 90.37 $\pm$ 0.70 &
79.19 $\pm$ 0.97 & 89.43 $\pm$ 1.32 &
77.31 $\pm$ 0.96 & 89.15 $\pm$ 1.07
\\
C$^2$BM &
79.37 $\pm$ 1.01 & 86.89 $\pm$ 2.78 &
79.20 $\pm$ 1.16 & 86.89 $\pm$ 2.63 &
76.22 $\pm$ 2.24 & 85.81 $\pm$ 2.06
\\
CBM     &
79.11 $\pm$ 1.79 & 89.32 $\pm$ 1.46 &
79.03 $\pm$ 1.87 & 89.13 $\pm$ 1.49 &
72.73 $\pm$ 6.52 & 88.07 $\pm$ 2.16
\\
CEM     &
80.19 $\pm$ 1.08 & 86.26 $\pm$ 2.72 &
80.17 $\pm$ 1.63 & 87.26 $\pm$ 2.74 &
78.12 $\pm$ 2.68 & 85.67 $\pm$ 2.47
\\
CGM     &
78.67 $\pm$ 2.39 & 88.59 $\pm$ 1.34 &
77.68 $\pm$ 0.94 & 86.47 $\pm$ 1.90 &
72.00 $\pm$ 2.23 & 85.83 $\pm$ 1.83
\\
\bottomrule
\end{tabular}
\label{tab:dp}
\end{table}


\subsection{Sensitivity analysis}\label{app:sensitivity}
We further assess the stability of F-CMs under key design and deployment choices. In particular, we: (i) analyze the effect of varying the number of clients (App. \ref{app:scaling_clients}), enabling selective parameter freezing for locally unsupervised modules (App. \ref{app:freeze_effect}), and changing the concept-loss weight $\gamma$ (App. \ref{app:gamma_sensitivity}).

\subsubsection{Number of Clients} \label{app:scaling_clients}

We study the effect of scaling the number of clients by varying the total client population $K \in \{10, 20, 50, 100\}$, while keeping the overall amount of data fixed. As $K$ increases, each client holds fewer local samples on average, resulting in smaller and more heterogeneous local datasets. This setting reflects realistic large-scale federated deployments, where data are fragmented across many participants.

Tables~\ref{tab:scaling_clients} reports task accuracy, concept accuracy, and label accuracy for different federation sizes. Across all model instantiations, performance remains stable as the number of clients increases, with only moderate degradation in accuracy for the largest federation sizes. In particular, concept accuracy exhibits limited sensitivity to increasing $K$, indicating that aggregating concept supervision from a larger number of smaller clients remains effective.

Architectures with richer concept representations (e.g., CEM and CBM) show the strongest robustness to client scaling, while models relying on more structured dependencies (e.g., C$^2$BM and CGM) experience slightly larger variance as $K$ grows. This behavior is consistent with the increased statistical heterogeneity and reduced local sample sizes associated with larger client populations. Overall, these results suggest that F-CMs scale gracefully with the number of clients, preserving both concept quality and
downstream task performance even in highly fragmented federations.

\begin{table}[htbp]
\centering
\caption{
\textbf{Scaling with the number of clients.}
Task accuracy and concept accuracy (mean $\pm$ std.) as the total number of
clients $K$ increases, while keeping the overall amount of data fixed.
}
\scriptsize
\renewcommand{\arraystretch}{1.2}
\setlength{\tabcolsep}{3pt}
\begin{tabular}{lcccccccc}
\toprule
\textbf{\# Clients} &
\multicolumn{2}{c}{\textbf{$K=10$}} &
\multicolumn{2}{c}{\textbf{$K=20$}} &
\multicolumn{2}{c}{\textbf{$K=50$}} &
\multicolumn{2}{c}{\textbf{$K=100$}} \\
 \cmidrule(lr){2-3} \cmidrule(lr){4-5} \cmidrule(lr){6-7} \cmidrule(lr){8-9}
 \textbf{Models}
& \textbf{Task Acc.} & \textbf{Concept Acc.}
& \textbf{Task Acc.} & \textbf{Concept Acc.}
& \textbf{Task Acc.} & \textbf{Concept Acc.}
& \textbf{Task Acc.} & \textbf{Concept Acc.} \\
\midrule
OpaqNN  &
80.61 $\pm$ 1.98 & 91.63 $\pm$ 0.70 &
79.83 $\pm$ 0.99 & 90.80 $\pm$ 0.40 &
80.37 $\pm$ 1.06 & 90.03 $\pm$ 0.99 &
79.94 $\pm$ 1.15 & 90.30 $\pm$ 0.82
\\
C$^2$BM &
80.14 $\pm$ 2.02 & 91.23 $\pm$ 0.81 &
78.16 $\pm$ 3.11 & 87.81 $\pm$ 2.32 &
80.35 $\pm$ 0.72 & 86.44 $\pm$ 2.17 &
75.38 $\pm$ 3.98 & 86.10 $\pm$ 2.42
\\
CBM &
79.91 $\pm$ 1.98 & 91.52 $\pm$ 0.83 &
79.25 $\pm$ 1.12 & 91.09 $\pm$ 0.61 &
79.54 $\pm$ 1.02 & 90.94 $\pm$ 0.50 &
78.32 $\pm$ 1.33 & 90.54 $\pm$ 0.55
\\
CEM     &
80.58 $\pm$ 2.04 & 91.40 $\pm$ 0.72 &
80.58 $\pm$ 1.11 & 88.59 $\pm$ 1.63 &
79.92 $\pm$ 1.31 & 87.92 $\pm$ 2.01 &
78.80 $\pm$ 1.80 & 86.19 $\pm$ 1.91
\\
CGM     &
79.33 $\pm$ 2.20 & 91.05 $\pm$ 0.89 &
79.30 $\pm$ 1.74 & 87.76 $\pm$ 1.83 &
78.96 $\pm$ 1.77 & 85.86 $\pm$ 2.10 &
78.84 $\pm$ 1.85 & 86.41 $\pm$ 2.14
\\
\bottomrule
\end{tabular}
\label{tab:scaling_clients}
\end{table}

\subsubsection{Effect of Selective Parameter Freezing}
\label{app:freeze_effect}
We evaluate the empirical impact of \emph{selective parameter freezing}, i.e., freezing concept/task modules that lack local supervision at a client (Sec.~\ref{sec:fed_training}). This mechanism enforces module-specific training by ensuring that only supervised components receive gradients, while unsupervised modules remain unchanged. Tables~\ref{tab:freeze_1}--\ref{tab:freeze_2} show task and concept accuracy with and without freezing for two representative instantiations (C$^2$BM and CEM). Overall, freezing is largely \emph{performance-neutral}: on smaller graphs such as \textsc{Asia}, the effect is minimal since only a few modules are unsupervised at each client. However, on larger benchmarks (e.g., \textsc{Alarm}, \textsc{Hailfinder}, \textsc{SIIM}), where the model contains more concept modules and parent-conditioned dependencies, freezing can slightly improve stability by preventing updates to poorly-supported components and reducing update noise. Beyond accuracy, freezing provides a practical efficiency benefit at the client side, as it reduces the number of trainable parameters during local optimization (lower gradient/optimizer-state footprint and compute), while remaining fully consistent with our module-wise aggregation protocol.
\begin{table*}[t]
\centering
\caption{
\textbf{Effect of selective parameter freezing (Asia/Sachs/Alarm).}
Task and concept accuracy (mean $\pm$ std.) across 5 seeds, comparing standard training (No) vs.\ selective freezing of unsupervised modules (Yes).
}
\scriptsize
\renewcommand{\arraystretch}{1.15}
\setlength{\tabcolsep}{3.5pt}
\begin{tabular}{llcccccc}
\toprule
\textbf{Model} & \textbf{Freezing} &
\multicolumn{2}{c}{\textbf{Asia}} &
\multicolumn{2}{c}{\textbf{Sachs}} &
\multicolumn{2}{c}{\textbf{Alarm}} \\
\cmidrule(lr){3-4}\cmidrule(lr){5-6}\cmidrule(lr){7-8}
& &
\textbf{Task Acc.} & \textbf{Concept Acc.} &
\textbf{Task Acc.} & \textbf{Concept Acc.} &
\textbf{Task Acc.} & \textbf{Concept Acc.} \\
\midrule
\multirow{2}{*}{C$^2$BM} & No  &
80.35 $\pm$ 2.16 & 90.55 $\pm$ 0.95 &
76.20 $\pm$ 2.08 & 72.22 $\pm$ 1.68 &
73.14 $\pm$ 0.78 & 90.23 $\pm$ 0.30 \\
& Yes &
80.52 $\pm$ 1.00 & 90.47 $\pm$ 0.58 &
76.55 $\pm$ 0.98 & 72.73 $\pm$ 0.71 &
72.60 $\pm$ 0.57 & 89.12 $\pm$ 0.38 \\
\midrule
\multirow{2}{*}{CEM} & No  &
80.50 $\pm$ 1.66 & 89.77 $\pm$ 2.04 &
75.97 $\pm$ 2.56 & 73.03 $\pm$ 1.36 &
73.42 $\pm$ 0.44 & 90.10 $\pm$ 0.55 \\
& Yes &
80.27 $\pm$ 0.89 & 90.07 $\pm$ 0.86 &
76.43 $\pm$ 1.16 & 73.33 $\pm$ 0.63 &
72.70 $\pm$ 0.22 & 89.80 $\pm$ 0.32 \\
\bottomrule
\end{tabular}
\label{tab:freeze_1}
\end{table*}

\begin{table}
\centering
\caption{
\textbf{Effect of selective parameter freezing (Insurance/Hailfinder/SIIM).}
Task and concept accuracy (mean $\pm$ std.) across 5 seeds, comparing standard training (No) vs.\ selective freezing of unsupervised modules (Yes).
}
\scriptsize
\renewcommand{\arraystretch}{1.15}
\setlength{\tabcolsep}{3.5pt}
\begin{tabular}{llcccccc}
\toprule
\textbf{Model} & \textbf{Freezing} &
\multicolumn{2}{c}{\textbf{Insurance}} &
\multicolumn{2}{c}{\textbf{Hailfinder}} &
\multicolumn{2}{c}{\textbf{SIIM Pn.}} \\
\cmidrule(lr){3-4}\cmidrule(lr){5-6}\cmidrule(lr){7-8}
& &
\textbf{Task Acc.} & \textbf{Concept Acc.} &
\textbf{Task Acc.} & \textbf{Concept Acc.} &
\textbf{Task Acc.} & \textbf{Concept Acc.} \\
\midrule
\multirow{2}{*}{C$^2$BM} & No  &
74.27 $\pm$ 2.34 & 76.08 $\pm$ 1.20 &
72.83 $\pm$ 2.06 & 57.66 $\pm$ 1.41 &
68.80 $\pm$ 2.51 & 67.68 $\pm$ 1.32 \\
& Yes &
74.10 $\pm$ 1.35 & 76.00 $\pm$ 0.62 &
71.70 $\pm$ 0.81 & 56.46 $\pm$ 0.62 &
65.90 $\pm$ 1.34 & 64.91 $\pm$ 1.37 \\
\midrule
\multirow{2}{*}{CEM} & No  &
70.97 $\pm$ 0.75 & 77.16 $\pm$ 0.70 &
70.15 $\pm$ 3.82 & 60.13 $\pm$ 1.25 &
67.33 $\pm$ 2.68 & 64.90 $\pm$ 1.57 \\
& Yes &
71.30 $\pm$ 1.01 & 76.82 $\pm$ 0.62 &
70.20 $\pm$ 1.89 & 59.30 $\pm$ 0.68 &
66.91 $\pm$ 1.30 & 65.88 $\pm$ 1.45 \\
\bottomrule
\end{tabular}
\label{tab:freeze_2}
\end{table}

\subsubsection{Sensitivity to the concept loss weight $\gamma$} \label{app:gamma_sensitivity}
We evaluate the sensitivity of our framework to the concept-loss weight $\gamma$, which controls the trade-off between downstream task supervision and concept supervision during local optimization. We vary $\gamma \in \{0.2, 0.5, 0.8\}$ and report both task accuracy and concept accuracy for two representative instantiations, C$^2$BM and CEM, on three representative datasets with different structural and concept-space complexity: \textsc{Asia}, \textsc{Alarm}, and \textsc{SIIM-Pneumothorax} (Figure~\ref{fig:gamma_ablation}).

Overall, the results show that performance is largely robust to the choice of $\gamma$. On \textsc{Asia} and \textsc{Alarm}, both task accuracy and concept accuracy remain nearly unchanged across the tested values for both model instantiations, indicating that the framework is not particularly sensitive to the exact balance between task and concept objectives in these settings. This suggests that the optimization remains stable even when concept supervision is given substantially different weights. A more visible effect appears on \textsc{SIIM-Pneumothorax}, which has a larger and more challenging concept space. There, increasing $\gamma$ leads to a mild improvement in concept accuracy, particularly for C$^2$BM, while task accuracy exhibits a corresponding small decrease. This trend is expected: assigning more weight to the concept loss encourages the model to better align with concept supervision, at the cost of slightly reduced emphasis on the end task. Importantly, even in this more demanding setting, the variation remains moderate, confirming that our method does not rely on careful tuning of $\gamma$ to achieve strong performance.
\begin{figure}[t]
   \centering
   \includegraphics[width=0.9\linewidth]{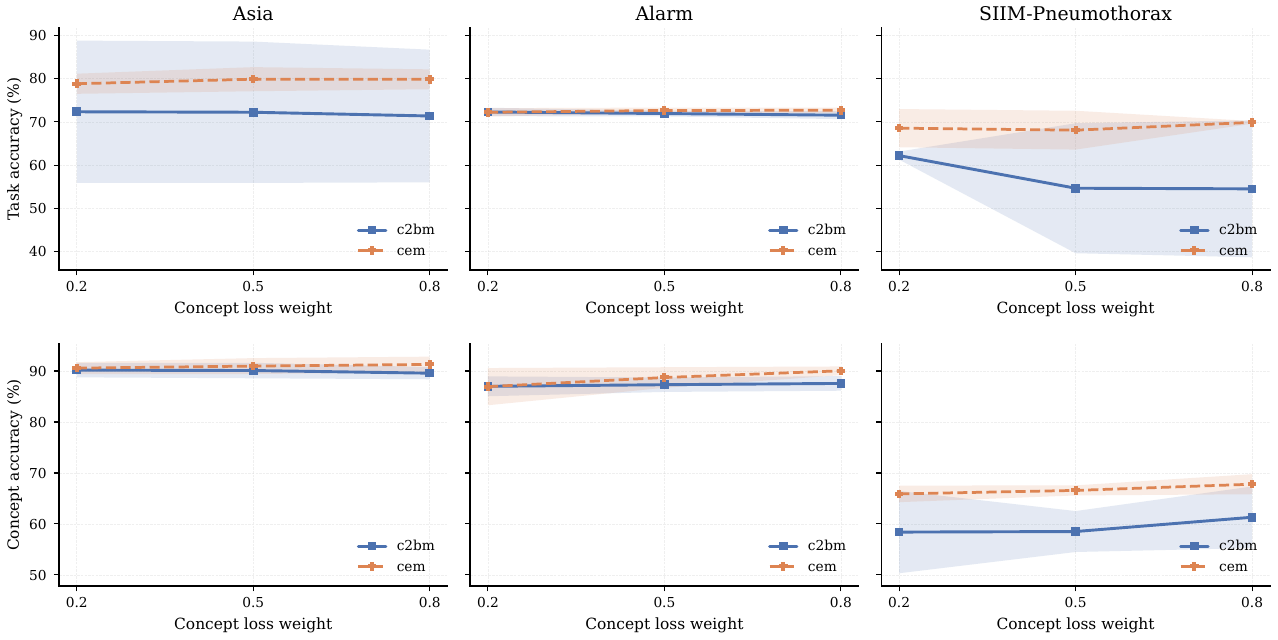}
   \caption{\textbf{Sensitivity analysis on the concept loss weight $\gamma$.}
   \emph{Task accuracy} (\%, top row) and \emph{concept accuracy} (\%, bottom row) as a function of $\gamma$ for the C$^2$BM and CEM instantiations on \textsc{Asia}, \textsc{Alarm}, and \textsc{SIIM-Pneumothorax}, averaged over three runs. Performance is largely stable across values of $\gamma$. On \textsc{SIIM-Pneumothorax}, increasing $\gamma$ yields a slight improvement in concept accuracy, accompanied by a small decrease in task accuracy, reflecting the expected trade-off between concept fidelity and downstream prediction.}
   \label{fig:gamma_ablation}
\end{figure}


\subsection{Robustness to inter-client semantic disagreement}
\label{app:semantic_disagreement}
To assess whether F-CMs remain reliable when different clients attach partially inconsistent meanings to the same concepts, we conducted an additional ablation simulating \emph{inter-client semantic disagreement}. The goal is to model scenarios in which the same concept label does not correspond to exactly the same underlying semantics across clients, e.g., due to annotation inconsistencies, protocol differences, or distributional shifts. Concretely, for each experiment we first select a subset of concepts and, for each selected concept, a subset of clients. Within those selected client--concept pairs, we then swap a fraction of the corresponding concept labels. In this way, the same concept may acquire conflicting label semantics across clients, while leaving the rest of the supervision unchanged. We consider two disagreement regimes: a \emph{mild} setting, where 30\% of concepts and 30\% of clients are selected, and a \emph{severe} setting, where 60\% of concepts and 60\% of clients are selected. For each regime, we vary the concept-swap probability in $\{30\%,60\%,90\%,100\%\}$.

Figure~\ref{fig:swap_concepts} reports label accuracy for two representative F-CM instantiations, the bipartite \textsc{F-CMs (CEM)} and the graph-based \textsc{F-CMs (C$^2$BM)}, on \textsc{Asia} and \textsc{SIIM-Pneumothorax}. Under mild disagreement, both variants remain highly stable: performance changes only marginally even as the swap probability increases, indicating that aggregation across the remaining consistent clients is sufficient to absorb limited semantic mismatch. Under severe disagreement, performance degrades more visibly as the swap probability grows, with the decline being particularly pronounced on the smaller \textsc{Asia} benchmark. On \textsc{SIIM-Pneumothorax}, the degradation is smoother, suggesting that larger or more redundant datasets provide some resilience even when semantic conflicts become widespread.

Overall, these results indicate that F-CMs are robust to \emph{moderate} inter-client semantic disagreement, but that widespread and systematic concept mismatch eventually harms downstream prediction, as expected. This highlights an important practical direction for future work: detecting client-level semantic inconsistencies before aggregation. Since F-CMs operate modularly, they naturally expose module-level update patterns that could be used to identify semantic or protocol mismatches and trigger targeted diagnostics or filtering.

\begin{figure}[t]
   \centering
   \includegraphics[width=0.8\linewidth]{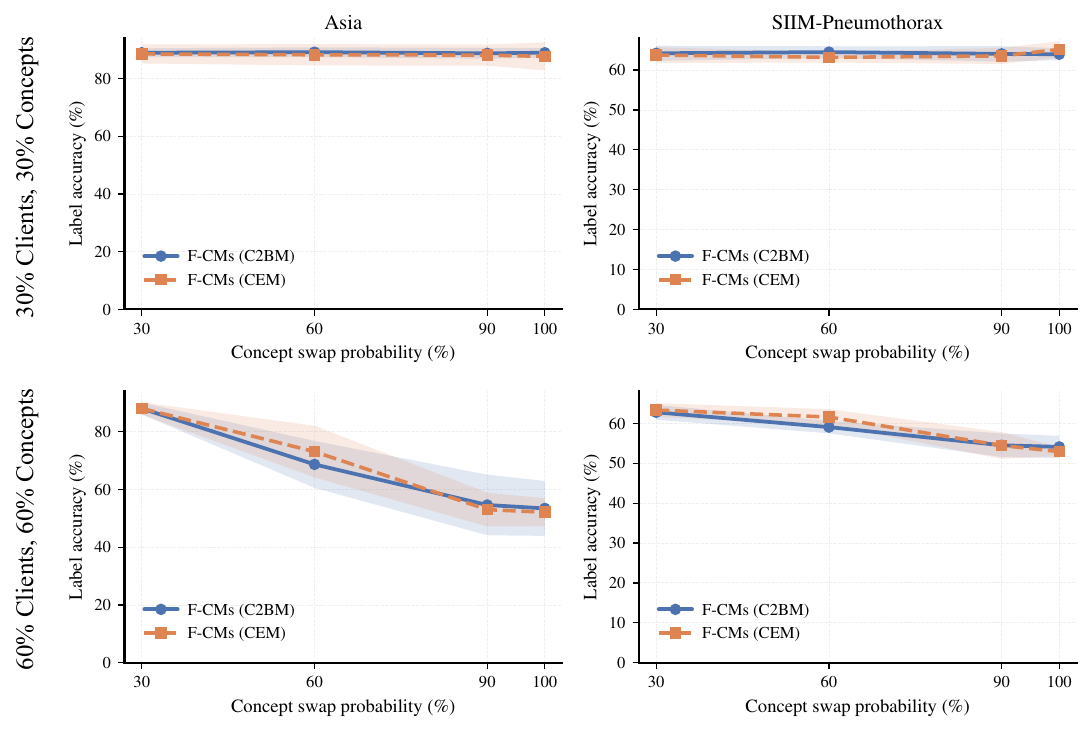}
   \caption{\textbf{Label accuracy (\%) under inter-client semantic disagreement.} Results for two representative F-CM instantiations, the bipartite \textsc{F-CMs (CEM)} and the graph-based \textsc{F-CMs (C$^2$BM)}, under increasing fractions of within-client concept-label swaps. Each row corresponds to a disagreement regime: 30\% of concepts and 30\% of clients selected (top), or 60\% of concepts and 60\% of clients selected (bottom). For each selected concept, the corresponding fraction of clients is chosen, and within them a fraction of the concept labels (x-axis: 30\%, 60\%, 90\%, 100\%) is swapped, simulating conflicting concept semantics across clients. Each column corresponds to a different dataset.}
   \label{fig:swap_concepts}
\end{figure}

\subsection{Robustness to graph perturbations}
\label{app:robust_graph_perturb}
In this section, we evaluate the sensitivity of the DAG aggregation procedure (Sec.~\ref{sec:graph_agg}) to noise in client-provided graphs, which can arise both from limited local data (estimation error) and from adversarial behavior (e.g., poisoning the communicated structure) \cite{sunDataPoisoning2022, fenoglioFBP2024}. To reduce variance due to a specific client partition, we run this experiment with a large federation and high per-round participation. Concretely, training follows the same temporal non-stationarity setup as in the main paper, but with $|\mathcal{K}_t|=100$ participating clients per round. The client pool is time-varying: during a warm-up phase the federation contains $|\mathcal{K}(t)|=100$ clients, and at the predefined join round (as in our standard protocol, depending on the dataset) an additional cohort joins, expanding the pool to $|\mathcal{K}(t)|=200$. Late-arriving clients may introduce new concepts and associated edges, requiring the aggregated shared structure to expand accordingly. 

Each participating client provides a local graph (binary adjacency matrix) over its observed variables. We inject noise by randomly deleting, adding, or reversing edges, while enforcing acyclicity. We vary two factors: \emph{(\% client alteration)}, the fraction of clients whose graphs are corrupted, and \emph{(graph alteration)}, the fraction of edges modified within each corrupted client graph. Aggregation quality is measured by \emph{DiffPairs}, i.e., the number of node pairs whose relative ordering differs between the aggregated shared DAG and the reference DAG (lower is better). Figure~\ref{fig:graph_robustness} reports results for the federated regimes (F-CMs and S-F-CMs) and includes localized training (Loc.) as a reference.

\paragraph{Empirical Validation (Graph Quality).}
Overall, the proposed server-side graph aggregation is robust to substantial client-side noise. In particular, for moderate perturbations (e.g., 30\% graph alteration), the aggregated structure remains essentially unchanged across the full range of \% client alteration: DiffPairs stays nearly constant for both F-CMs and S-F-CMs. This indicates that the aggregation effectively denoises inconsistent local proposals when the per-client corruption level is limited. In contrast, the Localized baseline exhibits a markedly different behavior: for a fixed graph alteration level, DiffPairs increases approximately linearly with \% client alteration, reflecting the lack of cross-client averaging and the direct propagation of local graph errors to the deployed structure.

At higher corruption levels, DiffPairs increases as expected. For 60\% graph alteration, degradation becomes visible once a large majority of clients are corrupted (typically around 60--70\% corrupted clients across datasets). For 90\% graph alteration, the effect appears earlier, becoming pronounced already around 50\% corrupted clients. Finally, the gap between F-CMs and S-F-CMs is primarily explained by \emph{structure evolution}: S-F-CMs does not update the shared graph when late-arriving clients introduce new concepts and edges, leading to an increasingly partial global view, whereas F-CMs continuously integrates new proposals and maintains coverage of the evolving federation.
\begin{figure}
   \centering   \includegraphics[width=\linewidth]{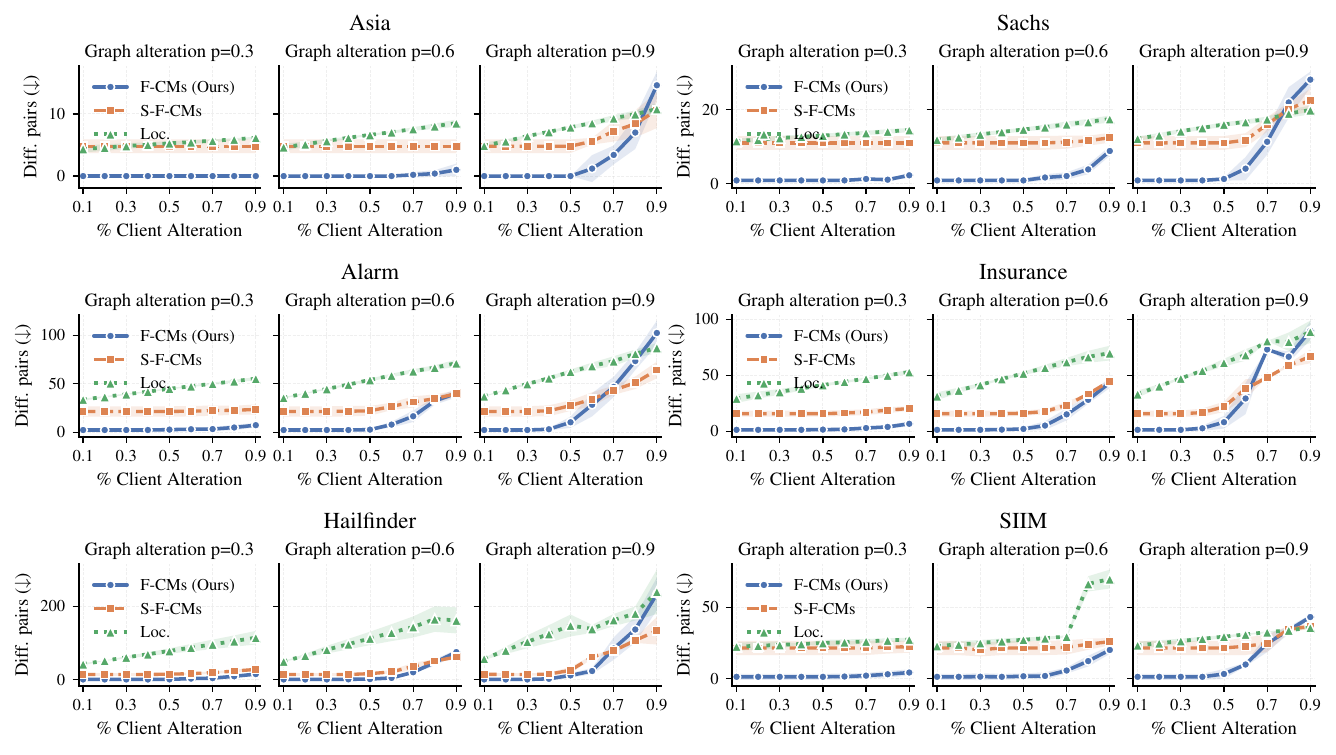}   \caption{\textbf{Robustness of DAG aggregation to client graph perturbations.} DiffPairs (lower is better) between the aggregated DAG and the ground-truth DAG as a function of \% client alteration and per-client graph alteration.}
\label{fig:graph_robustness}
\end{figure}

\paragraph{Empirical Validation (Downstream Accuracy).}
To directly assess whether structural errors in the aggregated graph translate into degraded downstream performance, we ran an additional experiment under the same perturbation protocol and measured the final task accuracy of C$^2$BM, used here as a representative graph-based instantiation. Table~\ref{tab:graph_robustness_task_acc} reports results for the setting with 60\% graph alteration as the fraction of corrupted clients increases.

The results show that downstream performance is substantially more robust than graph quality alone might suggest. On \textsc{SIIM-Pneumothorax}, F-CMs changes only from $68.80\pm2.51$ at 0\% corrupted clients to $67.06\pm2.90$ at 90\% corrupted clients, despite the visible degradation in DiffPairs at high corruption levels. On \textsc{Asia}, task accuracy remains essentially unchanged around 80\% across the full corruption range. S-F-CMs is also roughly flat, but at a substantially lower absolute accuracy level, consistent with the static-structure limitation discussed above. Overall, these results indicate that increases in DiffPairs do not imply comparable drops in downstream task accuracy: F-CMs remains robust to moderate structural misspecification, and graph degradation becomes noticeable earlier than any meaningful loss in predictive performance.
\begin{table}[t]
\centering
\caption{
\textbf{Downstream robustness to graph misspecification.}
Task accuracy (\%, mean $\pm$ std.) for C$^2$BM under the same perturbation protocol as Figure~\ref{fig:graph_robustness}, with \emph{graph alteration} fixed at 60\% and varying fractions of corrupted clients. Although graph quality degrades at higher corruption levels, downstream task accuracy remains largely stable, especially for F-CMs.
}
\scriptsize
\setlength{\tabcolsep}{4pt}
\begin{tabular}{llcccc}
\toprule
\textbf{Dataset} & \textbf{Method} & \textbf{0\%} & \textbf{30\%} & \textbf{60\%} & \textbf{90\%} \\
\midrule
\multirow{2}{*}{\textsc{Asia}}
 & F-CMs (C$^2$BM)   & 80.35$\pm$2.16 & 80.41$\pm$1.80 & 80.25$\pm$1.98 & 80.39$\pm$2.51 \\
 & S-F-CMs (C$^2$BM) & 59.73$\pm$20.19 & 59.99$\pm$20.15 & 59.99$\pm$20.15 & 60.83$\pm$21.46 \\
\midrule
\multirow{2}{*}{\textsc{SIIM-Pneumothorax}}
 & F-CMs (C$^2$BM)   & 68.80$\pm$2.51 & 68.03$\pm$3.34 & 68.62$\pm$2.62 & 67.06$\pm$2.90 \\
 & S-F-CMs (C$^2$BM) & 62.68$\pm$12.67 & 62.13$\pm$1.01 & 62.14$\pm$1.02 & 63.07$\pm$2.58 \\
\bottomrule
\end{tabular}
\label{tab:graph_robustness_task_acc}
\end{table}

\subsection{Ablation of graph aggregation strategies under graph perturbations}
\label{app:graph_agg_ablation}
To complement the robustness analysis in App.~\ref{app:robust_graph_perturb}, we further compare different server-side graph aggregation strategies under the same noisy FL setting. In particular, we contrast our proposed aggregation with standard alternatives based on \textsc{Union} \cite{graph_aggregation2017}, \textsc{Intersection} \cite{graph_aggregation2017}, and \textsc{Bayesian Network structure fusion} (BN fusion) \cite{puertaBNfusion2021}, and evaluate each of them under three levels of structural visibility: \emph{localized} training (\textsc{Loc.}), \emph{static federated} visibility (\textsc{S-*}), and \emph{dynamic federated} visibility, where the shared graph is continuously updated as new clients join and introduce additional concepts and dependencies. As in App.~\ref{app:robust_graph_perturb}, all experiments follow the same temporal non-stationarity protocol, with large federation size and high participation per round, and performance is measured with \emph{DiffPairs} (lower is better).

Figures~\ref{fig:graph_ablation_0303} and~\ref{fig:graph_ablation_0606} report the results for two representative perturbation regimes, corresponding respectively to moderate corruption (\emph{graph alteration} $p=0.3$, \emph{\% client alteration} $=0.3$) and stronger corruption (\emph{graph alteration} $p=0.6$, \emph{\% client alteration} $=0.6$). The color coding highlights the visibility regime: blue for dynamic training, orange for static federated visibility, and green for localized visibility.

\paragraph{Empirical Validation.}
The results highlight three main trends. First, our proposed aggregation is consistently the most robust across both perturbation regimes. In all datasets and in both Figures~\ref{fig:graph_ablation_0303} and~\ref{fig:graph_ablation_0606}, \textsc{F-CMs} aggregation achieves the lowest DiffPairs, while \textsc{S-F-CMs} is consistently the strongest static alternative. Moreover, \textsc{S-F-CMs} always improves over \textsc{Loc.}, and the dynamic version further improves over the static one on all datasets, confirming that our aggregation can effectively exploit the broader structural visibility provided by federation and by late-arriving clients.

Second, broader visibility is \emph{not} uniformly beneficial across aggregation rules. While moving from local to static federated visibility can help some baselines, the effect is not systematic, and in several cases local training remains preferable to federated aggregation with naive fusion rules. This is particularly visible for \textsc{Union}, and to a lesser extent for \textsc{Intersection}. In other words, access to more client-side graphs does not automatically improve the recovered shared structure: when the aggregation rule is poorly matched to the underlying heterogeneity, the additional structural proposals may introduce more inconsistencies than useful signal. This effect becomes even clearer when comparing static and dynamic visibility. For \textsc{Union} and \textsc{BN fusion}, dynamic visibility often increases DiffPairs, in some cases substantially, especially under stronger corruption (Figure~\ref{fig:graph_ablation_0606}). A plausible explanation is that these rules do not sufficiently denoise discordant client proposals. By contrast, our aggregation operates edge-wise, considers only clients that observe both nodes, and selects among $\{\mathcal{V}_i\!\to\!\mathcal{V}_j,\; \mathcal{V}_j\!\to\!\mathcal{V}_i,\; \varnothing\}$, allowing the server to reject weak or inconsistent evidence instead of forcing unreliable edges into the aggregate. As more clients are observed, including corrupted ones and late-arriving clients with partially different structural patterns, \textsc{Union}-style aggregation becomes overly permissive, accumulating spurious edges and incorrect orderings, while \textsc{BN fusion} appears similarly sensitive to conflicting local structures. \textsc{Intersection}, by contrast, is more conservative and therefore more stable, but its gains from dynamic visibility are limited.

Finally, these results clarify that the benefit of global visibility depends critically on the aggregation mechanism. More visibility is useful only when the fusion rule can transform heterogeneous local proposals into a coherent shared DAG. Our method does so reliably: unlike the alternative graph fusion strategies, it benefits monotonically from richer structural visibility and degrades much more gracefully as both the fraction of corrupted clients and the per-client graph alteration increase. Overall, the ablation reinforces the central claim of this work: in evolving federated concept-based learning, global structural visibility is valuable only when coupled with a robust, structure-aware aggregation rule.

\begin{figure}[H]
   \centering
   \includegraphics[width=0.7\linewidth]{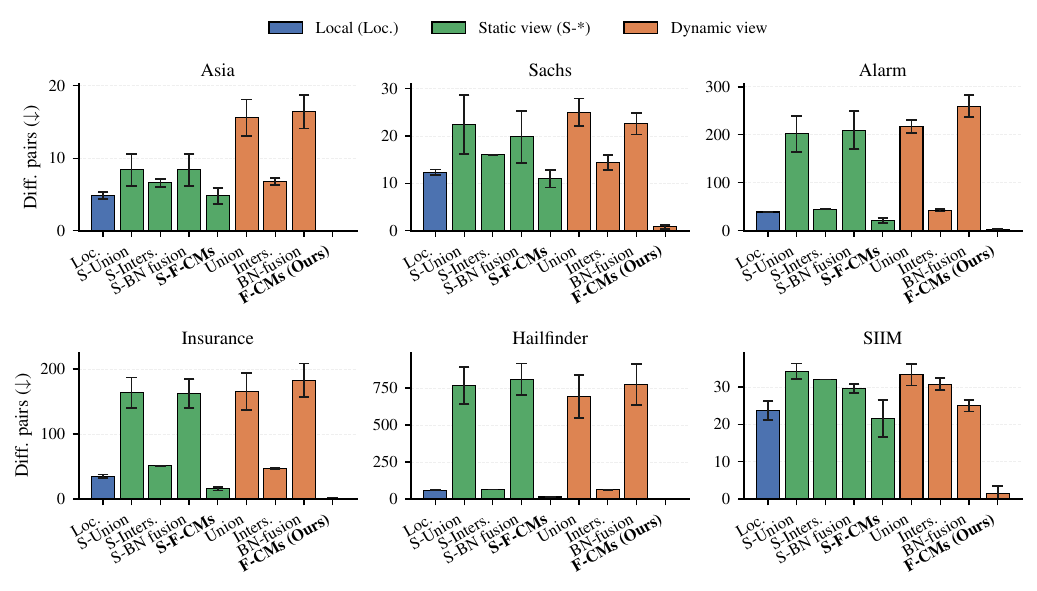}
   \caption{\textbf{Ablation of graph aggregation strategies under moderate perturbations.}
   DiffPairs (lower is better) for different aggregation rules under localized, static federated, and dynamic federated visibility, with graph alteration $p=0.3$ and 30\% corrupted clients.}
\label{fig:graph_ablation_0303}
\end{figure}

\begin{figure}[H]
   \centering
   \includegraphics[width=0.7\linewidth]{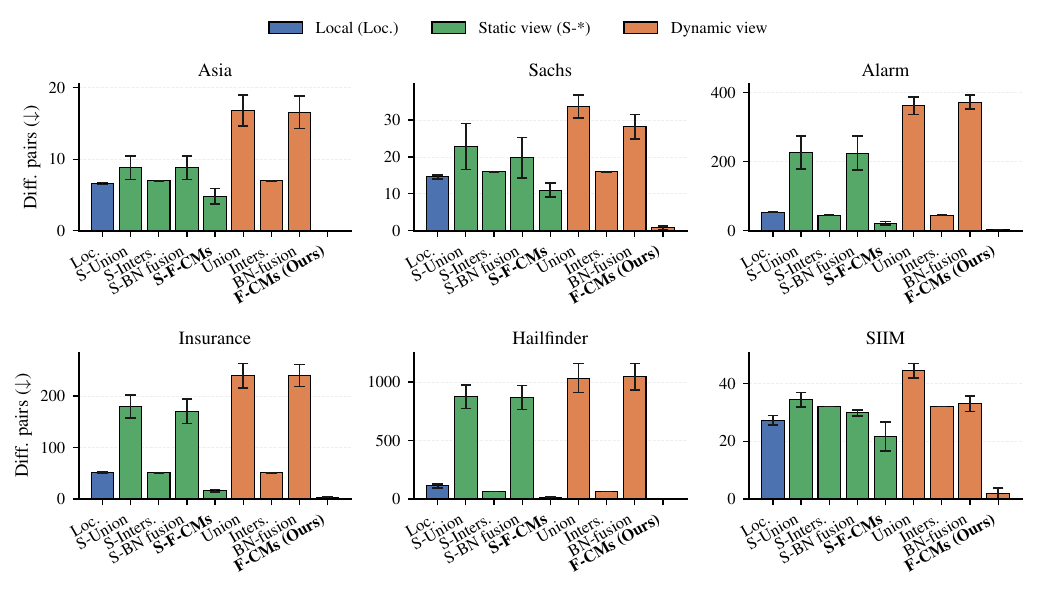}
   \caption{\textbf{Ablation of graph aggregation strategies under stronger perturbations.}
   DiffPairs (lower is better) for different aggregation rules under localized, static federated, and dynamic federated visibility, with graph alteration $p=0.6$ and 60\% corrupted clients.}
\label{fig:graph_ablation_0606}
\end{figure}

\subsection{Multimodal extension}
\label{app:multimodal}

We extend F-CMs to multimodal federated settings, where different clients may observe different
input modalities while the semantic prediction targets, i.e., concepts and task labels, are shared
across modalities. Fig.~\ref{fig:multi} provides an overview. 

To experimentally validate this extension, we consider CheXpert
Plus~(App.~\ref{sec:chexpert_plus}), which augments CheXpert (App.~\ref{sec:chexpert})
with radiology reports. In what follows, we specify the modified problem setting, architecture, local optimization, aggregation and experimental
setup.

\paragraph{Multimodal problem setting.}
Let $\mathcal{A}$ denote the set of available modalities. In CheXpert Plus, we consider the image
modality $I$ (chest x-ray images) and the text modality $T$ (radiology reports). Each client $k$ observes data from a single modality
$a^{(k)} \in \mathcal{A}$ and holds a local dataset
\[
D^{(k)} =
\left\{
\left(x^{(k,a^{(k)})}_i, \mathbf{v}^{(k)}_i\right)
\right\}_{i=1}^{n^{(k)}},
\]
where $x^{(k,a^{(k)})}_i \in \mathcal{X}^{a^{(k)}}$ is the input observed under modality
$a^{(k)}$, and $\mathbf{v}^{(k)}_i$ contains the supervision available for example $i$. As in the unimodal
case, each client supervises a subset of interpretable variables $\mathcal{V}^{(k)} $, which may include concepts,
task variable, or both.
Importantly, concepts and task labels are treated as modality-independent interpretable targets. Each
concept $C_j$ and the task $Y$ are assumed to be predictable, in principle, from any modality
$a \in \mathcal{A}$. Thus, the same clinical concept, such as \emph{Cardiomegaly}, can be predicted
from a chest X-ray image or from a radiology report.

\paragraph{Multimodal F-CM shared model.}
The main architectural change is that the single input encoder is replaced by a collection of
modality-specific encoders. For each modality $a \in \mathcal{A}$, we define an encoder
\[
g_{\theta^a_t}^a : \mathcal{X}^a \rightarrow \mathbb{R}^{d_a},
\]
which maps inputs from modality $a$ to modality-specific embeddings. Since different modalities
produce embeddings with different statistical and geometric properties, we introduce a
modality-alignment module
\[
q_{\eta^a_t}^a : \mathbb{R}^{d_a} \rightarrow \mathbb{R}^{d},
\]
which maps modality-specific embeddings into a shared latent space. For an example observed by client $k$, the shared
representation is therefore
\[
z_i^{(k)} =
q_{\eta^{a^{(k)}}_t}^{a^{(k)}}
\left(
g_{\theta^{a^{(k)}}_t}^{a^{(k)}}
\left(x^{(k,a^{(k)})}_i\right)
\right)
\in \mathbb{R}^{d}.
\]
where $a^{(k)}$ denotes the modality observed by client $k$. Thus,
$g_{\theta^{a^{(k)}}_t}^{a^{(k)}}$ and $q_{\eta^{a^{(k)}}_t}^{a^{(k)}}$ are respectively the encoder
and alignment module associated with that modality.
In our implementation, the image encoder and the text encoder are instantiated as reported in App.~\ref{sec:chexpert_plus}, while each modality-specific alignment module is
parameterized as an MLP.

The concept and task modules are the same as in the unimodal case and are shared across
modalities. Therefore, at round $t$, the multimodal F-CM is characterized by the parameters
\[
\mathbf{w}'_t =
\left(
\{\theta^a_t\}_{a \in \mathcal{A}},
\{\eta^a_t\}_{a \in \mathcal{A}},
\phi_t,
\psi_t
\right).
\]

\paragraph{Graph aggregation and architecture adaptation.}
The semantic graph remains modality-invariant. Client graphs encode dependencies among
semantic variables, not among raw input modalities. Therefore, graph aggregation proceeds exactly
as in the unimodal case. Dynamic architecture adaptation is also
preserved. The only additional mechanism introduced by
the multimodal extension is expansion along the modality axis: if a new modality appears at round
$t$, the server initializes a new modality-specific encoder $g^a_{\theta^a_t}$ and alignment module
$q^a_{\eta^a_t}$, while reusing the existing semantic modules whenever possible.

\paragraph{Client-side optimization.}
At each round $t$, the server broadcasts the current global model to the selected clients. Each
client $k$ processes its local inputs through the modality-specific encoder and alignment module
associated with its modality $a^{(k)}$. Client-side optimization then follows the same module-specific
principle as in the unimodal setting: the client updates only the trainable modules for which it provides supervision. Concept and task modules
without local supervision remain frozen.

\paragraph{Server-side aggregation.}
Aggregation is performed module-wise. For each modality $a$, the server aggregates the encoder and alignment
module only over clients that observe modality $a$, using the same weighted averaging rule as in
the unimodal setting. If no selected client observes modality $a$ at round $t$, the corresponding
alignment module is left unchanged.

Concept and task modules are aggregated as in unimodal F-CMs. In particular, each concept module
is aggregated only over clients that provide supervision for the corresponding concept, while the
task module is aggregated only over clients that provide task supervision.

Notably, this section is intended as a first proof-of-concept extension of F-CMs to multimodal federated settings. Further dedicated multimodal alignment objectives could be explored, for instance by adding an explicit regularization term that encourages the latent representations of different modalities to be closer in the shared embedding space.

\begin{figure}[H]
   \centering
   \includegraphics[width=0.65\linewidth]{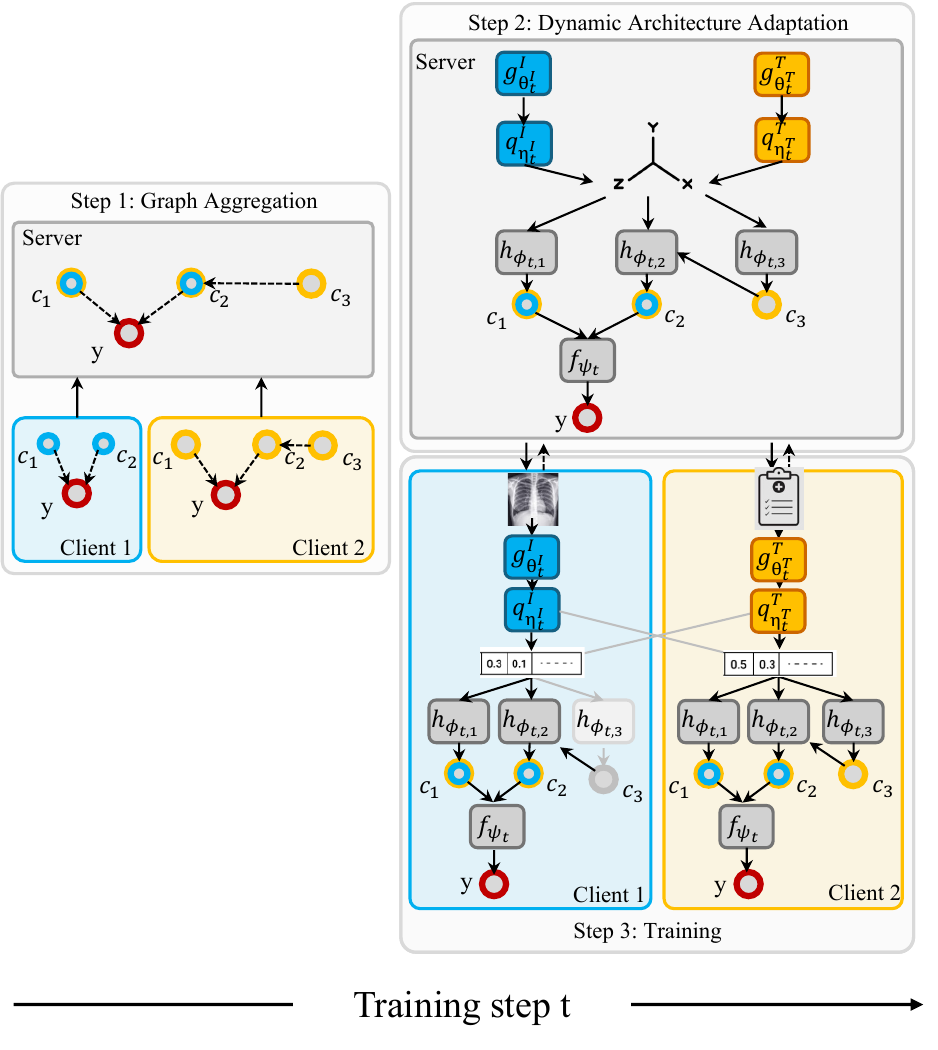}
   \caption{\textbf{Multimodal F-CMs overview}. At each round $t$, multimodal F-CMs follow the same steps as standard F-CMs: graph aggregation, dynamic architecture adaptation, and training. To handle heterogeneous input modalities, the shared CM includes, for each modality, a modality-specific encoder and a modality-alignment module, shown as the blue and yellow modules in the figure. The modality-specific encoder produces modality-specific embeddings, while the modality-alignment module maps them into a common latent space. Each client then processes its local inputs using only the encoder and alignment module associated with its modality, obtaining a latent representation that is then used to predict concepts and the task label.}
\label{fig:multi}
\end{figure}

\paragraph{Experimental evaluation.}
We empirically evaluate the multimodal extension on CheXpert Plus by measuring task accuracy and concept
coverage under the same training regimes and experimental setup used in the unimodal evaluation. Table~\ref{tab:chexpert_plus}
reports results for the most relevant baselines, namely the centralized, S-F-CMs, and 
F-CMs instantiations.

The results follow the same trend observed in the unimodal setting.
S-F-CMs aggregate client updates but keep the semantic architecture fixed, resulting in limited
coverage when later clients introduce additional concept supervision. In contrast, F-CMs dynamically
adapt the model structure and recover full coverage of task-relevant concepts across all
architectures.

\begin{table}[H]
\centering
\caption{
\textbf{Task accuracy} (\%) and \textbf{concept coverage} (\%) with respect to the ground-truth concepts
causally relevant for the task.
}
\scriptsize
\renewcommand{\arraystretch}{1.2}
\setlength{\tabcolsep}{4pt}
\begin{tabular}{lcccccc}
\toprule
\textbf{} &
\multicolumn{2}{c}{\textbf{Cent.}} &
\multicolumn{2}{c}{\textbf{S-F-CMs}} &
\multicolumn{2}{c}{\textbf{F-CMs}} \\
\cmidrule(lr){2-3} \cmidrule(lr){4-5} \cmidrule(lr){6-7}
\textbf{} &
\textbf{Task Acc.} & \textbf{Concept Cov.} &
\textbf{Task Acc.} & \textbf{Concept Cov.} &
\textbf{Task Acc.} & \textbf{Concept Cov.}
\\
\midrule
OpaqNN  &
72.9 $\pm$ 0.5 & 100 &
58.4 $\pm$ 2.6 & 52.0 $\pm$ 10.2
&
70.3 $\pm$ 2.8 & 100
\\
C$^2$BM &
73.3 $\pm$  0.6 & 100 &
58.4 $\pm$ 2.5 & 60.0 $\pm$  8.2
&
66.1 $\pm$ 2.7 & 100
\\
CBM     &
71.3 $\pm$ 0.1 & 100 & 55.4 $\pm$ 3.0 & 52.0  $\pm$ 10.2 &
63.3 $\pm$  1.4 & 100
\\
CEM     &
73.2 $\pm$ 0.3 & 100 &
55.7 $\pm$ 3.1 & 52.0  $\pm$ 10.2
&
64.8  $\pm$ 2.0 & 100
\\
CGM     &
73.3  $\pm$  0.4 & 100 &
60.0 $\pm$ 2.9 & 60.0 $\pm$ 8.2
&
62.3 $\pm$ 1.9 & 100
\\
\bottomrule
\end{tabular}
\label{tab:chexpert_plus}
\end{table}

\section{Algorithmic details and analysis}

\subsection{Algorithm}\label{app:algo}
Algorithm~\ref{alg:fcms} summarizes the end-to-end F-CMs training loop under statistical heterogeneity and temporal non-stationarity. At each round, the server samples available clients, aggregates their supervisedvariables (and optional local structure) into an updated shared variable set and DAG, and then adapts the shared CM architecture modularly. Clients perform module-specific local updates by freezing unsupervised components, and the server aggregates updates module-wise, averaging each module only over clients that contributed to it.

\begin{algorithm}
\caption{\textsc{F-CMs}: Federated Concept-based Models.}
\small
\label{alg:fcms}
\KwIn{
    Initial shared parameters $\mathbf{w}_0=(\theta_0,\phi_0,\psi_0)$, variable set $\mathcal{V}_0$, shared DAG $\mathcal{G}_0$,
    number of rounds $T$
}
\KwOut{Final shared model parameters $\mathbf{w}_T$ and shared structure $(\mathcal{V}_T,\mathcal{G}_T)$}

Initialize cache $\mathcal{H}_0$ with the latest available client graphs/adjacency matrices\;

\For{$t = 0, 1, \dots, T-1$}{

    $\mathcal{K}_t \leftarrow$ \textsc{ClientSampling}($\mathcal{K}(t)$)\;

    \tcp{Client-side: local graph proposals}
    \For{\textbf{each client} $k \in \mathcal{K}_t$ \textbf{in parallel}}{
        \If{client $k$ is new or has updated its local structure}{
            Send weighted adjacency matrix $A^{(k)}$ to server\;
        }
        Send supervised variables $\mathcal{V}^{(k)}$ to server\;
    }

    \tcp{Server-side: shared structure update}
    Update cache $\mathcal{H}_{t+1}$ with newly received $A^{(k)}$\;

    $\mathcal{V}_{t+1} \leftarrow \mathcal{V}_{t} \cup \bigcup_{k \in \mathcal{K}_{t}} \mathcal{V}^{(k)}$\;

    $\mathcal{G}_{t+1} \leftarrow 
    \textsc{AggregateDAG}\big(\mathcal{H}_{t+1}, \mathcal{V}_{t+1}, \{\alpha_k\}\big)$\;

    $\tilde{\mathbf{w}}_t \leftarrow 
    \textsc{AdaptModel}(\mathbf{w}_t, \mathcal{V}_{t+1}, \mathcal{G}_{t+1})$\;

    Broadcast $(\tilde{\mathbf{w}}_t,\mathcal{G}_{t+1},\mathcal{V}_{t+1})$ to all clients in $\mathcal{K}_t$\;

    \tcp{Client-side: module-specific training}
    \For{\textbf{each client} $k \in \mathcal{K}_t$ \textbf{in parallel}}{

        $\tilde{\mathbf{w}}^{(k)}_t \leftarrow 
        \textsc{FreezeUnsup}\big(\tilde{\mathbf{w}}_t, \mathcal{G}_{t+1}, \mathcal{V}^{(k)}\big)$\;

        $\Delta\tilde{\mathbf{w}}^{(k)}_t \leftarrow 
        \textsc{LocalUpdate}\big(\tilde{\mathbf{w}}^{(k)}_t, \mathcal{G}_{t+1}, \mathcal{V}^{(k)}, \mathcal{D}^{(k)}\big)$\;
    
        Send $\Delta\tilde{\mathbf{w}}^{(k)}_t$ to server\;
    }

    \tcp{Server-side: module-wise aggregation}
    $\mathbf{w}_{t+1} \leftarrow 
    \textsc{ModuleWiseFedAvg}\big(
    \tilde{\mathbf{w}}_t,
    \{\Delta\tilde{\mathbf{w}}^{(k)}_t\}_{k\in\mathcal{K}_t},
    \{\mathcal{V}^{(k)}\}_{k\in\mathcal{K}_t}
    \big)$\;
}
\end{algorithm}

\subsection{Theoretical analysis}\label{app:convergence}
We consider a static setting with a fixed set of clients \(\mathcal K\) and fixed
supervision patterns, i.e., the variables supervised by each client remain fixed throughout
training. The model architecture is therefore fixed and contains one module for each
variable supervised by at least one client in \(\mathcal K\). Each client
\(k\in\mathcal K\) has a local objective \(F^{(k)}\).

\paragraph{Partial supervision.} Since clients may supervise only a subset of variables, they update only the block of parameters
associated with the modules that predict the corresponding variables. We encode these locally updated
coordinates through a diagonal mask \(P^{(k)}\), whose entries are one for parameters
updated by client \(k\) and zero otherwise. For example, if the model contains modules for predicting
\((C_1,C_2,Y)\) and client \(k\) has supervision only for \(C_1\) and \(Y\), then
\(P^{(k)}\) selects the parameters of the modules for \(C_1\) and \(Y\) and masks out the parameters of the
\(C_2\)-module. In our analysis, these supervision patterns, and hence the masks
\(P^{(k)}\), remain fixed throughout training.

\paragraph{Module-wise server aggregation.} The server aggregates parameters module-wise. For a module \(j\), let
\(\mathcal K^j\subseteq\mathcal K\) be the clients that update that module. The server
averages the parameters of module \(j\) only across clients in \(\mathcal K^j\), using
weights
\[
    \beta_j^{(k)}
    =
    \frac{n^{(k)}}{\sum_{r\in\mathcal K^j} n^{(r)}} ,
    \qquad k\in\mathcal K^j .
\]
To model this, we first define for each client \(k\), a diagonal module-wise aggregation operator \(R^{(k)}\) that stores the aggregation weights for
client \(k\): on the coordinates of module \(j\), its diagonal entries are
\(\beta_j^{(k)}\) if client \(k\) updates module \(j\), and zero otherwise. Continuing the example above, if client \(k\) supervises only \(C_1\) and \(Y\), then
\(R^{(k)}\) places the weights \(\beta_{C_1}^{(k)}\) and \(\beta_Y^{(k)}\) on the
corresponding parameter blocks, i.e., parameters of the modules for predicting \(C_1\) and
\(Y\), and zero weight on the \(C_2\)-parameter block. Thus, \(P^{(k)}\) tells which modules
client \(k\) updates locally, while \(R^{(k)}\) tells how much those local updates count in
the server aggregation.


\paragraph{Local optimization.} For each client $k$, let
\[
g^{(k)}(\mathbf{w};\xi^{(k)})
\]
denote a stochastic gradient estimator of the local objective $F^{(k)}$ at parameter vector
$\mathbf{w}$, computed on a random sample or minibatch $\xi^{(k)}$.

At communication round $t$, the server broadcasts $\mathbf{w}_t$ and each client initializes
\[
\mathbf{w}^{(k)}_{t,0} = \mathbf{w}_t .
\]
Each client then performs $E$ local stochastic-gradient steps. At local step $e$, client $k$ draws a
sample or minibatch $\xi^{(k)}_{t,e}$ and computes
\[
g^{(k)}_{t,e}
:=
g^{(k)}(\mathbf{w}^{(k)}_{t,e};\xi^{(k)}_{t,e}) .
\]
The masked local update is
\[
\mathbf{w}^{(k)}_{t,e+1}
=
\mathbf{w}^{(k)}_{t,e}
-
\eta P^{(k)}g^{(k)}_{t,e},
\qquad e=0,\ldots,E-1 .
\]

The server aggregates the final local parameters module-wise. To write this update compactly, we unroll the local steps. For any module $j$ updated by client $k$, we have\[\mathbf{w}^{(k),j}_{t,E}=\mathbf{w}^{j}_t-\eta\sum_{e=0}^{E-1}g^{(k),j}_{t,e}.\]Therefore, the module-wise aggregation rule gives\[\mathbf{w}^{j}_{t+1}=\sum_{k\in \mathcal{K}^j}\beta_j^{(k)}\mathbf{w}^{(k),j}_{t,E}=\mathbf{w}^{j}_t-\eta\sum_{e=0}^{E-1}\sum_{k\in \mathcal{K}^j}\beta_j^{(k)}g^{(k),j}_{t,e}.\]In vector form, this can be written as\[\mathbf{w}_{t+1}=\mathbf{w}_t-\eta\sum_{e=0}^{E-1}\sum_{k\in\mathcal{K}}R^{(k)}g^{(k)}_{t,e}.\]

\paragraph{Assumptions.}
We make the following assumptions.

\begin{enumerate}[label=(S\arabic*),leftmargin=1.8em]

    \item \textbf{L-smoothness}
    Each local objective \(F^{(k)}\) is differentiable and \(L\)-smooth, i.e.,
    \[
        \|\nabla F^{(k)}(\mathbf{u})-\nabla F^{(k)}(\mathbf{v})\|
        \le
        L\|\mathbf{u}-\mathbf{v}\|,
        \qquad
        \forall \mathbf{u},\mathbf{v},\ \forall k\in\mathcal K .
    \]
    We also assume that there exists a differentiable, \(L\)-smooth, and lower-bounded
    effective global objective \(F\), with \(F(w)\ge F_{\inf}\).

\item \textbf{Unbiased stochastic gradients with bounded variance.}
   For each client \(k\), the stochastic gradient estimator is unbiased on the locally
    updated coordinates:
    \[
        \mathbb E_{\xi^{(k)}}\!\left[
        P^{(k)}g^{(k)}(\mathbf{w};\xi^{(k)})
        \right]
        =
        P^{(k)}\nabla F^{(k)}(\mathbf{w}).
    \]
    Moreover, its variance on those coordinates is uniformly bounded: there exists
    \(\sigma^2\ge 0\) such that
    \[
        \mathbb E_{\xi^{(k)}}\!\left[
        \left\|
        P^{(k)}
        \bigl(
        g^{(k)}(\mathbf{w};\xi^{(k)})-\nabla F^{(k)}(\mathbf{w})
        \bigr)
        \right\|^2
        \right]
        \le
        \sigma^2
    \]
    for every client \(k\) and every \(\mathbf{w}\). Conditioned on the current local iterates, the
    stochastic gradients are independent across clients and local steps.

\item \textbf{Bounded module-wise alignment mismatch.}
    We assume that the module-wise aggregated federation direction is close to the gradient
    of the global objective. Specifically, there exists \(\zeta^2\ge 0\) such
    that, for every \(\mathbf{w}\),
    \[
        \left\|
        \sum_{k\in\mathcal K}
        R^{(k)}\nabla F^{(k)}(\mathbf{w})
        -
        \nabla F(\mathbf{w})
        \right\|^2
        \le
        \zeta^2 .
    \]
    The case \(\zeta=0\) corresponds to exact alignment between the module-wise aggregate
    direction and the gradient of \(F\).

\item \textbf{Bounded second moment of masked stochastic gradients.}
    There exists \(G^2\ge 0\) such that
    \[
        \mathbb E\!\left[
        \|P^{(k)}g^{(k)}(\mathbf{w};\xi^{(k)})\|^2
        \right]
        \le
        G^2
    \]
    for every client \(k\) and every \(\mathbf{w}\). This assumption controls the drift caused by
    multiple local steps. Since \(R^{(k)}\) is supported on the same coordinates as
    \(P^{(k)}\) and has entries bounded by one, it also controls the corresponding
    module-wise aggregated updates.

\end{enumerate}

Notice moreover that, by construction, for every module \(j\), there is at least one
client that supervises it, i.e., \(\mathcal K^j\neq\emptyset\), and the aggregation weights
satisfy \(\sum_{k\in\mathcal K^j}\beta_j^{(k)}=1\). 

We next bound the discrepancy between the actual aggregated local-update direction and the ideal global direction.

\begin{lemma}[Deviation of the aggregated local direction]
\label{lem:static_delta}
Define
\[
\bar g_t
\coloneqq
\frac{1}{E}\sum_{e=0}^{E-1}\sum_{k\in\mathcal K} R^{(k)} g_{t,e}^{(k)}
\]
and
\[
\delta_t \coloneqq \bar g_t - \nabla F(\mathbf{w}_t).
\]
Under Assumptions (S1), (S2), (S3), and (S4), for every round $t$,
\[
    \mathbb E\|\delta_t\|^2
    \le
    C
    \left(
    \frac{\sigma^2}{E}
    +
    L^2\eta^2(E-1)^2G^2
    +
    \zeta^2
    \right),
\]
where \(C>0\) depends only on the fixed client set.
\end{lemma}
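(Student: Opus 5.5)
We decompose $\delta_t$ into three terms and bound each separately, combining them via $\|a+b+c\|^2\le 3(\|a\|^2+\|b\|^2+\|c\|^2)$. Writing $\nabla F^{(k)}_{t,e}\coloneqq\nabla F^{(k)}(\mathbf{w}^{(k)}_{t,e})$, we split
\[
\delta_t
=
\underbrace{\frac1E\sum_{e=0}^{E-1}\sum_{k\in\mathcal K}R^{(k)}\bigl(g^{(k)}_{t,e}-\nabla F^{(k)}_{t,e}\bigr)}_{\text{noise}}
+
\underbrace{\frac1E\sum_{e=0}^{E-1}\sum_{k\in\mathcal K}R^{(k)}\bigl(\nabla F^{(k)}_{t,e}-\nabla F^{(k)}(\mathbf{w}_t)\bigr)}_{\text{drift}}
+
\underbrace{\sum_{k\in\mathcal K}R^{(k)}\nabla F^{(k)}(\mathbf{w}_t)-\nabla F(\mathbf{w}_t)}_{\text{mismatch}} .
\]
The mismatch term is bounded by $\zeta^2$ directly from (S3).

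For the noise term, we use $R^{(k)}=R^{(k)}P^{(k)}$, since $R^{(k)}$ is diagonal and supported on the coordinates of $P^{(k)}$, and the fact that the entries of $R^{(k)}$ lie in $[0,1]$. Conditioned on the local iterates, the summands $R^{(k)}P^{(k)}(g^{(k)}_{t,e}-\nabla F^{(k)}_{t,e})$ have zero mean by (S2). They are independent across clients and form a martingale difference sequence across local steps. The cross terms therefore vanish in expectation, and we obtain
\[
\mathbb E\|\text{noise}\|^2\le \frac{1}{E^2}\sum_{e}\sum_k \mathbb E\|P^{(k)}(g^{(k)}_{t,e}-\nabla F^{(k)}_{t,e})\|^2\le \frac{|\mathcal K|\sigma^2}{E}.
\]
This yields the $\sigma^2/E$ term with constant $|\mathcal K|$.

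For the drift term, we apply Jensen/Cauchy--Schwarz over the $E|\mathcal K|$ summands, use $\|R^{(k)}v\|\le\|v\|$, and then (S1) to get
\[
\mathbb E\|\text{drift}\|^2\le \frac{|\mathcal K|}{E}\sum_{e,k}L^2\,\mathbb E\|\mathbf{w}^{(k)}_{t,e}-\mathbf{w}_t\|^2 .
\]
Unrolling the masked updates gives $\mathbf{w}^{(k)}_{t,e}-\mathbf{w}_t=-\eta\sum_{s<e}P^{(k)}g^{(k)}_{t,s}$. Applying Cauchy--Schwarz together with (S4) then gives $\mathbb E\|\cdot\|^2\le\eta^2 e^2G^2\le \eta^2(E-1)^2G^2$. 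Summing over $e$ and $k$ produces $|\mathcal K|^2L^2\eta^2(E-1)^2G^2$. One could tighten this to $e\le E-1$ with averaging factors, but the crude bound already matches the statement.

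There is one technical point to handle with care. In the noise term, $\nabla F^{(k)}$ is evaluated at the random local iterate rather than at $\mathbf{w}_t$, so the unbiasedness in (S2) must be applied conditionally on the filtration generated by previous local steps. Only the masked coordinates matter here, and they are exactly those on which (S2) gives unbiasedness and bounded variance. The hardest step is justifying that cross terms across different local steps $e\neq e'$ vanish, and this requires the conditional-independence clause of (S2) applied step by step. Collecting the three bounds with the factor $3$, we can take $C=3|\mathcal K|^2$, which depends only on the fixed client set.
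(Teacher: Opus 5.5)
Your proposal is correct and follows the paper's proof essentially step for step. It uses the same noise/drift/mismatch split combined with the factor $3$. The noise term is handled by the same martingale-difference and cross-client independence argument with $\|R^{(k)}v\|\le\|P^{(k)}v\|$, giving $|\mathcal K|\sigma^2/E$. The drift term uses the same Jensen-plus-unrolling bound, giving $|\mathcal K|^2L^2\eta^2(E-1)^2G^2$. The mismatch term follows directly from (S3). Your explicit constant $C=3|\mathcal K|^2$ is a valid choice for what the paper leaves implicit when it says ``combining the three bounds.''
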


\begin{proof}
We decompose $\delta_t$ into three terms:
\[
\delta_t = A_t + B_t + C_t,
\]
where
\begin{align*}
A_t
&\coloneqq
\frac{1}{E}\sum_{e=0}^{E-1}\sum_{k\in\mathcal K} R^{(k)}
\big(g_{t,e}^{(k)}-\nabla F^{(k)}(\mathbf{w}_{t,e}^{(k)})\big),\\
B_t
&\coloneqq
\frac{1}{E}\sum_{e=0}^{E-1}\sum_{k\in\mathcal K} R^{(k)}
\big(\nabla F^{(k)}(\mathbf{w}_{t,e}^{(k)})-\nabla F^{(k)}(\mathbf{w}_t)\big),\\
C_t
&\coloneqq
\sum_{k\in\mathcal K} R^{(k)} \nabla F^{(k)}(\mathbf{w}_t)
-
\nabla F(\mathbf{w}_t).
\end{align*}
Applying Jensen's inequality, we have
\[
\mathbb E\!\left[\|\delta_t\|^2\right]
\le
3\mathbb E\!\left[\|A_t\|^2\right]
+
3\mathbb E\!\left[\|B_t\|^2\right]
+
3\mathbb E\!\left[\|C_t\|^2\right].
\]

We bound the three terms separately.

\paragraph{\textbf{Bound on $A_t$}.}
Let us define
\[
\varepsilon_{t,e}^{(k)}
\coloneqq
g_{t,e}^{(k)}-\nabla F^{(k)}(\mathbf w_{t,e}^{(k)}).
\]
For each round \(t\) and local step \(e\), let \(\mathcal F_{t,e}\) denote the
sigma-field containing all information available before the minibatches at step \(e\) are
sampled. In particular, given \(\mathcal F_{t,e}\), the local iterate
\(\mathbf w_{t,e}^{(k)}\) is fixed, since it has been computed from 
the minibatches used in the previous local steps. Thus, for client \(k\), the fresh randomness
in \(g^{(k)}_{t,e}\) comes from the minibatch \(\xi_{t,e}^{(k)}\). Therefore, by Assumption~(S2), we have
\[
    \mathbb E\!\left[
    P^{(k)}g^{(k)}_{t,e}
    \mid
    \mathcal F_{t,e}
    \right]
    =
    P^{(k)}\nabla F^{(k)}(\mathbf{w}^{(k)}_{t,e}).
\]
Equivalently,
\[
    \mathbb E\!\left[
    P^{(k)}\varepsilon^{(k)}_{t,e}
    \mid
    \mathcal F_{t,e}
    \right]
    =
    0.
\]

Since \(R^{(k)}P^{(k)}=R^{(k)}\) and \(R^{(k)}\) is fixed , this also implies
\begin{equation}
\label{eq:mte}
    \mathbb E\!\left[
    R^{(k)}\varepsilon^{(k)}_{t,e}
    \mid
    \mathcal F_{t,e}
    \right]
    =
    R^{(k)}
    \mathbb E\!\left[
    P^{(k)}\varepsilon^{(k)}_{t,e}
    \mid
    \mathcal F_{t,e}
    \right]
    =
    0.
\end{equation}

Moreover, by Assumption~(S2),
\[
    \mathbb E\!\left[
    \|P^{(k)}\varepsilon^{(k)}_{t,e}\|^2
    \mid
    \mathcal F_{t,e}
    \right]
    \le
    \sigma^2.
\]


For convenience, define
\[
M_{t,e}
\coloneqq
\sum_{k\in\mathcal K} R^{(k)} \varepsilon_{t,e}^{(k)}.
\]
Then
\[
A_t = \frac{1}{E}\sum_{e=0}^{E-1} M_{t,e}.
\]

Hence
\[
\|A_t\|^2
=
\frac{1}{E^2}
\left\|
\sum_{e=0}^{E-1} M_{t,e}
\right\|^2
=
\frac{1}{E^2}
\sum_{e=0}^{E-1}\|M_{t,e}\|^2
+
\frac{2}{E^2}\sum_{0\le e<e'\le E-1}\langle M_{t,e},M_{t,e'}\rangle .
\]
Taking expectations gives
\[
\mathbb E[\|A_t\|^2]
=
\frac{1}{E^2}
\sum_{e=0}^{E-1}\mathbb E[\|M_{t,e}\|^2]
+
\frac{2}{E^2}\sum_{0\le e<e'\le E-1}\mathbb E[\langle M_{t,e},M_{t,e'}\rangle].
\]

We now show that all cross-step terms vanish. Fix $e<e'$. The quantity \(M_{t,e}\)
is already determined before the minibatches at step \(e'\) are sampled, while, by Eq. \ref{eq:mte},
\[
    \mathbb E[M_{t,e'}\mid \mathcal F_{t,e'}]=0.
\]
Given the properties of conditional expectation, we have
\begin{align*}
\mathbb E[\langle M_{t,e},M_{t,e'}\rangle]
&=
\mathbb E\!\left[
\mathbb E\!\left[\langle M_{t,e},M_{t,e'}\rangle \mid \mathcal F_{t,e'}\right]
\right] \\
&=
\mathbb E\!\left[
\left\langle
M_{t,e},
\mathbb E[M_{t,e'}\mid \mathcal F_{t,e'}]
\right\rangle
\right] \\
&=0.
\end{align*}
Therefore,
\[
\mathbb E[\|A_t\|^2]
=
\frac{1}{E^2}
\sum_{e=0}^{E-1}\mathbb E[\|M_{t,e}\|^2].
\]

We next expand the square within a fixed local step $e$:
\[
\|M_{t,e}\|^2
=
\left\|
\sum_{k\in\mathcal K}  R^{(k)} \varepsilon_{t,e}^{(k)}
\right\|^2
=
\sum_{k\in\mathcal K}\|R^{(k)}\varepsilon_{t,e}^{(k)}\|^2
+
\sum_{k\neq r} 
\left\langle
R^{(k)}\varepsilon_{t,e}^{(k)},
R^{(r)}\varepsilon_{t,e}^{(r)}
\right\rangle .
\]
Taking conditional expectation with respect to \(\mathcal F_{t,e}\), the cross-client
terms vanish. Indeed, for \(k\neq r\), let
\[
    X_k := R^{(k)}\varepsilon^{(k)}_{t,e},
    \qquad
    X_r := R^{(r)}\varepsilon^{(r)}_{t,e}.
\]
Conditioned on \(\mathcal F_{t,e}\), the minibatch noises of different clients are
independent. Moreover, as shown above,
\[
    \mathbb E[X_k\mid\mathcal F_{t,e}]
    =
    \mathbb E[X_r\mid\mathcal F_{t,e}]
    =
    0.
\]
Therefore,
\[
\begin{aligned}
\mathbb E\!\left[
\left\langle
R^{(k)}\varepsilon^{(k)}_{t,e},
R^{(r)}\varepsilon^{(r)}_{t,e}
\right\rangle
\mid
\mathcal F_{t,e}
\right]
&=
\left\langle
\mathbb E[X_k\mid\mathcal F_{t,e}],
\mathbb E[X_r\mid\mathcal F_{t,e}]
\right\rangle \\
&=0.
\end{aligned}
\]

Hence,
\[
\mathbb{E}
\left[
\|M_{t,e}\|^2
\mid
\mathcal{F}_{t,e}
\right]
=
\sum_{k\in\mathcal{K}}
\mathbb{E}
\left[
\|R^{(k)}\varepsilon_{t,e}^{(k)}\|^2
\mid
\mathcal{F}_{t,e}
\right].
\]

Since \(R^{(k)}P^{(k)}=R^{(k)}\) and the diagonal entries of \(R^{(k)}\) lie in
\([0,1]\), we have
\[
    \|R^{(k)}\varepsilon^{(k)}_{t,e}\|^2
    =
    \|R^{(k)}P^{(k)}\varepsilon^{(k)}_{t,e}\|^2
    \le
    \|P^{(k)}\varepsilon^{(k)}_{t,e}\|^2 .
\]
Using Assumption~(S2), we obtain
\[
\begin{aligned}
    \mathbb E\!\left[
    \|M_{t,e}\|^2
    \mid
    \mathcal F_{t,e}
    \right]
    &=
    \sum_{k\in\mathcal K}
    \mathbb E\!\left[
    \|R^{(k)}\varepsilon^{(k)}_{t,e}\|^2
    \mid
    \mathcal F_{t,e}
    \right] \\
    &\le
    \sum_{k\in\mathcal K}
    \mathbb E\!\left[
    \|P^{(k)}\varepsilon^{(k)}_{t,e}\|^2
    \mid
    \mathcal F_{t,e}
    \right] \\
    &\le
    |\mathcal K|\sigma^2 .
\end{aligned}
\]
Since the client set is fixed, \(|\mathcal K|\) is a constant
independent of \(t,T,\eta\), and \(E\). Defining \(C_R:=|\mathcal K|\), and taking
expectations on both sides, we get
\[
    \mathbb E\|M_{t,e}\|^2
    \le
    C_R\sigma^2 .
\]
Substituting this into the previous bound gives
\[
    \mathbb E\|A_t\|^2
    \le
    \frac{1}{E^2}
    \sum_{e=0}^{E-1}
    C_R\sigma^2
    =
    C_R\frac{\sigma^2}{E}.
\]
\paragraph{\textbf{Bound on $B_t$}.}
By Assumption (S1) and and since each $R^{(k)}$ is diagonal with entries in $[0,1]$,
\[
\|B_t\|
\le
\frac{L}{E}\sum_{e=0}^{E-1}\sum_{k\in\mathcal K} 
\|\mathbf{w}_{t,e}^{(k)}-\mathbf{w}_t\|.
\]
Applying Jensen's inequality gives
\[
\|B_t\|^2
\le
\frac{L^2|\mathcal{K}|}{E}\sum_{e=0}^{E-1}\sum_{k\in\mathcal K}
\|\mathbf{w}_{t,e}^{(k)}-\mathbf{w}_t\|^2.
\]
Now,
\[
\mathbf{w}_{t,e}^{(k)}-\mathbf{w}_t
=
-\eta \sum_{s=0}^{e-1} P^{(k)} g_{t,s}^{(k)},
\]
so, again by Jensen,
\[
\|\mathbf{w}_{t,e}^{(k)}-\mathbf{w}_t\|^2
\le
\eta^2 e \sum_{s=0}^{e-1}\|P^{(k)} g_{t,s}^{(k)}\|^2.
\]
Taking expectations and using Assumption (S4),
\[
\mathbb E\!\left[\|\mathbf{w}_{t,e}^{(k)}-\mathbf{w}_t\|^2\right]
\le
\eta^2 e^2 G^2.
\]
Therefore,
\begin{align*}
&\mathbb E\!\left[\|B_t\|^2\right]
\le
\frac{L^2|\mathcal{K}|}{E}\sum_{e=0}^{E-1}\sum_{k\in\mathcal K}  \eta^2 e^2 G^2 
= \frac{L^2|\mathcal{K}|^2}{E} \eta^2 G^2 \sum_{e=0}^{E-1}e^2
\\ & \le \frac{L^2|\mathcal{K}|^2}{E} \eta^2 G^2 \sum_{e=0}^{E-1}(E-1)^2=
L^2|\mathcal{K}|^2\eta^2G^2(E-1)^2.
\end{align*}
Since $\mathcal{K}$ is fixed in the static regime, $|\mathcal{K}|^2$ is a constant. Renaming
constants, we obtain
\[
\mathbb{E}\!\left[\|B_t\|^2\right]
\leq
C_B L^2\eta^2G^2(E-1)^2,
\]
for a constant $C_B>0$ independent of $t,T,\eta$, and $E$.

\paragraph{\textbf{Bound on $C_t$}.}
By Assumption (S3), we have
\[
\|C_t\|^2
=
\left\|
\sum_{k\in\mathcal{K}}
R^{(k)}\nabla F^{(k)}(\mathbf{w}_t)
-
Q\nabla F(\mathbf{w}_t)
\right\|^2
\leq
\zeta^2 .
\]
Therefore,
\[
\mathbb{E}\!\left[\|C_t\|^2\right]
\leq
\zeta^2 .
\]
Combining the three bounds proves the claim.
\end{proof}

\begin{theorem}[Convergence in the static case]
\label{thm:static_convergence}
Suppose Assumptions~(S1)--(S4) hold, and let
\(\{\mathbf w_t\}_{t=0}^{T-1}\) be the sequence generated by the static training
procedure above. Let
\[
    \bar\eta := \eta E
\]
and assume \(\bar\eta \le 1/(8L)\). Then there exist constants
\(C_1,C_2,C_3>0\), independent of \(T,\eta\), and \(E\), such that
\[
\frac1T
\sum_{t=0}^{T-1}
\mathbb E
\left[
\|\nabla F(\mathbf w_t)\|^2
\right]
\le
\frac{2(F(\mathbf w_0)-F_{\inf})}{\bar\eta T}
+
C_1 L\bar\eta\frac{\sigma^2}{E}
+
C_2 L^2\eta^2(E-1)^2G^2
+
C_3\zeta^2,
\]
where \(F_{\inf}\) is a lower bound of \(F\).

Equivalently, since \(\bar\eta=\eta E\),
\[
\frac1T
\sum_{t=0}^{T-1}
\mathbb E
\left[
\|\nabla F(\mathbf w_t)\|^2
\right]
\le
\frac{2(F(\mathbf w_0)-F_{\inf})}{\eta E T}
+
C_1 L\eta\sigma^2
+
C_2 L^2\eta^2(E-1)^2G^2
+
C_3\zeta^2.
\]
In particular, if \(E\) is fixed and \(\eta=\Theta((E\sqrt T)^{-1})\), then
\[
\frac1T
\sum_{t=0}^{T-1}
\mathbb E
\left[
\|\nabla F(\mathbf w_t)\|^2
\right]
=
O(T^{-1/2}+\zeta^2).
\]
In the exact-alignment case \(\zeta=0\), the standard nonconvex
\(O(T^{-1/2})\) stationarity rate is recovered.
\end{theorem}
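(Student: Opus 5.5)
The plan is a standard one-step descent argument on the effective global objective $F$, with Lemma~\ref{lem:static_delta} absorbing all the non-standard features (masking, module-wise weights, local drift, misalignment). By the unrolled update, $\mathbf w_{t+1}=\mathbf w_t-\bar\eta\,\bar g_t$ with $\bar g_t=\nabla F(\mathbf w_t)+\delta_t$. By $L$-smoothness of $F$ from (S1),
\[
F(\mathbf w_{t+1})\le F(\mathbf w_t)-\bar\eta\langle\nabla F(\mathbf w_t),\bar g_t\rangle+\tfrac{L\bar\eta^2}{2}\|\bar g_t\|^2 .
\]
Substituting $\bar g_t=\nabla F(\mathbf w_t)+\delta_t$ gives an inner product term and a squared-norm term, which are handled separately below.

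For the inner product, I will use $-\langle a,a+\delta\rangle\le-\tfrac12\|a\|^2+\tfrac12\|\delta\|^2$. For the squared norm, I will use $\|a+\delta\|^2\le 2\|a\|^2+2\|\delta\|^2$. Together these give
\[
F(\mathbf w_{t+1})\le F(\mathbf w_t)-\bar\eta\big(\tfrac12-L\bar\eta\big)\|\nabla F(\mathbf w_t)\|^2+\bar\eta\big(\tfrac12+L\bar\eta\big)\|\delta_t\|^2 .
\]
With $\bar\eta\le 1/(8L)$ the gradient coefficient is at least $\tfrac{3}{8}\bar\eta$ and the $\delta_t$ coefficient is at most $\bar\eta$. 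Taking expectations, telescoping over $t=0,\dots,T-1$, using $F\ge F_{\inf}$ and dividing by $\bar\eta T$ would then give
\[
\frac1T\sum_t\mathbb E\|\nabla F(\mathbf w_t)\|^2\le\frac{c(F(\mathbf w_0)-F_{\inf})}{\bar\eta T}+c'\max_t\mathbb E\|\delta_t\|^2 .
\]

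The main obstacle is matching the stated form exactly. Plugging Lemma~\ref{lem:static_delta} in directly yields a term $C\sigma^2/E$ \emph{without} the factor $L\bar\eta$. The statement has $C_1L\bar\eta\sigma^2/E$, which is the sharper, standard dependence, and the leading constant $2$ also requires care. To obtain this, I will not bound $\|\delta_t\|^2$ wholesale in the inner-product term. Instead I will split $\delta_t=A_t+(B_t+C_t)$ as in the lemma's proof and condition on $\mathcal F_{t,0}$, i.e.\ on $\mathbf w_t$. The cross term $\langle\nabla F(\mathbf w_t),A_t\rangle$ then has zero conditional expectation, by the martingale property \eqref{eq:mte} and the tower rule, so the noise enters only through $\tfrac{L\bar\eta^2}{2}\mathbb E\|\bar g_t\|^2$. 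That contribution is bounded by $L\bar\eta^2\,C_R\sigma^2/E$, which after dividing by $\bar\eta$ gives $C_1L\bar\eta\sigma^2/E$.

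The drift term $B_t$ and the misalignment term $C_t$ would be handled by Young's inequality in the inner product, using the lemma's bounds $C_BL^2\eta^2G^2(E-1)^2$ and $\zeta^2$. If the sharper noise handling proves delicate, a fallback is to first prove the bound with $\sigma^2/E$ and simply note it. I would aim for the refined version, however, since it is what makes the rate work. The stated leading constant $2$ is obtained by tuning the Young coefficients, using slack from $\bar\eta\le 1/(8L)$.

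Finally, for the rate I set $\eta=\Theta((E\sqrt T)^{-1})$, so $\bar\eta=\Theta(T^{-1/2})$. Then the first term is $O(T^{-1/2})$, the noise term $L\eta\sigma^2$ is $O(T^{-1/2})$, and the drift term $L^2\eta^2(E-1)^2G^2$ is $O(T^{-1})$ for fixed $E$. Only $C_3\zeta^2$ remains, which vanishes when $\zeta=0$.
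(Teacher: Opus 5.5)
Your proposal is correct and, in its refined form (isolating $A_t$, killing $\mathbb E\langle\nabla F(\mathbf w_t),A_t\rangle$ via the conditional zero-mean property and the tower rule, applying Young's inequality to $B_t+C_t$, and using $\bar\eta\le 1/(8L)$ to secure a $-\tfrac{\bar\eta}{2}$ coefficient before telescoping), it is essentially the paper's proof; the paper takes the Young weight $\tfrac14$ and bounds $\|\nabla F(\mathbf w_t)+A_t+B_t+C_t\|^2$ by four times the sum of squares, so that $2L\bar\eta\le\tfrac14$ yields the leading constant $2$. Keep that refined route rather than the wholesale bound on $\|\delta_t\|^2$ or the $\sigma^2/E$ fallback: those lose the $L\bar\eta$ factor on the noise and leave a non-vanishing $\sigma^2/E$ term for fixed $E$, so the $O(T^{-1/2})$ rate would not follow.
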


\begin{proof}
By the module-wise aggregation rule introduced above, the global update can be written as
\[
    \mathbf w_{t+1}
    =
    \mathbf w_t
    -
    \eta
    \sum_{e=0}^{E-1}
    \sum_{k\in\mathcal K}
    R^{(k)}g^{(k)}_{t,e}.
\]
Define, as in Lemma~\ref{lem:static_delta},
\[
    \bar g_t
    :=
    \frac1E
    \sum_{e=0}^{E-1}
    \sum_{k\in\mathcal K}
    R^{(k)}g^{(k)}_{t,e}.
\]
Then
\[
    \mathbf w_{t+1}
    =
    \mathbf w_t-\bar\eta \bar g_t,
    \qquad
    \bar\eta=\eta E.
\]

Using the decomposition from Lemma~\ref{lem:static_delta}, we can write
\[
    \bar g_t
    =
    \nabla F(\mathbf w_t)
    +
    A_t+B_t+C_t.
\]
Therefore,
\[
    \mathbf w_{t+1}
    =
    \mathbf w_t
    -
    \bar\eta
    \bigl(
    \nabla F(\mathbf w_t)+A_t+B_t+C_t
    \bigr).
\]

Let
\[
    G_t := \nabla F(\mathbf w_t).
\]
By \(L\)-smoothness of \(F\), the descent lemma gives
\[
\begin{aligned}
F(\mathbf w_{t+1})
&\le
F(\mathbf w_t)
+
\left\langle
G_t,\mathbf w_{t+1}-\mathbf w_t
\right\rangle
+
\frac{L}{2}
\|\mathbf w_{t+1}-\mathbf w_t\|^2 \\
&=
F(\mathbf w_t)
-
\bar\eta
\left\langle
G_t,
G_t+A_t+B_t+C_t
\right\rangle
+
\frac{L\bar\eta^2}{2}
\|G_t+A_t+B_t+C_t\|^2 .
\end{aligned}
\]

We now take expectations. First, we have
\[
    \mathbb E\left[\langle G_t,A_t\rangle\right]=0,
\]
because \(G_t=\nabla F(\mathbf w_t)\) is fixed at the beginning of round \(t\), while
\(A_t\) is a sum of conditionally zero-mean minibatch-noise terms, as shown in
Lemma~\ref{lem:static_delta}. Hence,
\[
\begin{aligned}
\mathbb E[F(\mathbf w_{t+1})]
&\le
\mathbb E[F(\mathbf w_t)]
-
\bar\eta
\mathbb E\|G_t\|^2
-
\bar\eta
\mathbb E
\left[
\left\langle
G_t,B_t+C_t
\right\rangle
\right] \\
&\quad
+
\frac{L\bar\eta^2}{2}
\mathbb E
\|G_t+A_t+B_t+C_t\|^2 .
\end{aligned}
\]

We bound the remaining inner product using Cauchy--Schwarz and Young's inequality:
\[
    -
    \left\langle
    G_t,B_t+C_t
    \right\rangle
    \le
    \frac14\|G_t\|^2
    +
    \|B_t+C_t\|^2.
\]
Moreover,
\[
    \|B_t+C_t\|^2
    \le
    2\|B_t\|^2+2\|C_t\|^2.
\]
and
\[
    \|G_t+A_t+B_t+C_t\|^2
    \le
    4\left(
    \|G_t\|^2+\|A_t\|^2+\|B_t\|^2+\|C_t\|^2
    \right).
\]
Substituting these inequalities gives
\[
\begin{aligned}
\mathbb E[F(\mathbf w_{t+1})]
&\le
\mathbb E[F(\mathbf w_t)]
-
\bar\eta
\mathbb E\|G_t\|^2
+
\frac{\bar\eta}{4}
\mathbb E\|G_t\|^2 \\
&\quad
+
2\bar\eta
\mathbb E\|B_t\|^2
+
2\bar\eta
\mathbb E\|C_t\|^2 \\
&\quad
+
2L\bar\eta^2
\mathbb E\|G_t\|^2
+
2L\bar\eta^2
\mathbb E\|A_t\|^2
+
2L\bar\eta^2
\mathbb E\|B_t\|^2
+
2L\bar\eta^2
\mathbb E\|C_t\|^2 .
\end{aligned}
\]

Since \(\bar\eta\le 1/(8L)\), we have \(2L\bar\eta\le 1/4\). Therefore,
\[
-\bar\eta+\frac{\bar\eta}{4}+2L\bar\eta^2
\le
-\frac{\bar\eta}{2}.
\]
Thus,
\[
\begin{aligned}
\mathbb E[F(\mathbf w_{t+1})]
&\le
\mathbb E[F(\mathbf w_t)]
-
\frac{\bar\eta}{2}
\mathbb E\|G_t\|^2 \\
&\quad
+
2L\bar\eta^2
\mathbb E\|A_t\|^2
+
(2\bar\eta+2L\bar\eta^2)
\mathbb E\|B_t\|^2
+
(2\bar\eta+2L\bar\eta^2)
\mathbb E\|C_t\|^2 .
\end{aligned}
\]

From Lemma~\ref{lem:static_delta} and its proof, the three components satisfy
\[
    \mathbb E\|A_t\|^2
    \le
    C_R\frac{\sigma^2}{E},
\]
\[
    \mathbb E\|B_t\|^2
    \le
    C_B L^2\eta^2(E-1)^2G^2,
\]
\[
    \mathbb E\|C_t\|^2
    \le
    \zeta^2.
\]
Since \(\bar\eta\le 1/(8L)\), the factors
\(2\bar\eta+2L\bar\eta^2\) are bounded by a constant multiple of \(\bar\eta\). Hence, for
suitable constants \(C'_1,C'_2,C'_3>0\),
\[
\mathbb E[F(\mathbf w_{t+1})]
\le
\mathbb E[F(\mathbf w_t)]
-
\frac{\bar\eta}{2}
\mathbb E\|\nabla F(\mathbf w_t)\|^2
+
C'_1 L\bar\eta^2\frac{\sigma^2}{E}
+
C'_2 \bar\eta L^2\eta^2(E-1)^2G^2
+
C'_3 \bar\eta\zeta^2 .
\]

Rearranging terms gives
\[
\frac{\bar\eta}{2}
\mathbb E\|\nabla F(\mathbf w_t)\|^2
\le
\mathbb E[F(\mathbf w_t)]
-
\mathbb E[F(\mathbf w_{t+1})]
+
C'_1 L\bar\eta^2\frac{\sigma^2}{E}
+
C'_2 \bar\eta L^2\eta^2(E-1)^2G^2
+
C'_3 \bar\eta\zeta^2 .
\]

Summing over \(t=0,\ldots,T-1\), we obtain
\[
\begin{aligned}
\frac{\bar\eta}{2}
\sum_{t=0}^{T-1}
\mathbb E\|\nabla F(\mathbf w_t)\|^2
&\le
\sum_{t=0}^{T-1}
\left(
\mathbb E[F(\mathbf w_t)]
-
\mathbb E[F(\mathbf w_{t+1})]
\right) \\
&\quad
+
T C'_1 L\bar\eta^2\frac{\sigma^2}{E}
+
T C'_2 \bar\eta L^2\eta^2(E-1)^2G^2
+
T C'_3 \bar\eta\zeta^2 .
\end{aligned}
\]
The first sum is telescopic. Since \(F(\mathbf w_0)\) is deterministic and
\(F(\mathbf w_T)\ge F_{\inf}\), we get
\[
\frac{\bar\eta}{2}
\sum_{t=0}^{T-1}
\mathbb E\|\nabla F(\mathbf w_t)\|^2
\le
F(\mathbf w_0)-F_{\inf}
+
T C'_1 L\bar\eta^2\frac{\sigma^2}{E}
+
T C'_2 \bar\eta L^2\eta^2(E-1)^2G^2
+
T C'_3 \bar\eta\zeta^2 .
\]
Dividing by \(\bar\eta T/2\) yields
\[
\frac1T
\sum_{t=0}^{T-1}
\mathbb E\|\nabla F(\mathbf w_t)\|^2
\le
\frac{2(F(\mathbf w_0)-F_{\inf})}{\bar\eta T}
+
2C'_1 L\bar\eta\frac{\sigma^2}{E}
+
2C'_2 L^2\eta^2(E-1)^2G^2
+
2C'_3\zeta^2.
\]
Renaming constants proves the first claim.

Using \(\bar\eta=\eta E\), this can equivalently be written as
\[
\frac1T
\sum_{t=0}^{T-1}
\mathbb E\|\nabla F(\mathbf w_t)\|^2
\le
\frac{2(F(\mathbf w_0)-F_{\inf})}{\eta E T}
+
C_1 L\eta\sigma^2
+
C_2 L^2\eta^2(E-1)^2G^2
+
C_3\zeta^2.
\]
If \(E\) is fixed and \(\eta=\Theta((E\sqrt T)^{-1})\), then the first two terms are
\(O(T^{-1/2})\), the third term is \(O(T^{-1})\), and the fourth term remains
proportional to \(\zeta^2\). Therefore,
\[
\frac1T
\sum_{t=0}^{T-1}
\mathbb E\|\nabla F(\mathbf w_t)\|^2
=
O(T^{-1/2}+\zeta^2).
\]
In particular, when \(\zeta=0\), the standard \(O(T^{-1/2})\) nonconvex stationarity rate
is recovered.
\end{proof}



\end{document}